\documentclass{article}

\PassOptionsToPackage{numbers, compress}{natbib}

 \usepackage[preprint]{neurips_2026}
\usepackage{wrapfig}
\usepackage{graphicx}
\usepackage{subcaption}
\usepackage{booktabs} 
\usepackage{amsmath}
\usepackage{amssymb}
\usepackage{mathtools}
\usepackage{amsthm}
\usepackage{tikz}
\usepackage{color}
\usepackage{tabularray}
\usepackage{caption}
\usepackage{bbm}
\usepackage{filecontents}
\usepackage{pifont}
\usepackage{wrapfig}
\usepackage{float}
\usepackage{multirow}
\usepackage{tabularx}
\usepackage{makecell}
\usepackage{balance}
\usepackage{tabu}
\usepackage[flushleft]{threeparttable}

\newcommand{\hmark}{{\ding{51}}\textsuperscript{\kern-0.52em\scriptsize\ding{55}}}
\newcommand{\system}{\textsc{praetorian}}

\theoremstyle{plain}
\newtheorem{theorem}{Theorem}[section]

\theoremstyle{definition}

\newtheorem{assumption}[theorem]{Assumption}
\theoremstyle{remark}

\usepackage[textsize=tiny]{todonotes}
\usepackage[utf8]{inputenc} 
\usepackage[T1]{fontenc}    
\usepackage{hyperref}       
\usepackage[capitalize,noabbrev]{cleveref}
\usepackage{url}            
\usepackage{amsfonts}       
\usepackage{nicefrac}       
\usepackage{microtype}      
\usepackage{xcolor}         
\usepackage{colortbl}
\definecolor{softblue}{RGB}{225,238,248}
\title{Trapping Attacker in Dilemma: Examining Internal Correlations and External Influences of Trigger for Defending GNN Backdoors}

\author{%
  Fan Yang \\
  The Chinese University of Hong Kong\\
  \texttt{yf.020@ie.cuhk.edu.hk} 
  \And
    Binyan Xu \\
  The Chinese University of Hong Kong\\
  \texttt{binyxu@ie.cuhk.edu.hk} 
    \And
    Di Tang \thanks{Corresponding Author}\\
  Sun Yat-Sen University\\
  \texttt{tangd9@mail.sysu.edu.cn} 
    \And
   Kehuan Zhang \thanks{Corresponding Author} \\
  The Chinese University of Hong Kong\\
  \texttt{khzhang@ie.cuhk.edu.hk} \\
}

\begin{document}

\maketitle

\begin{abstract}

GNNs have become a standard tool for learning on relational data, yet they remain highly vulnerable to backdoor attacks. Prior defenses often depend on inspecting specific subgraph patterns or node features, and thus can be circumvented by adaptive attackers. We propose \system{}, a new defense that targets intrinsic requirements of effective GNN backdoors rather than surface-level cues. Our key observation is that flipping a victim node's prediction requires substantial influence on the victim: attackers tend to either inject many trigger nodes or rely on a small set of highly influential ones. Building on this observation, \system{} (i) analyzes internal correlations within potential trigger subgraphs to detect abnormally large injected structures, and (ii) quantifies external node influence to identify triggers with disproportionate impact. Across our evaluations, \system{} reduces the average attack success rate (ASR) to 0.55\% with only a 0.62\% drop in clean accuracy (CA), whereas state-of-the-art defenses still yield an average ASR of $\geq$20\% and a CA drop of $\geq$3\% under the same conditions. Moreover, \system{} remains effective against a range of adaptive attacks, forcing adversaries to either inject many trigger nodes to achieve high ASR ($\geq 80\%$), which incurs a $>$10\% CA drop, or preserve CA at the cost of limiting ASR to 18.1\%. Overall, \system{} constrains attackers to an unfavorable trade-off between efficacy and detectability.

\end{abstract}
\section{Introduction}
\label{sec:introduction}
\vspace{-5pt}

Graph Neural Networks (GNNs) have become a standard tool for learning on relational data, delivering strong performance on node classification~\cite{gcn,gat,graphsage}, link prediction~\cite{linkprediction2018}, and graph classification~\cite{DGCGN,gin}, and enabling deployments in high-impact applications such as social networks~\cite{fan2019graphneuralnetworkssocial}, recommendation~\cite{recommendatationgnn}, and biological analysis~\cite{biologicalgnn,moleculargnn}. However, recent work shows that GNNs are vulnerable to \emph{backdoor attacks}~\cite{DPGBA,UGBA,SBA,GTA}, where an adversary attaches a trigger subgraph to victim nodes to induce targeted misclassification, raising serious concerns in safety-critical scenarios (e.g., healthcare)~\cite{gnn-healthcare,disease-diagnosis,finance-gnn,healthycate-gnn}.

\textbf{Limitations of existing defenses.}
To mitigate this threat, several defenses have been suggested. Prior defenses largely fall into two categories.
\emph{Robust learning} methods reduce sensitivity to malicious perturbations via adversarial training~\cite{gosch2023adversarialtraininggraphneural,li2022spectraladversarialtrainingrobust} or noise injection~\cite{RandomSmooth,qian2023robusttraininggraphneural,dai2021nrgnnlearninglabelnoiseresistant}, but they incur substantial overhead~\cite{Adversarial-Attack-and-Defense} and often degrade clean accuracy~\cite{GNNGuard,RobustGCN}; moreover, they can be ineffective against complex or adaptive triggers~\cite{homophilynecessitygraphneural}.
\emph{Trigger identification} methods attempt to flag suspicious nodes.
For example, DPGBA~\cite{DPGBA} treats trigger nodes (e.g., in GTA~\cite{GTA}) as outliers and isolates them with outlier detection.
However, such defenses typically rely on non-fundamental trigger signatures and can be bypassed by \emph{in-distribution} triggers crafted via adversarial learning, as demonstrated by DPGBA itself.
A different method, RIGBD~\cite{RIGBD}, drops edges to find nodes with large prediction impact, but it often fails to generalize across trigger designs and attack strengths.
Overall, existing defenses remain brittle under adaptive attacks.

\textbf{Our method:}
Rather than chasing trigger-specific signatures, we propose \system{}, a defense built on a \emph{necessary condition} for effective GNN backdoors.
To reliably flip a victim’s prediction, an attacker must \emph{inject influence} into the victim’s local computation---i.e., the trigger must contribute enough message-passing signal to override benign evidence.
This requirement is inherent to GNN inference and broadly applies across different trigger encodings.

\begin{wrapfigure}{r}{0.52\linewidth}
    \centering
    \vspace{-10pt}
    \includegraphics[width=1\linewidth]{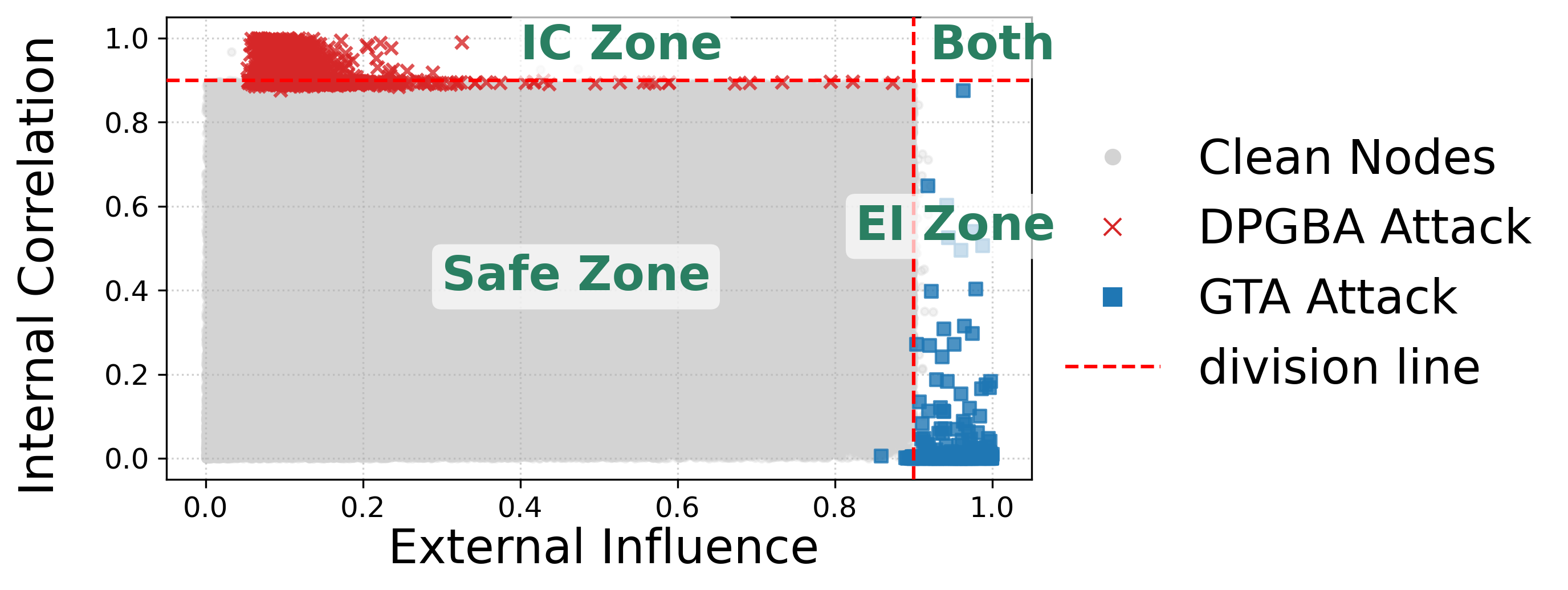}
    \vspace{-18pt}
    \caption{Attackers trapped in dilemma: detected by either high internal correlation (DPGBA) or external influence (GTA).}
    \vspace{-10pt}
    \label{fig:combined_histograms}
\end{wrapfigure}

Crucially, our theory (Appendix~\ref{sec:theoretical_framework}) makes this influence requirement \emph{operational}. We prove a two-view decomposition showing that the backdoor-induced influence can be characterized by two complementary components: (i) \emph{synergistic influence}, captured by \emph{internal correlation}, and (ii) \emph{per-node influence}, captured by \emph{external influence}.
This decomposition implies three choices for attackers: (1) spreading influence across many low-impact trigger nodes necessarily amplifies the intra-trigger component (hence strong internal correlation), or (2) concentrating influence on a few nodes necessarily amplifies per-node influence (hence strong external influence), or (3) a strategic
combination of both.
Therefore, any high-ASR backdoor satisfying the non-trivial influence premise must exhibit non-negligible influence in at least one of the two theoretical components, EI or IC ( Appendix~\ref{sec:theoretical_framework} provides more details ). This motivates PRAETORIAN to jointly audit practical proxies for external influence and internal correlation. A motivating example (Figure.~\ref{fig:combined_histograms}) illustrates this two-view on OGB-arxiv under DPGBA and GTA attacks, where trigger nodes clearly separate from benign nodes by internal-correlation and/or external-influence signals (more details in Appendix~\ref{apd:motivation}).

Specifically, we propose \system{}, which (i) fuses two views into a stable metric, (ii) filters trigger and victim nodes from ranked scores, and (iii) neutralizes backdoors while preserving clean accuracy (CA).
Across multiple datasets and attacks, \system{} detects victim nodes with 98.52\% precision / 94.57\% recall and trigger nodes with 98.64\% precision / 91.20\% recall, and reduces average ASR to 0.55\% with only a 0.62\% CA drop.
In contrast, prior defenses yield at least 35\% ASR with a 3\% CA drop under identical settings.
We further evaluate adaptive attackers and show that \system{} turns evasion into an inherent trade-off: achieving high ASR ($\geq 80\%$) requires injecting many trigger nodes and incurs $\geq 10\%$ CA degradation, while maintaining comparable CA limits ASR to at most 18.1\%.

\noindent\textbf{Contributions.} Our contributions are threefold: \\
\noindent\textbf{(1) A new detection paradigm for GNN backdoor defense.} We introduce a two-view framework that jointly examines internal correlations and external influences, providing a new perspective for identifying node-level backdoor triggers beyond surface-level structural or feature cues.

\vspace{-5pt}
\noindent \textbf{(2) New defense against GNN backdoors.} We introduce \system{}, a new defense based on our proposed new paradigm. \system{} not only effectively identifies trigger and victim nodes but also neutralizes backdoors while maintaining performance on clean data.

\vspace{-5pt} 
\noindent\textbf{(3) Superior Effectiveness and Robustness.} We evaluate \system{} on 4 datasets against 9 backdoor attacks, showcasing significant improvements compared to existing defenses. \system{} effectively forces attackers into a dilemma: either accept a low ASR or incur detectable structural distortions and substantial CA drops.

\vspace{-8pt}
\section{Background \& Related Work}
\vspace{-5pt}

\subsection{Graph Backdoor Attacks}
\vspace{-5pt}
As GNNs are increasingly deployed in safety- and security-critical settings, protecting them from adversarial manipulation is essential. A prominent threat is the \emph{backdoor attack}, where an adversary implants hidden malicious behavior during training: the compromised model behaves normally on clean inputs but produces attacker-chosen outputs when a specific \emph{trigger} is present~\cite{li2022backdoorlearningsurvey}. This differs from test-time adversarial attacks~\cite{adversarial-survey}, which perturb inputs at inference without modifying the model.

In graph learning, backdoors can target graph-, edge-, and node-level tasks. \textbf{1. Whole-graph attacks}~\cite{explainabilitybased,SBA,GTA} manipulate global structures or features so that any test graph containing a trigger subgraph is predicted as a target label. \textbf{2. Edge-level attacks}~\cite{link-backdoor,dyn-link} insert or alter edges to induce targeted link predictions by embedding a hidden edge-pattern trigger. \textbf{3. Node-level attacks} attach a trigger subgraph to selected victim nodes to misclassify them into a target label. Early work such as SBA~\cite{SBA} proposes universal subgraph triggers but can be less effective, while later attacks improve stealth and success through adaptive perturbations (GTA~\cite{GTA}), budgeted node selection and feature matching (UGBA~\cite{UGBA}), and adversarial learning to craft \emph{in-distribution} triggers (DPGBA~\cite{DPGBA}).

\textit{Node-level backdoors} are particularly concerning in real deployments, such as social networks, because localized trigger attachments can preserve global topology and keep distributions largely benign. \textit{Accordingly, our goal is to defend against node-level backdoors by detecting and neutralizing malicious trigger subgraphs while preserving clean performance.}

\vspace{-5pt}
\subsection{Difficulty in Graph Backdoor Defenses}
\vspace{-5pt}
Existing GNN backdoor defenses mainly rely on \textit{robust training} or \textit{poison node detection}.

\textit{Robust training.}
Robust training methods~\cite{RobustGCN,gosch2023adversarialtraininggraphneural,nt2019learninggraphneuralnetworks,survey-adversarial-learning-graph,robust-graph_learning,li2022spectraladversarialtrainingrobust} reduce sensitivity to malicious perturbations by simulating attack scenarios during optimization or injecting noise into the input graph. In practice, they often require generating many adversarial variants, frequently through iterative procedures, leading to substantial computational overhead; they can also degrade clean accuracy~\cite{GNNGuard} due to the robustness--accuracy trade-off.

\textit{Poison node detection.}
Detection-based defenses~\cite{DPGBA,RIGBD,UGBA} identify trigger nodes by searching for structural or feature anomalies. This is challenging on graphs because structural heterogeneity and local variation can lead to false positives on benign nodes or false negatives on subtle triggers. Moreover, detectors based on heuristic signatures or fixed anomaly patterns can be evaded by adaptive attackers that craft \emph{in-distribution} triggers mimicking benign neighborhoods~\cite{UGBA,DPGBA}.

Finally, several image-domain backdoor defenses have been adapted to graphs~\cite{RIGBD,abl}, but such transfer is non-trivial and often ineffective because relational dependencies and message passing create attack and defense behaviors that differ from those in images (Appendix~\ref{apd:challenges}).
\vspace{-8pt}
\section{Preliminary}
\vspace{-8pt}
\subsection{Notations}
\vspace{-5pt}
A summary of symbols is provided in Appendix~\ref{apd:symbol}.

\noindent\textbf{Attributed graph.}
We consider an attributed graph $\mathcal{G}=(\mathcal{V},\mathcal{E},\mathcal{X})$ with $|\mathcal{V}|=N$ nodes, edges $\mathcal{E}\subseteq \mathcal{V}\times\mathcal{V}$, and node features $\mathcal{X}=\{x_i\}_{i=1}^N$.
Let $A\in\mathbb{R}^{N\times N}$ be the adjacency matrix where $A_{ij}=1$ iff $(v_i,v_j)\in\mathcal{E}$ (else $0$).
We denote the neighbors of $v_i$ by $\mathcal{N}(i)$.

\noindent\textbf{Training graph and victim nodes.}
We are given a backdoored training graph $\mathcal{G}_T=(\mathcal{V}_T,\mathcal{E}_T,\mathcal{X}_T)$.
The training nodes are partitioned into a clean set $\mathcal{V}_C$ and a victim set $\mathcal{V}_B$ with $\mathcal{V}_C\cap\mathcal{V}_B=\varnothing$; nodes in $\mathcal{V}_C$ have clean labels $y_i$, while nodes in $\mathcal{V}_B$ are assigned the attacker-chosen target label $y_t$.
We call $|\mathcal{V}_B|$ the \emph{victim size (VS)}.
Nodes in $\mathcal{V}_T\setminus(\mathcal{V}_C\cup\mathcal{V}_B)$ serve as unlabeled neighbors during training but are not evaluated for classification.

\noindent\textbf{Triggers nodes.}
To mount a node-level backdoor, the attacker attaches to each $v_i\in\mathcal{V}_B$ a trigger subgraph $\mathcal{G}^{\mathrm{Tri}}_i=(\mathcal{V}^{\mathrm{Tri}}_i,\mathcal{E}^{\mathrm{Tri}}_i,\mathcal{X}^{\mathrm{Tri}}_i)$ via an attachment operator $a(v_i,\mathcal{G}^{\mathrm{Tri}}_i)$ (details in Appendix~\ref{apd:attach}).
A trigger may contain multiple connected components (or isolated nodes). We define the \emph{trigger size (TS)} as $|\mathcal{V}^{\mathrm{Tri}}_i|$.
We denote the set of \emph{poisoned nodes} as $\mathcal{V}_P=\mathcal{V}_B\cup\big(\cup_i \mathcal{V}^{\mathrm{Tri}}_i\big)$, which are the abnormal nodes our defense aims to identify.

\noindent\textbf{Inference graph.}
At test time, we evaluate on an unseen graph $\mathcal{G}_U=(\mathcal{V}_U,\mathcal{E}_U,\mathcal{X}_U)$ with $\mathcal{V}_U=\mathcal{V}_{UC}\cup\mathcal{V}_{UB}$, where $\mathcal{V}_{UC}$ are clean nodes and $\mathcal{V}_{UB}$ are victim nodes that would be misclassified when a trigger is attached.
We assume an inductive split: $\mathcal{V}_U\cap\mathcal{V}_T=\varnothing$.

\vspace{-5pt}
\subsection{Threat Model}
\vspace{-5pt}
\label{sec:threat_model}
\noindent\textbf{Attacker.}
We consider node-level backdoor attackers who poison the training graph by attaching trigger subgraphs to a subset of victim nodes $\mathcal{V}_B$ and assigning them an attacker-chosen target label $y_t$.
Specifically, for each $v_i\in\mathcal{V}_B$, the attacker attaches a trigger $\mathcal{G}^{\mathrm{Tri}}_i$ via an attachment operator $a(v_i,\mathcal{G}^{\mathrm{Tri}}_i)$ (Appendix~\ref{apd:attach}).
At inference, the attacker applies the same attachment operator $a(\cdot)$ to victim nodes in the unseen test graph $\mathcal{G}_U$.

\noindent\textbf{Defender's Knowledge.}
The defender observes the poisoned training graph $\mathcal{G}_T$ and trains a node classifier, \textbf{but does not know the victim set $\mathcal{V}_B$, the trigger nodes, or the target label $y_t$.}
At test time, the defender is evaluated on an unseen graph $\mathcal{G}_U$ that may contain triggered victims.

\noindent\textbf{Defender's Goals.}
Given a poisoned training graph $\mathcal{G}_T$, the defender aims to (i) identify the target label $y_t$, the victim set $\mathcal{V}_B$, and the trigger node set $\mathcal{V}^{\mathrm{Tri}}$, and (ii) neutralize the backdoor so that the resulting classifier $f$ achieves low ASR on an unseen backdoored graph $\mathcal{G}_U$ while preserving high clean accuracy (CA). \textit{In our pipeline, $y_t$ is inferred from the observed labels of detected victim nodes.}

Specifically, we learn $f$ to fit nodes in $\mathcal{V}_{UC}$ while suppressing target-label predictions on nodes in $\mathcal{V}_{UB}$:
$\min_{f}\sum_{v_i \in \mathcal{V}_{UC}}\ell(f(v_i),y_i)+\sum_{v_j \in \mathcal{V}_{UB}}\ell_{\mathrm{anti}}(f(v_j),y_t)$,
where $\ell$ is the standard classification loss and $\ell_{\mathrm{anti}}$ penalizes predicting the target label. We evaluate $f$ on $\mathcal{G}_U$ using CA and ASR.

\vspace{-5pt}
\section{Methodology}
\vspace{-5pt}

\begin{figure*}[h]
    \centering
    \includegraphics[width=\textwidth]{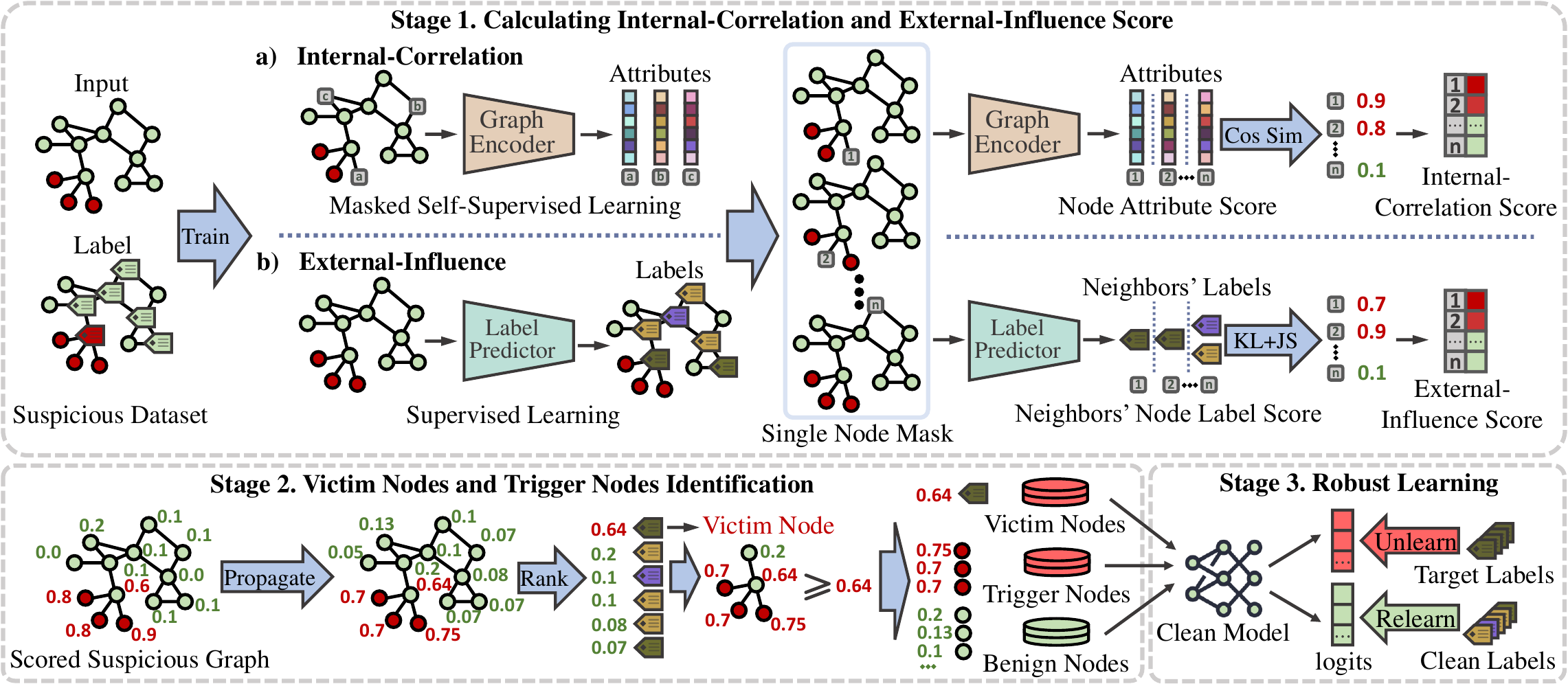}
    \vspace{-10pt}
    \caption{Overview of \system}
    \vspace{-6pt}
    \label{fig:overview}
\end{figure*}

Section~\ref{sec:introduction} motivates two complementary trigger signals: (i) trigger subgraphs exhibit unusually strong \emph{internal correlation}, and (ii) trigger nodes have outsized \emph{external influence}, as removing them induces larger prediction shifts in their neighborhoods than removing benign nodes.

Building on this, we propose \system{}, a defense framework with three objectives: (i) quantify both signals reliably, (ii) fuse them to localize victim and trigger nodes, and (iii) neutralize backdoor effects at test time while preserving clean accuracy.

Concretely, \system{} computes per-node internal-correlation and external-influence scores via masking-based probes, and aggregates them into a unified suspicion score. Using this score, we identify candidate victim/trigger nodes and then perform backdoor neutralization by (a) filtering detected trigger nodes and (b) suppressing target-label behavior on the remaining suspicious nodes.
Figure~\ref{fig:overview} summarizes the pipeline in three stages: \textit{Calculating Internal-Correlation and External-Influence Scores}, \textit{Victim and Trigger Node Identification}, and \textit{Robust Learning Based on Identified Nodes}. Next, we detail each one.

\vspace{-5pt}
\subsection{Calculating Two Scores}
\vspace{-5pt}

\noindent\textbf{Internal-correlation score.}
Given the poisoned training graph $\mathcal{G}_T=(\mathcal{V}_T,\mathcal{E}_T,\mathcal{X}_T)$, we quantify a node’s \emph{internal correlation} using a masked graph autoencoder~\cite{hou2022graphmae}. Intuitively, nodes that participate in highly unique (easily-memorized) trigger patterns are easier to reconstruct from their neighborhoods.

We sample a mask set $\widehat{\mathcal{V}}\subset\mathcal{V}_T$ and replace their features with a learnable mask token $x_M\in\mathbb{R}^d$: \(\widehat{x}_i = 
\begin{cases} 
x_i & v_i \notin \widehat{\mathcal{V}} \\ 
x_M & v_i \in \widehat{\mathcal{V}}
\end{cases}\).

An encoder--decoder autoencoder maps the partially observed graph
$\widehat{\mathcal{G}}_T=(\mathcal{V}_T,\mathcal{E}_T,\widehat{\mathcal{X}}_T)$
to representations $H$ and reconstructs features $\mathcal{X}'$:
\vspace{-5pt}
\begin{equation}
H=f_E(\widehat{\mathcal{G}}_T),\qquad \mathcal{X}'=f_D(H,\mathcal{E}_T).
\end{equation}
We train the autoencoder by minimizing the cosine reconstruction loss over masked nodes:
\begin{equation}
\mathcal{L}_{\mathrm{int}}
=\frac{1}{|\widehat{\mathcal{V}}|}
\sum_{v_i\in\widehat{\mathcal{V}}}
\Big(1-\cos(x_i,x_i')\Big)^{\gamma},\ \gamma>1.
\end{equation}
\vspace{-15pt}

\textit{Internal-Correlation Score Calculation:}
After training, we mask a candidate node $v_i$ and compute its reconstruction loss $\mathcal{L}_{\mathrm{int}}(i)$; we define the internal-correlation score as:
\vspace{-5pt}
\begin{equation}
    \mathcal{S}_{\mathrm{int}}(i)=\frac{1}{\mathcal{L}_{\mathrm{int}}(i)}
    \label{euq:sint}
\end{equation}
\textit{Higher $\mathcal{S}_{\mathrm{int}}(i)$ indicates stronger statistical correlation and a higher probability of the node being a trigger node.
}

\noindent\textbf{External-influence score.}
To quantify each node’s \emph{external influence}, we train a standard node classifier $f'$ on the poisoned training graph (without masking). Let
\vspace{-2pt}
\begin{equation}
    H^\prime = f^\prime_E(\mathcal{V}_T, \mathcal{E}_T, \mathcal{X}_T), \quad \mathcal{Y}^\prime = f^\prime_D(\mathcal{V}_T, \mathcal{E}_T, H^\prime)
\end{equation}
where $\mathcal{Y}'$ are the predicted logits; we optimize $f'$ with cross-entropy on the available labels.

After training, for each node $v_i$ we compute a counterfactual prediction by masking its features and re-running $f'$, obtaining logits $\overline{\mathcal{Y}}$.
We measure how much this intervention changes the predictions of its 1-hop neighbors:
\vspace{-5pt}
\begin{equation}
\mathcal{S}_{\mathrm{ext}}(i)
=
\frac{1}{|\mathcal{N}(i)|}
\sum_{j\in\mathcal{N}(i)}
\Big(
\mathrm{KL}\!\left(\mathcal{Y}'_j\,\|\,\overline{\mathcal{Y}}_j\right)
+
\mathrm{JS}\!\left(\mathcal{Y}'_j\,\|\,\overline{\mathcal{Y}}_j\right)
\Big),
\end{equation}
where $\mathrm{KL}$ Divergence~\cite{Joyce2011} and $\mathrm{JS}$ Divergence~\cite{MENENDEZ1997307} capture distributional shifts (KL is sensitive; JS is symmetric and bounded).
\textit{A higher value of $\mathcal{S}_{ext}$ indicates a greater impact on the neighbors' predictions, suggesting a higher probability of being a trigger node.}

\vspace{-5pt}
\subsection{Victim and Trigger Node Identification}
\vspace{-5pt}
After computing per-node deviation scores, we use a lightweight converter to read these scores as suspicious victim/trigger sets; the core detector remains the two-view scoring above.

\noindent\textbf{Score fusion via dual signals.}
Internal correlation and external influence are informative under different attack settings (Table~\ref{tab:intravsexternal}). We therefore combine them using an adaptively weighted sum, with the exact weights and an equal-weight ablation provided in Appendix~\ref{apd:ablation_details}.

\textbf{Valley cutoff and label grouping.}
We propagate the fused score on a modified graph so high-scoring triggers transfer evidence to attached training victims (Appendix~\ref{apd:deviation}). The propagated training-node scores are expected to form a bimodal distribution with a benign low-deviation population and a poison-related high-deviation population, consistent with Figures~\ref{fig:combined_histograms},~\ref{fig:combined_histograms_2}, and~\ref{fig:multi_target}. 
We place the cutoff at the valley only when such separation is detected; otherwise, \system{} abstains from localization and unlearning.
Sorting $\tilde{s}_{(1)}\ge\cdots\ge\tilde{s}_{(m)}$, we choose
$
k^\star=\arg\max_{1\le k<m}\left(\tilde{s}_{(k)}-\tilde{s}_{(k+1)}\right), \text{and}\ 
\mathcal{H}=\{v_{(1)},\ldots,v_{(k^\star)}\}.
$
This cutoff is label-free and yields a high-confidence victim seed set $\mathcal{H}$.
Labels are then used only to form supported seed groups,
$\mathcal{H}_c=\{v_i\in\mathcal{H}:y_i=c\}$; groups with
$|\mathcal{H}_c|\ge\tau$ are retained, with fixed $\tau=5$ as an anti-noise
support filter rather than the anomaly boundary. The retained seed groups are
converted into victim groups by extending a single supported group with label
consistency along the sorted score list, or by keeping multiple supported groups
separate. We then recover trigger candidates from each victim group's $P$-hop
neighborhoods and retain nodes with higher deviation scores. Appendix~\ref{apd:score_to_node_details}
gives the full details.

\vspace{-5pt}
\subsection{Robust Learning Based on Identified Nodes}
\vspace{-5pt}
Localization is effective but not perfect: some poisoned nodes may evade detection, and removing only detected triggers can be insufficient for stealthy \emph{in-distribution} triggers. We therefore train one robust classifier using the identified victims and triggers.

\textbf{Filtered training + unlearning.}
Let $\mathcal{C}_{\mathrm{sus}}$ be the retained suspicious label groups, with identified victims $\mathcal{V}^{I\!-\!Vic}_c$ and associated triggers $\mathcal{V}^{I\!-\!Tri}_c$ for each $c\in\mathcal{C}_{\mathrm{sus}}$.
Let $\mathcal{V}^{I\!-\!Vic}$ and $\mathcal{V}^{I\!-\!Tri}$ denote their unions over all groups.
We filter them from the training graph, $\mathcal{V}_F=\mathcal{V}_T\setminus\big(\mathcal{V}^{I\!-\!Vic}\cup \mathcal{V}^{I\!-\!Tri}\big)$, and optimize:
\vspace{-5pt}
\begin{equation}
\begin{aligned}
\mathcal{L}_{\mathrm{clean}}
&=\sum_{v_i\in\mathcal{V}_F}\ell\!\left(f(\mathcal{G}_F,v_i),\,y_i\right),&
\mathcal{L}_{\mathrm{unlearn}}
=
-\sum_{c\in\mathcal{C}_{\mathrm{sus}}}
\sum_{v_i\in \mathcal{V}^{I\!-\!Vic}_c\cup\mathcal{V}^{I\!-\!Tri}_c}
\ell\!\left(f(\mathcal{G}_T,v_i),\,c\right)
\end{aligned}
\end{equation}
where $\ell$ is cross-entropy and $\mathcal{L}=\mathcal{L}_{\mathrm{clean}}+\mathcal{L}_{\mathrm{unlearn}}$. This removes identified backdoor structures and suppresses each group's trigger-target association; the same objective applies whether the converter returns one group or several groups.

\vspace{-5pt}
\section{Evaluation}

\vspace{-5pt}
\subsection{Experimental settings.}
\vspace{-5pt}
\noindent\textbf{Configuration.}
\system{} uses a 2-layer GAT masked autoencoder with learning rate $1\times10^{-4}$, sensitivity parameter $\gamma=3$, and mask rate $0.1$. 
The final defended classifier is a 2-layer GCN trained with learning rate $0.01$. Runtime and memory analyses are provided in Appendix~\ref{apd:discussion}.

\noindent\textbf{Datasets.}
We evaluate \system{} on four node-classification benchmarks: Cora~\cite{cora}, PubMed~\cite{pubmed}, Flickr~\cite{flickr}, and OGB-arxiv~\cite{ogb-arxiv}. 
Appendix~\ref{apd:dataset}-Table~\ref{tab:dataset} shows dataset statistics.

\noindent\textbf{Attack methods.}
We consider four representative graph backdoor attacks: GTA~\cite{GTA}, SBA~\cite{SBA}, UGBA~\cite{UGBA}, and DPGBA~\cite{DPGBA}. 
We tune attack hyperparameters to obtain strong attack performance before applying defenses. 
Attack details are provided in Appendix~\ref{apd:attack_details}. 
We further evaluate five adaptive attacks in Section~\ref{sec:adaptive} and Appendix~\ref{apd:additional_security_analysis}.

\noindent\textbf{Defense methods.}
We compare \system{} with SP~\cite{UGBA}, OD~\cite{DPGBA}, RIGBD~\cite{RIGBD}, and GNNGuard~\cite{GNNGuard}. 
All methods are evaluated under the same training-time defense protocol; OD and SP are therefore applied only during training, although they can also be used at inference. 
Configurations and tuning protocols are given in Appendix~\ref{apd:defense_details}. 
We exclude graph-level defenses~\cite{graphprot,yang2024distributedbackdoorattacksfederated}, as they address poisoned graphs rather than node-level victims or triggers.

\noindent\textbf{Evaluation protocol.}
Following prior work~\cite{DPGBA, UGBA, RIGBD}, we split each dataset into disjoint training ($\mathcal{G}_{train}$, 80\%) and testing ($\mathcal{G}_{test}$, 20\%) graphs. 
The attacker poisons $\mathcal{G}_{train}$ by selecting victim nodes $\mathcal{V}_B$ and attaching triggers with default size 3, producing $\mathcal{G}_T$. 
The victim size $|\mathcal{V}_B|$ is set to 10, 40, 565, and 160 for Cora, PubMed, OGB-arxiv, and Flickr, respectively. 
For evaluation, we split $\mathcal{G}_{test}$ into two equal subsets: one is triggered to compute attack success rate (\textbf{ASR}), and the other remains clean to compute clean accuracy (\textbf{CA}).

\noindent\textbf{Evaluation metrics.}
We report Recall (R) and Precision (P) for our method in identifying victim and trigger nodes, alongside ASR and CA. Recall (R) represents the proportion of correctly identified victim or trigger nodes relative to all such nodes. Precision (P) indicates the percentage of correctly identified nodes among the candidates. Each experiment was conducted \textit{five times}, with average results reported. See Appendix~\ref{apd:asr_ca_calculations} for ASR and CA calculations. All values are percentages.

\vspace{-6pt}
\begin{table*}[h]
\caption{Backdoor defense results. Bold indicates the best result within each attack--dataset setting.}
\vspace{-5pt}
\setlength{\tabcolsep}{2.5pt}
\renewcommand{\arraystretch}{1}
\centering
\resizebox{\textwidth}{!}{
\begin{tabular}{c|c|c|cc|cc|cc|cc|cc|cc} 
\hline
\multirow{3}{*}{Attack} & \multirow{3}{*}{Dataset} & \multirow{2}{*}{Clean Graph} & \multicolumn{12}{c}{Defense} \\ 
\cline{4-15}
                        &                          &                              & \multicolumn{2}{c|}{No Defense} & \multicolumn{2}{c|}{GNNGuard} & \multicolumn{2}{c|}{OD} & \multicolumn{2}{c|}{SP} & \multicolumn{2}{c|}{RIGBD} & \multicolumn{2}{c}{PRAETORIAN} \\ 
\cline{3-15}
                        &                          & CA$\uparrow$ 
                        & ASR$\downarrow$ & CA$\uparrow$ 
                        & ASR$\downarrow$ & CA$\uparrow$ 
                        & ASR$\downarrow$ & CA$\uparrow$ 
                        & ASR$\downarrow$ & CA$\uparrow$ 
                        & ASR$\downarrow$ & CA$\uparrow$ 
                        & ASR$\downarrow$ & CA$\uparrow$ \\ 
\hline

\multirow{4}{*}{SBA}    
                        & Cora      & 83.19 & 49.78 & 83.11 & 32.09 & 78.74 & 45.07 & 83.33 & 19.29 & 82.96 & 22.84 & 83.11 & \textbf{0.09} & \textbf{84.30} \\
                        & PubMed    & 85.11 & 28.57 & 85.13 & 16.35 & 80.94 & 17.99 & \textbf{85.34} & 3.35 & 84.99 & 11.51 & 78.67 & \textbf{0.91} & 85.21 \\
                        & OGB-arxiv & 65.17 & 47.32 & 66.05 & 24.60 & 62.69 & 24.16 & 63.22 & 0.50 & 62.05 & \textbf{0.00} & 66.13 & 0.03 & \textbf{66.34} \\
                        & Flickr    & 45.29 & 90.02 & 45.22 & 10.68 & 41.38 & 88.09 & 41.84 & 89.39 & 40.67 & 70.63 & 40.17 & \textbf{0.00} & \textbf{45.83} \\
\hline

\multirow{4}{*}{GTA}    
                        & Cora      & 83.56 & 97.87 & 75.85 & 27.77 & \textbf{78.89} & 98.49 & 72.37 & 58.40 & 72.07 & 57.69 & 77.41 & \textbf{0.44} & 76.89 \\
                        & PubMed    & 85.23 & 94.45 & 82.09 & 8.84 & 80.35 & 94.85 & 78.92 & 74.38 & 81.47 & 77.85 & 79.71 & \textbf{0.27} & \textbf{83.18} \\
                        & OGB-arxiv & 65.07 & 93.68 & 62.66 & 1.10 & \textbf{66.36} & \textbf{0.00} & 38.90 & \textbf{0.00} & 38.47 & 94.64 & 62.13 & \textbf{0.00} & 65.68 \\
                        & Flickr    & 42.87 & 100.00 & 42.55 & 12.82 & 42.78 & \textbf{0.00} & 40.79 & \textbf{0.00} & 39.98 & 44.66 & 40.25 & \textbf{0.00} & \textbf{43.00} \\
\hline

\multirow{4}{*}{UGBA}   
                        & Cora      & 83.63 & 94.13 & 79.19 & 13.16 & 74.22 & 57.42 & 81.41 & 94.31 & 80.30 & 92.53 & 79.56 & \textbf{0.89} & \textbf{81.93} \\
                        & PubMed    & 85.21 & 91.78 & 82.29 & 8.26 & 77.48 & 52.33 & 83.51 & 91.77 & 82.41 & 95.48 & 78.06 & \textbf{3.67} & \textbf{84.47} \\
                        & OGB-arxiv & 65.08 & 99.03 & \textbf{65.97} & 97.71 & 64.79 & 1.72 & 64.63 & 99.37 & 63.43 & \textbf{0.00} & 65.73 & 0.05 & 65.91 \\
                        & Flickr    & 44.62 & 99.96 & 42.67 & 97.83 & \textbf{44.57} & \textbf{0.00} & 42.53 & 99.90 & 42.32 & \textbf{0.00} & 41.54 & \textbf{0.00} & 42.92 \\
\hline

\multirow{4}{*}{DPGBA}  
                        & Cora      & 83.70 & 94.35 & 80.67 & 18.09 & 78.90 & 94.17 & 80.67 & 91.74 & 81.04 & 75.48 & \textbf{81.85} & \textbf{0.17} & \textbf{81.85} \\
                        & PubMed    & 85.20 & 95.24 & 83.36 & 12.87 & 81.46 & 92.79 & 83.11 & 93.17 & 82.96 & 95.88 & 78.64 & \textbf{2.24} & \textbf{84.06} \\
                        & OGB-arxiv & 64.86 & 92.26 & 65.54 & 88.52 & 64.04 & 93.97 & 63.36 & 90.86 & 64.77 & 61.48 & 65.52 & \textbf{0.00} & \textbf{66.14} \\
                        & Flickr    & 43.45 & 91.70 & 43.84 & 90.14 & \textbf{44.33} & 90.06 & 43.18 & 90.93 & 44.14 & 85.66 & 43.77 & \textbf{0.00} & 43.62 \\
\hline
\hline
\rowcolor{softblue}
Average                 &           & 69.45 & 85.01 & 67.89 & 35.05 & 66.37 & 53.19 & 65.44 & 62.33 & 65.25 & 55.40 & 66.39 & \textbf{0.55} & \textbf{68.83} \\
\hline
\end{tabular}
}
\vspace{-10pt}
\label{tab:results}
\end{table*}
\subsection{Defense Performance} 
\vspace{-5pt} 
\label{sec:defense_performance}

We evaluate \system{} against baseline defenses across four datasets. Table~\ref{tab:results} summarizes the ASR and CA for each method. The results demonstrate the superior effectiveness of \system{}: \textbf{(1) Effectiveness:} \system{} consistently achieves the lowest ASR across all datasets and attacks, approaching 0.55\% on average, significantly outperforming all baselines.  \textbf{(2) Clean Accuracy Preservation:} \system{} maintains a CA comparable to the clean GCN (with a negligible average drop of $0.62\%$). This suggests that PRAETORIAN can suppress backdoor behavior while largely preserving utility on clean nodes.

Compared to baselines like GNNGuard, OD, SP, and RIGBD, \system{}'s superiority stems from its two-stage approach: it accurately identifies both victim and trigger nodes (unlike RIGBD, which only finds victims, or OD/SP, which rely solely on distribution), and leverages them for robust fine-tuning rather than simple isolation. This allows \system{} to effectively decouple adversarial signals from normal patterns. We provide a detailed comparative analysis explaining the performance gaps between \system{} and the baselines in Appendix~\ref{app:detailed_analysis}.

\vspace{-5pt}
\subsection{Victim and Trigger Nodes Detection}
\vspace{-5pt}
\label{sec:node_level}

\begin{wraptable}{r}{0.5\linewidth}
\vspace{-10pt}
\centering
\scriptsize
\setlength{\tabcolsep}{2.5pt}
\renewcommand{\arraystretch}{0.82}
\caption{Victim and trigger node detection.}
\label{tab:node_level}
\vspace{-6pt}

\resizebox{\linewidth}{!}{
\begin{tabular}{c|c|cc|cc} 
\hline
\multirow{3}{*}{Attack} & \multirow{3}{*}{Dataset} & \multicolumn{4}{c}{Victim and Trigger Node Detection} \\ 
\cline{3-6}
                        &                          & \multicolumn{2}{c|}{Victim Nodes} & \multicolumn{2}{c}{Trigger Nodes}  \\
                        &                          & P~$\uparrow$ & R~$\uparrow$ & P~$\uparrow$ & R~$\uparrow$ \\
\hline
\multirow{4}{*}{SBA}    
                        & Cora      & 100.00 & 99.91 & 100.00 & 83.72 \\
                        & PubMed    & 100.00 & 91.50 & 97.60  & 70.33 \\
                        & OGB-arxiv & 100.00 & 99.91 & 100.00 & 83.27 \\
                        & Flickr    & 96.97  & 100.00 & 96.97 & 100.00 \\
\hline
\multirow{4}{*}{GTA}    
                        & Cora      & 98.00  & 86.00 & 94.67  & 86.00 \\
                        & PubMed    & 96.79  & 75.55 & 96.68  & 75.55 \\
                        & OGB-arxiv & 100.00 & 88.85 & 100.00 & 88.85 \\
                        & Flickr    & 99.37  & 100.00 & 99.37  & 100.00 \\
\hline
\multirow{4}{*}{UGBA}   
                        & Cora      & 98.18  & 100.00 & 93.33  & 100.00 \\
                        & PubMed    & 100.00 & 94.50  & 100.00 & 94.50 \\
                        & OGB-arxiv & 100.00 & 90.97  & 100.00 & 90.97 \\
                        & Flickr    & 100.00 & 99.38  & 100.00 & 99.38 \\
\hline
\multirow{4}{*}{DPGBA}  
                        & Cora      & 91.21  & 100.00 & 100.00 & 100.00 \\
                        & PubMed    & 97.10  & 100.00 & 100.00 & 100.00 \\
                        & OGB-arxiv & 100.00 & 86.55  & 99.84  & 86.55 \\
                        & Flickr    & 98.76  & 100.00 & 99.79  & 100.00 \\
\hline
\hline
\rowcolor{softblue}
\multicolumn{2}{c|}{Average} & 98.52 & 94.57 & 98.64 & 91.20 \\
\hline
\end{tabular}
}
\vspace{-15pt}
\end{wraptable}
Table~\ref{tab:node_level} shows the precision and recall of \system{} in detecting victim and trigger nodes.

From the table, we observe that \system{} consistently achieves high precision, exceeding 90\% across all datasets and attack methods, with an average precision of 98\%. The high precision indicates that trigger nodes often exhibit clear internal-correlation and/or external-influence anomalies, which \system{} captures effectively

On the other hand, \system{} achieved an average recall of 91\%. Notably, for trigger nodes generated by SBA and GTA on the PubMed dataset, the recall falls below 80\%. This lower recall mainly arises because SBA/GTA on PubMed tend to produce \emph{disconnected} triggers, where only a \textit{single} trigger node interacts with the victim, which \emph{weakens} the internal-correlation signal. In addition, in some cases the victim’s original label \emph{matches the attack’s target label}, so masking such a trigger causes only \emph{modest} shifts in the victim’s prediction distribution, thereby \emph{weakening} the external-influence signal. Together, these factors \emph{reduce detection sensitivity}, leading to lower recall in these cases. In Section~\ref{sec:adaptive}, we further examine this case and evaluate \system{}’s robustness to one-node clean-label attacks.

Importantly, even without perfectly identifying all trigger and victim nodes, \system{} remains highly effective in reducing the ASR of attacks, as shown by the results of SBA and GTA on PubMed dataset (see Table~\ref{tab:results}).

\vspace{-5pt}
\subsection{Ablation Study}
\vspace{-5pt}
\label{sec:ablation_study}

We ablate \system{}’s core components and robust learning module in the main text. Additional results (i.e., static 1:1 weighting and Top-$K$ selection) are deferred to Appendix~\ref{apd:ablation_details}.

\begin{table*}[t!]
\caption{Internal correlation vs. external influence.}
\vspace{-5pt}
\centering
\renewcommand{\arraystretch}{0.82}
\resizebox{\textwidth}{!}{
\begin{tabular}{c|c|cc|cc|cc|cc|cc|cc} 
\hline
\multirow{3}{*}{Attack} 
& \multirow{3}{*}{Dataset} 
& \multicolumn{6}{c|}{Internal Correlation Only} 
& \multicolumn{6}{c}{External Influence Only} \\ 
\cline{3-14}
& 
& \multicolumn{2}{c|}{Defense} 
& \multicolumn{2}{c|}{Victim Detection} 
& \multicolumn{2}{c|}{Trigger Detection} 
& \multicolumn{2}{c|}{Defense} 
& \multicolumn{2}{c|}{Victim Detection} 
& \multicolumn{2}{c}{Trigger Detection} \\
& 
& ASR~$\downarrow$ & CA~$\uparrow$ 
& P~$\uparrow$ & R~$\uparrow$ 
& P~$\uparrow$ & R~$\uparrow$ 
& ASR~$\downarrow$ & CA~$\uparrow$ 
& P~$\uparrow$ & R~$\uparrow$ 
& P~$\uparrow$ & R~$\uparrow$ \\
\hline

\multirow{4}{*}{SBA}    
& Cora      & $11.82$ & $84.05$ & $71.21$ & $80.00$ & $80.00$ & $66.67$ & $5.24$  & $84.30$ & $81.06$ & $66.00$ & $97.65$ & $42.33$ \\
& PubMed    & $10.77$ & $85.40$ & $80.00$ & $80.00$ & $80.00$ & $66.67$ & $10.18$ & $85.18$ & $80.00$ & $14.50$ & $65.60$ & $5.58$  \\
& OGB-arxiv & $0.03$  & $66.33$ & $100.00$ & $99.91$ & $100.00$ & $83.27$ & $48.64$ & $66.17$ & $0.00$  & $0.00$  & $0.00$  & $0.00$  \\
& Flickr    & $0.00$  & $46.33$ & $98.77$ & $100.00$ & $99.72$ & $88.89$ & $0.12$  & $42.54$ & $79.89$ & $66.25$ & $94.67$ & $54.93$ \\
\hline

\multirow{4}{*}{GTA}    
& Cora      & $20.71$ & $75.33$ & $69.29$ & $46.00$ & $80.00$ & $46.00$ & $0.00$ & $74.52$ & $91.74$ & $86.00$ & $84.50$ & $86.00$ \\
& PubMed    & $1.73$  & $83.16$ & $100.00$ & $15.50$ & $100.00$ & $15.50$ & $0.31$ & $83.39$ & $96.68$ & $73.00$ & $95.40$ & $73.00$ \\
& OGB-arxiv & $95.99$ & $60.69$ & $0.00$  & $0.00$  & $0.00$  & $0.00$  & $0.01$ & $65.76$ & $100.00$ & $88.67$ & $100.00$ & $88.67$ \\
& Flickr    & $60.00$ & $41.81$ & $38.53$ & $39.13$ & $38.53$ & $39.13$ & $0.00$ & $41.86$ & $97.67$ & $71.50$ & $92.66$ & $71.50$ \\
\hline

\multirow{4}{*}{UGBA}   
& Cora      & $1.87$  & $82.30$ & $86.06$ & $94.00$ & $100.00$ & $94.00$ & $3.20$  & $81.78$ & $90.18$ & $92.00$ & $92.67$ & $90.00$ \\
& PubMed    & $54.40$ & $83.49$ & $60.00$ & $19.00$ & $60.00$  & $19.00$ & $4.08$  & $84.47$ & $100.00$ & $92.00$ & $100.00$ & $92.00$ \\
& OGB-arxiv & $0.16$  & $65.96$ & $100.00$ & $90.97$ & $100.00$ & $90.97$ & $99.06$ & $66.09$ & $0.00$  & $0.00$  & $0.00$  & $0.00$ \\
& Flickr    & $0.00$  & $42.86$ & $100.00$ & $96.88$ & $100.00$ & $96.88$ & $0.00$  & $42.87$ & $100.00$ & $99.38$ & $100.00$ & $99.38$ \\
\hline

\multirow{4}{*}{DPGBA}  
& Cora      & $0.17$  & $82.30$ & $92.70$ & $100.00$ & $100.00$ & $100.00$ & $19.13$ & $81.93$ & $80.00$ & $80.00$ & $80.00$ & $77.33$ \\
& PubMed    & $3.72$  & $83.88$ & $97.10$ & $100.00$ & $100.00$ & $100.00$ & $88.13$ & $83.48$ & $53.33$ & $2.00$  & $27.00$ & $1.00$ \\
& OGB-arxiv & $0.02$  & $66.19$ & $100.00$ & $86.55$ & $99.84$  & $86.55$  & $92.65$ & $65.87$ & $0.00$  & $0.00$  & $0.00$  & $0.00$ \\
& Flickr    & $0.08$  & $41.60$ & $98.77$ & $100.00$ & $99.65$  & $100.00$ & $31.70$ & $40.40$ & $65.04$ & $66.85$ & $63.79$ & $66.46$ \\
\hline

\end{tabular}
}
\vspace{-12pt}
\label{tab:intravsexternal}
\end{table*}

\noindent\textbf{Internal correlation and external influence.}
To assess the contribution of the two auditing views, we evaluate two variants: \system$\backslash$I, which removes internal correlation, and \system$\backslash$E, which removes external influence. 
Table~\ref{tab:intravsexternal} shows that the two views capture different attack patterns. 
\system$\backslash$I is less effective against dense and structurally robust triggers such as DPGBA, where backdoor patterns can persist after masking individual nodes. 
In contrast, \system$\backslash$E underperforms on sparse or single-node triggers such as GTA and UGBA, where internal subgraph evidence is limited. 
Combining both views allows \system{} to cover a broader spectrum of trigger structures, from sparse to dense patterns. 
See detailed failure-mode analysis in Appendix~\ref{app:ablation_ic_ex_analysis}.

\noindent\textbf{Effectiveness of robust learning.}
To evaluate the defense beyond detection, we compare \system{} with two variants: \system$\backslash$R, which removes detected nodes and retrains normally, and \system$\backslash$RV, which unlearns only on victim nodes. 
Figure~\ref{fig:ablation_study_3} reports the average ASR and CA across attacks and datasets.

\system$\backslash$R still yields a high average ASR of $66.07\%$, showing that simple node removal is insufficient to eliminate residual backdoor effects. 
\system$\backslash$RV improves robustness but remains less effective than \system{}, suggesting that victim-only unlearning does not fully cover the adversarial distribution. 
By unlearning on both victim and trigger nodes, \system{} provides more complete adversarial supervision and reduces ASR to nearly $0\%$ while preserving clean accuracy.

\begin{wrapfigure}{r}{0.5\linewidth}
    \vspace{-10pt} 
    \centering
    \includegraphics[width=\linewidth]{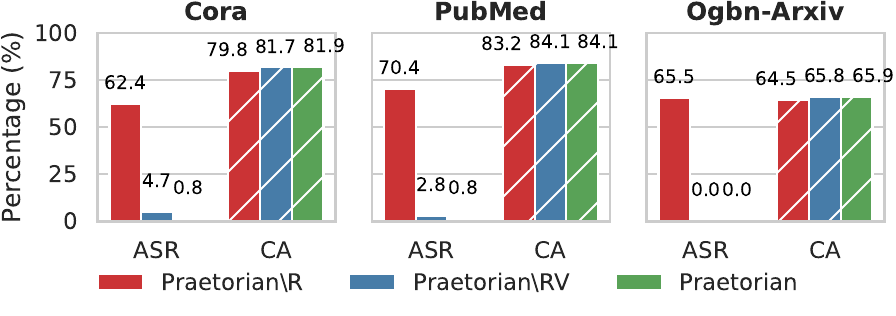}
    \vspace{-20pt}
    \caption{The Role of Robust Learning}
    \vspace{-12pt}
    \label{fig:ablation_study_3}
\end{wrapfigure}

\vspace{-6pt}
\subsection{Robustness Against Varying Settings}
\label{sec:varying_setting}
\vspace{-5pt}
We evaluate the resilience of \system{} against varying attack intensities, specifically examining the impact of Trigger Size (TS) and Victim Size (VS). Additional evaluations regarding \textit{model architecture and stop criterion} are detailed in Appendix~\ref{app:additional_robustness}.

\noindent\textbf{Impact of Trigger Size (TS).}
We vary the trigger size from 1 to 5 on the Cora dataset under the DPGBA attack. As shown in Table~\ref{tab:ts_vs}, \system{} remains stable across these settings, maintaining precision $>90\%$ and recall $>85\%$ across all settings. This indicates that our dual-scoring mechanism (internal and external) effectively captures adversarial anomalies across different trigger sizes. Consequently, \system{} consistently neutralizes the attack (ASR $\approx 0\%$) while preserving a Clean Accuracy (CA) higher than that of the undefended backdoored model.

\begin{wraptable}{r}{0.53\linewidth}
\vspace{-10pt}
\centering
\renewcommand{\arraystretch}{0.9}
\caption{Different Trigger Size and Victim Size}
\label{tab:ts_vs}
\vspace{-6pt}

\resizebox{\linewidth}{!}{
\begin{tabular}{c|cc|cc|cc|cc} 
\hline
\multirow{3}{*}{Setting} 
& \multicolumn{2}{c|}{\multirow{2}{*}{No Defense}} 
& \multicolumn{2}{c|}{\multirow{2}{*}{Defense}} 
& \multicolumn{4}{c}{Node Detection} \\ 
\cline{6-9}
& \multicolumn{2}{c|}{} 
& \multicolumn{2}{c|}{} 
& \multicolumn{2}{c|}{Victim Nodes} 
& \multicolumn{2}{c}{Trigger Nodes} \\ 
\cline{2-9}
& ASR & CA 
& ASR & CA 
& P & R 
& P & R \\
\hline
\hline
TS=1. VS=10  & $74.81$ & $82.96$ & $0.09$ & $83.26$ & $98.00$  & $88.00$  & $98.00$  & $88.00$  \\
TS=2. VS=10  & $86.17$ & $82.67$ & $2.00$ & $83.11$ & $94.36$  & $98.00$  & $100.00$ & $96.00$  \\
TS=3. VS=10  & $92.44$ & $82.00$ & $0.61$ & $83.41$ & $91.21$  & $100.00$ & $100.00$ & $100.00$ \\
TS=4. VS=10  & $95.48$ & $80.47$ & $0.70$ & $83.04$ & $96.36$  & $100.00$ & $99.51$  & $100.00$ \\
TS=5. VS=10  & $97.30$ & $79.26$ & $0.00$ & $80.37$ & $96.36$  & $100.00$ & $100.00$ & $100.00$ \\
\hline
TS=3. VS=10  & $92.44$ & $82.00$ & $0.61$ & $83.41$ & $91.21$  & $100.00$ & $100.00$ & $100.00$ \\
TS=3. VS=20  & $96.61$ & $80.74$ & $0.00$ & $81.56$ & $100.00$ & $100.00$ & $100.00$ & $96.67$  \\
TS=3. VS=40  & $94.70$ & $80.67$ & $0.09$ & $81.78$ & $99.51$  & $100.00$ & $99.67$  & $99.67$  \\
TS=3. VS=80  & $97.39$ & $80.00$ & $0.00$ & $80.44$ & $100.00$ & $100.00$ & $99.83$  & $99.25$  \\
TS=3. VS=160 & $98.70$ & $78.12$ & $0.00$ & $80.67$ & $99.88$  & $99.00$  & $99.00$  & $98.79$  \\
\hline
\end{tabular}
}
\vspace{-12pt}
\end{wraptable}
\noindent\textbf{Impact of Victim Size (VS).}
To assess scalability, we test victim sizes ranging from 10 to 160. Table~\ref{tab:ts_vs} reveals that \system{} remains robust to the scale of the attack, with precision consistently exceeding $90\%$ and recall surpassing $95\%$. Unlike methods that may degrade as the attack becomes more pervasive, our approach accurately isolates both victim and trigger nodes irrespective of the injection volume. This results in consistent defense performance (near-zero ASR) without compromising model utility.

\vspace{-6pt}
\section{Adaptive Attacks}
\vspace{-6pt}
\label{sec:adaptive}

We evaluate \system{} against adaptive adversaries targeting its two-view auditing pipeline. 
Because \system{} relies on internal-correlation (IC) and external-influence (EI) scores followed by score-to-node (S2N) localization, attackers can mainly adapt by weakening IC, diluting EI, or challenging S2N. 
Table~\ref{tab:adaptive_coverage} summarizes the covered adaptive surfaces and outcomes. 
We highlight two representative regimes, \textbf{One-node Clean-Label} and \textbf{Random Disruption}, in the main text, with additional \textit{white-box}, \textit{asymmetric}, and \textit{multi-target} settings reported in Appendix~\ref{apd:additional_security_analysis}.

\begin{table}[h!]
\centering
\vspace{-5pt}
\caption{Coverage and outcomes of adaptive attacks against \system{}. IC: internal-correlation score; EI: external-influence score; S2N: score-to-node localization.}
\label{tab:adaptive_coverage}
\scriptsize
\setlength{\tabcolsep}{3pt}
\renewcommand{\arraystretch}{0.7}
\begin{tabularx}{\linewidth}{l l l X l}
\toprule
\textbf{Setting} & \textbf{Adaptive lever} & \textbf{Target} & \textbf{Main outcome} & \textbf{Evidence} \\
\midrule
One-node clean-label &
TS$=1$ + clean-label victims &
IC, EI &
Post-defense ASR is at most $18.1\%$ under varying VS/TS. &
Table~\ref{tab:clean_vs} \\
\midrule
Random disruption &
Noisy triggers + large VS &
IC, EI &
Achieving ASR $>80\%$ requires large VS/TS and causes CA drops $>10\%$. &
Fig.~\ref{fig:random_noise_asr_ca} \\
\midrule
White-box optimization &
Defense-aware trigger search &
IC, EI, S2N &
Attack fails after defense: ASR is close to $0\%$ with poisoned-node P/R $>90\%$. &
App.~\ref{sec:baseline} \\
\midrule
Asymmetric insertion &
Weak train / strong test triggers &
train/test gap &
Attack fails after defense: ASR is $0\%$, with precision $\ge97\%$ and recall $\ge88\%$. &
App.~\ref{sec:asymmetric} \\
\midrule
Multi-target attack &
Multiple target labels &
S2N grouping &
ASR drops to $0.0\%$ for DPGBA and $2.2\%$ for UGBA; P/R is $92.5\%$. &
App.~\ref{apd:multi-target} \\
\bottomrule
\end{tabularx}
\vspace{-10pt}
\end{table}

\noindent\textbf{One-node clean-label.}
This attack jointly weakens both auditing views. 
By constraining the trigger to a single node (TS$=1$), it removes intra-trigger consistency cues; by selecting clean-label victims whose ground-truth labels already match the target class, it reduces observable external influence after trigger removal. 
The remaining backdoor effect is therefore subtle and localized, challenging both the external-influence view and S2N localization. 
\begin{wraptable}{r}{0.52\linewidth}
\vspace{-10pt}
\centering
\renewcommand{\arraystretch}{0.85}
\caption{Varying VS of One-node Clean Label Attack}
\label{tab:clean_vs}
\vspace{-6pt}

\resizebox{\linewidth}{!}{
\begin{tabular}{c|c|cc|cc|cc|cc} 
\hline
\multirow{3}{*}{Attack} 
& \multirow{3}{*}{VS} 
& \multicolumn{2}{c|}{\multirow{2}{*}{No Defense}} 
& \multicolumn{2}{c|}{\multirow{2}{*}{Defense}} 
& \multicolumn{4}{c}{Node Detection} \\ 
\cline{7-10}
& 
& \multicolumn{2}{c|}{} 
& \multicolumn{2}{c|}{} 
& \multicolumn{2}{c|}{Victim Nodes} 
& \multicolumn{2}{c}{Trigger Nodes} \\ 
\cline{3-10}
& 
& ASR & CA 
& ASR & CA 
& P & R 
& P & R \\ 
\hline
\multirow{5}{*}{$DPGBA_{CO}$} 
& $10$  & $53.74$ & $82.82$ & $18.09$ & $83.11$ & $50.83$ & $28.00$ & $80.00$  & $26.00$ \\
& $20$  & $70.87$ & $83.48$ & $15.83$ & $82.74$ & $84.11$ & $33.00$ & $100.00$ & $31.00$ \\
& $40$  & $66.96$ & $83.19$ & $0.44$  & $80.96$ & $95.63$ & $60.50$ & $98.72$  & $59.50$ \\
& $80$  & $61.57$ & $82.89$ & $0.00$  & $80.30$ & $98.13$ & $81.00$ & $100.00$ & $80.50$ \\
& $160$ & $81.22$ & $82.52$ & $0.00$  & $82.37$ & $99.35$ & $89.88$ & $100.00$ & $89.50$ \\ 
\hline
\hline
\multirow{5}{*}{$UGBA_{CO}$} 
& $10$  & $92.18$ & $81.48$ & $6.76$ & $82.89$ & $84.00$ & $84.00$ & $100.00$ & $84.00$ \\
& $20$  & $90.31$ & $82.30$ & $4.18$ & $82.44$ & $93.61$ & $85.00$ & $100.00$ & $85.00$ \\
& $40$  & $93.69$ & $81.11$ & $0.44$ & $80.00$ & $95.10$ & $97.00$ & $100.00$ & $96.50$ \\
& $80$  & $92.98$ & $80.52$ & $0.00$ & $79.85$ & $98.72$ & $95.00$ & $100.00$ & $95.00$ \\
& $160$ & $93.07$ & $81.70$ & $0.27$ & $80.82$ & $99.65$ & $97.50$ & $99.45$  & $97.50$ \\ 
\hline
\hline
\multirow{5}{*}{\makecell{$DPGBA$\\$+$\\$UGBA_{CO}$}} 
& $10$  & $21.57$ & $83.33$ & $13.91$ & $83.56$ & $48.33$ & $20.00$ & $80.00$  & $18.00$ \\
& $20$  & $38.44$ & $83.33$ & $9.22$  & $83.33$ & $76.17$ & $25.00$ & $100.00$ & $23.00$ \\
& $40$  & $31.74$ & $83.56$ & $6.87$  & $82.74$ & $93.06$ & $54.00$ & $99.33$  & $53.00$ \\
& $80$  & $47.48$ & $83.26$ & $2.00$  & $81.63$ & $97.58$ & $72.57$ & $100.00$ & $71.57$ \\
& $160$ & $49.04$ & $83.11$ & $0.61$  & $81.33$ & $99.15$ & $88.00$ & $100.00$ & $87.25$ \\
\hline
\end{tabular}
}
\vspace{-12pt}
\end{wraptable}
We instantiate this setting with three UGBA and DPGBA variants on Cora and PubMed.

As shown in Table~\ref{tab:clean_vs}, \system{} substantially reduces ASR, e.g., from $>90\%$ to $6\%$ for $UGBA_{CO}$ with VS=10. 
Increasing the victim size makes the residual trigger influence harder to hide: when VS exceeds 40, \system{} identifies at least 60\% of triggers. 
Across victim-size variations and trigger-size variations ( Table~\ref{tab:clean_ts} ), the post-defense ASR never exceeds 18.1\%.

\begin{wrapfigure}{r}{0.5\linewidth}
    \vspace{-12pt}
    \centering
    \includegraphics[width=\linewidth]{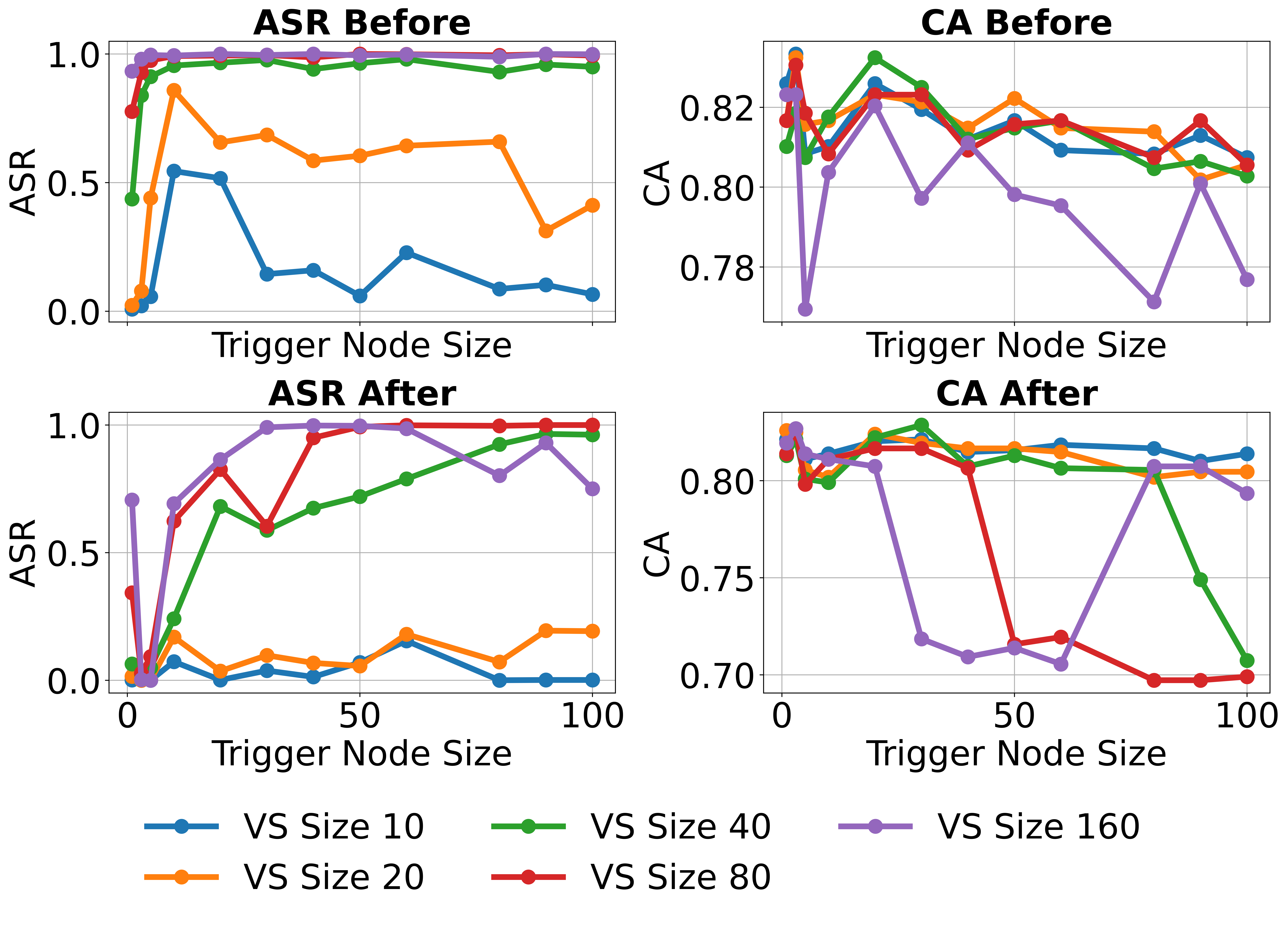}
    \vspace{-20pt}
    \caption{ASR and CA under Random Disruption attacks.}
    \label{fig:random_noise_asr_ca}
    \vspace{-15pt}
\end{wrapfigure}
\noindent\textbf{Random disruption.}
This attack targets both views from the opposite direction. 
It injects Gaussian noise into multi-node trigger features to weaken internal-correlation cues, while increasing the victim size to distribute the backdoor association and reduce reliance on any single influential trigger node. 
We implement this strategy based on SBA on Cora and PubMed.

Figure~\ref{fig:random_noise_asr_ca} shows that \system{} remains effective under moderate attack scales: post-defense ASR stays below 20\% when VS$\leq40$, and below 10\% for TS$\leq40$. 
Although larger triggers or victim sets can increase ASR, they also degrade clean accuracy and can be conspicuous. 
Reaching ASR $\geq80\%$ requires either VS$\geq80$ or a large trigger size, causing $\geq$10\% CA drop.

These results, together with the additional adaptive settings in Appendix~\ref{apd:additional_security_analysis}, show that \system{} remains robust across the main adaptive surfaces.

\vspace{-5pt}
\section{Conclusion}
\vspace{-5pt}
In this paper, we introduced \system{}, a new defense paradigm against graph backdoor attacks that integrates both \emph{internal-correlation} and \emph{external-influence} analyses to accurately identify trigger and victim nodes. 
Comprehensive experiments across multiple datasets and attack scenarios confirm \system{}’s superior defense performance and strong resilience to various adaptive attacks. It forces attackers to face a dilemma: either accept a low ASR or incur conspicuous pattern insertion, which leads to noticeable drops in clean accuracy, offering a reliable defense paradigm.

\bibliographystyle{unsrtnat}
\bibliography{ref}

@String{Computing = "Computing" }

@String{Computer = "{IEEE} Computer" }

@String{Springer = "Springer-Verlag" }

@BOOK{test,
   author = "Donald E. Knuth",
   title = "Seminumerical Algorithms",
   volume = 2,
   series = "The Art of Computer Programming",
   publisher = "Addison-Wesley",
   address = "Reading, MA",
   edition = "2nd",
   month = "10~" # jan,
   year = "1981",
}

@ArtifactSoftware{R,
    title = {R: A Language and Environment for Statistical Computing},
    author = {{R Core Team}},
    organization = {R Foundation for Statistical Computing},
    address = {Vienna, Austria},
    year = {2019},
    url = {https://www.R-project.org/},
}

@article{cora,
  title={Automating the construction of Internet portals with machine learning},
  author={McCallum, A. K. and Nigam, K. and Rennie, J. and et al.},
  journal={Information Retrieval},
  volume={3},
  number={2},
  pages={127--163},
  year={2000},
  doi={10.1023/A:1009953814988},
}

@article{pubmed, 
title={Collective Classification in Network Data}, volume={29}, 
DOI={10.1609/aimag.v29i3.2157}, 
number={3}, 
journal={AI Magazine}, 
author={Sen, Prithviraj and Namata, Galileo and Bilgic, Mustafa and Getoor, Lise and Galligher, Brian and Eliassi-Rad, Tina}, 
year={2008},
month={Sep.}, 
pages={93} }

@misc{OGB-arxiv,
      title={Open Graph Benchmark: Datasets for Machine Learning on Graphs}, 
      author={Weihua Hu and Matthias Fey and Marinka Zitnik and Yuxiao Dong and Hongyu Ren and Bowen Liu and Michele Catasta and Jure Leskovec},
      year={2021} 
}

@inproceedings {GTA,
author = {Zhaohan Xi and Ren Pang and Shouling Ji and Ting Wang},
title = {Graph Backdoor},
booktitle = {30th USENIX Security Symposium (USENIX Security 21)},
year = {2021},
isbn = {978-1-939133-24-3},
pages = {1523--1540},
}

@inproceedings{UGBA,
  title={Unnoticeable Backdoor Attacks on Graph Neural Networks},
  author={Dai, Enyan and Lin, Minhua and Zhang, Xiang and Wang, Suhang},
  booktitle={Proceedings of the ACM Web Conference 2023},
  pages={2263--2273},
  year={2023}
}

@inproceedings{SBA,
author = {Zhang, Zaixi and Jia, Jinyuan and Wang, Binghui and Gong, Neil Zhenqiang},
title = {Backdoor Attacks to Graph Neural Networks},
year = {2021},
isbn = {9781450383653},
publisher = {Association for Computing Machinery},
address = {New York, NY, USA},
doi = {10.1145/3450569.3463560},
pages = {15–26},
numpages = {12},
location = {Virtual Event, Spain},
series = {SACMAT '21}
}

@inproceedings{DPGBA,
  title={Rethinking graph backdoor attacks: A distribution-preserving perspective},
  author={Zhang, Zhiwei and Lin, Minhua and Dai, Enyan and Wang, Suhang},
  booktitle={Proceedings of the 30th ACM SIGKDD Conference on Knowledge Discovery and Data Mining},
  pages={4386--4397},
  year={2024}
}

@inproceedings{GNNGuard,
title     = {GNNGuard: Defending Graph Neural Networks against Adversarial Attacks},
author    = {Zhang, Xiang and Zitnik, Marinka},
booktitle = {NeurIPS},
year      = {2020}
}

@inproceedings{RobustGCN,
  title={Robust graph convolutional networks against adversarial attacks},
  author={Zhu, Dingyuan and Zhang, Ziwei and Cui, Peng and Zhu, Wenwu},
  booktitle={Proceedings of the 25th ACM SIGKDD International Conference on Knowledge Discovery \& Data Mining},
  pages={1399--1407},
  year={2019}
}

@inproceedings{
RIGBD,
title={Robustness Inspired Graph Backdoor Defense},
author={Zhiwei Zhang and Minhua Lin and Junjie Xu and Zongyu Wu and Enyan Dai and Suhang Wang},
booktitle={The Thirteenth International Conference on Learning Representations},
year={2025},
}

@inproceedings{explainabilitybased,
author = {Xu, Jing and Xue, Minhui (Jason) and Picek, Stjepan},
title = {Explainability-based Backdoor Attacks Against Graph Neural Networks},
year = {2021},
isbn = {9781450385619},
publisher = {Association for Computing Machinery},
address = {New York, NY, USA},
doi = {10.1145/3468218.3469046},
booktitle = {Proceedings of the 3rd ACM Workshop on Wireless Security and Machine Learning},
pages = {31–36},
numpages = {6},
location = {Abu Dhabi, United Arab Emirates},
series = {WiseML '21}
}

@misc{dyn-link,
      title={Dyn-Backdoor: Backdoor Attack on Dynamic Link Prediction}, 
      author={Jinyin Chen and Haiyang Xiong and Haibin Zheng and Jian Zhang and Guodong Jiang and Yi Liu},
      year={2021},
      eprint={2110.03875},
      archivePrefix={arXiv},
      primaryClass={cs.AI},
}

@misc{link-backdoor,
      title={Link-Backdoor: Backdoor Attack on Link Prediction via Node Injection}, 
      author={Haibin Zheng and Haiyang Xiong and Haonan Ma and Guohan Huang and Jinyin Chen},
      year={2022},
      eprint={2208.06776},
      archivePrefix={arXiv},
      primaryClass={cs.SI},
}

@inproceedings{RandomSmooth,
author = {Wang, Binghui and Jia, Jinyuan and Cao, Xiaoyu and Gong, Neil Zhenqiang},
title = {Certified Robustness of Graph Neural Networks against Adversarial Structural Perturbation},
year = {2021},
isbn = {9781450383325},
publisher = {Association for Computing Machinery},
address = {New York, NY, USA},
doi = {10.1145/3447548.3467295},
booktitle = {Proceedings of the 27th ACM SIGKDD Conference on Knowledge Discovery \& Data Mining},
pages = {1645–1653},
numpages = {9},
location = {Virtual Event, Singapore},
series = {KDD '21}
}

@misc{gcn,
      title={Semi-Supervised Classification with Graph Convolutional Networks}, 
      author={Thomas N. Kipf and Max Welling},
      year={2017},
      eprint={1609.02907},
      archivePrefix={arXiv},
      primaryClass={cs.LG},
}

@misc{gat,
      title={Graph Attention Networks}, 
      author={Petar Veličković and Guillem Cucurull and Arantxa Casanova and Adriana Romero and Pietro Liò and Yoshua Bengio},
      year={2018},
      eprint={1710.10903},
      archivePrefix={arXiv},
      primaryClass={stat.ML},
}

@misc{graphsage,
      title={Inductive Representation Learning on Large Graphs}, 
      author={William L. Hamilton and Rex Ying and Jure Leskovec},
      year={2018},
      eprint={1706.02216},
      archivePrefix={arXiv},
      primaryClass={cs.SI},
}

@misc{gin,
      title={How Powerful are Graph Neural Networks?}, 
      author={Keyulu Xu and Weihua Hu and Jure Leskovec and Stefanie Jegelka},
      year={2019},
      eprint={1810.00826},
      archivePrefix={arXiv},
      primaryClass={cs.LG},
}

@misc{linkprediction2018,
      title={Link Prediction Based on Graph Neural Networks}, 
      author={Muhan Zhang and Yixin Chen},
      year={2018},
      eprint={1802.09691},
      archivePrefix={arXiv},
      primaryClass={cs.LG},
}

@inproceedings{DGCGN,
author = {Zhang, Muhan and Cui, Zhicheng and Neumann, Marion and Chen, Yixin},
title = {An end-to-end deep learning architecture for graph classification},
year = {2018},
isbn = {978-1-57735-800-8},
publisher = {AAAI Press},
articleno = {544},
numpages = {8},
location = {New Orleans, Louisiana, USA},
series = {AAAI'18/IAAI'18/EAAI'18}
}

@misc{fan2019graphneuralnetworkssocial,
      title={Graph Neural Networks for Social Recommendation}, 
      author={Wenqi Fan and Yao Ma and Qing Li and Yuan He and Eric Zhao and Jiliang Tang and Dawei Yin},
      year={2019},
      eprint={1902.07243},
      archivePrefix={arXiv},
      primaryClass={cs.IR},
}

@article{moleculargnn,
   title={Molecular Geometry Prediction using a Deep Generative Graph Neural Network},
   volume={9},
   ISSN={2045-2322},
   DOI={10.1038/s41598-019-56773-5},
   number={1},
   journal={Scientific Reports},
   publisher={Springer Science and Business Media LLC},
   author={Mansimov, Elman and Mahmood, Omar and Kang, Seokho and Cho, Kyunghyun},
   year={2019},
   month=dec }

@ARTICLE{biologicalgnn,
AUTHOR={Zhang, Xiao-Meng  and Liang, Li  and Liu, Lin  and Tang, Ming-Jing },
TITLE={Graph Neural Networks and Their Current Applications in Bioinformatics},
JOURNAL={Frontiers in Genetics},
VOLUME={12},
YEAR={2021},
}

@inproceedings{recommendatationgnn,
author = {Gao, Chen and Wang, Xiang and He, Xiangnan and Li, Yong},
title = {Graph Neural Networks for Recommender System},
year = {2022},
isbn = {9781450391320},
publisher = {Association for Computing Machinery},
address = {New York, NY, USA},
doi = {10.1145/3488560.3501396},
pages = {1623–1625},
numpages = {3},
location = {Virtual Event, AZ, USA},
series = {WSDM '22}
}

@article{Adversarial-Attack-and-Defense,
   title={Adversarial Attack and Defense on Graph Data: A Survey},
   ISSN={2326-3865},
   DOI={10.1109/tkde.2022.3201243},
   journal={IEEE Transactions on Knowledge and Data Engineering},
   publisher={Institute of Electrical and Electronics Engineers (IEEE)},
   author={Sun, Lichao and Dou, Yingtong and Yang, Carl and Zhang, Kai and Wang, Ji and Yu, Philip S. and He, Lifang and Li, Bo},
   year={2022},
   pages={1–20} }

@misc{qian2023robusttraininggraphneural,
      title={Robust Training of Graph Neural Networks via Noise Governance}, 
      author={Siyi Qian and Haochao Ying and Renjun Hu and Jingbo Zhou and Jintai Chen and Danny Z. Chen and Jian Wu},
      year={2023},
      eprint={2211.06614},
      archivePrefix={arXiv},
      primaryClass={cs.LG},
}

@misc{dai2021nrgnnlearninglabelnoiseresistant,
      title={NRGNN: Learning a Label Noise-Resistant Graph Neural Network on Sparsely and Noisily Labeled Graphs}, 
      author={Enyan Dai and Charu Aggarwal and Suhang Wang},
      year={2021},
      eprint={2106.04714},
      archivePrefix={arXiv},
      primaryClass={cs.LG},
}

@misc{gosch2023adversarialtraininggraphneural,
      title={Adversarial Training for Graph Neural Networks: Pitfalls, Solutions, and New Directions}, 
      author={Lukas Gosch and Simon Geisler and Daniel Sturm and Bertrand Charpentier and Daniel Zügner and Stephan Günnemann},
      year={2023},
      eprint={2306.15427},
      archivePrefix={arXiv},
      primaryClass={cs.LG},
}

@misc{li2022spectraladversarialtrainingrobust,
      title={Spectral Adversarial Training for Robust Graph Neural Network}, 
      author={Jintang Li and Jiaying Peng and Liang Chen and Zibin Zheng and Tingting Liang and Qing Ling},
      year={2022},
      eprint={2211.10896},
      archivePrefix={arXiv},
      primaryClass={cs.LG},
}

@misc{li2022backdoorlearningsurvey,
      title={Backdoor Learning: A Survey}, 
      author={Yiming Li and Yong Jiang and Zhifeng Li and Shu-Tao Xia},
      year={2022},
      eprint={2007.08745},
      archivePrefix={arXiv},
      primaryClass={cs.CR},
}

@article{adversarial-survey,
   title={How Deep Learning Sees the World: A Survey on Adversarial Attacks and Defenses},
   volume={12},
   ISSN={2169-3536},
   DOI={10.1109/access.2024.3395118},
   journal={IEEE Access},
   publisher={Institute of Electrical and Electronics Engineers (IEEE)},
   author={Costa, Joana C. and Roxo, Tiago and Proença, Hugo and Inácio, Pedro Ricardo Morais},
   year={2024},
   pages={61113–61136} 
}

@misc{badnets,
      title={BadNets: Identifying Vulnerabilities in the Machine Learning Model Supply Chain}, 
      author={Tianyu Gu and Brendan Dolan-Gavitt and Siddharth Garg},
      year={2019},
      eprint={1708.06733},
      archivePrefix={arXiv},
      primaryClass={cs.CR},
}

@article{wang2019dgl,
    title={Deep Graph Library: A Graph-Centric, Highly-Performant Package for Graph Neural Networks},
    author={Minjie Wang and Da Zheng and Zihao Ye and Quan Gan and Mufei Li and Xiang Song and Jinjing Zhou and Chao Ma and Lingfan Yu and Yu Gai and Tianjun Xiao and Tong He and George Karypis and Jinyang Li and Zheng Zhang},
    year={2019},
}

@misc{abl,
      title={Anti-Backdoor Learning: Training Clean Models on Poisoned Data}, 
      author={Yige Li and Xixiang Lyu and Nodens Koren and Lingjuan Lyu and Bo Li and Xingjun Ma},
      year={2021},
      eprint={2110.11571},
      archivePrefix={arXiv},
      primaryClass={cs.LG},
}

@ARTICLE{gnn-healthcare,
  author={Paul, Showmick Guha and Saha, Arpa and Hasan, Md. Zahid and Noori, Sheak Rashed Haider and Moustafa, Ahmed},
  journal={IEEE Access}, 
  title={A Systematic Review of Graph Neural Network in Healthcare-Based Applications: Recent Advances, Trends, and Future Directions}, 
  year={2024},
  volume={12},
  number={},
  pages={15145-15170},
  doi={10.1109/ACCESS.2024.3354809}}

@misc{finance-gnn,
      title={A Review on Graph Neural Network Methods in Financial Applications}, 
      author={Jianian Wang and Sheng Zhang and Yanghua Xiao and Rui Song},
      year={2022},
      eprint={2111.15367},
      archivePrefix={arXiv},
      primaryClass={q-fin.ST},
}

@article{healthycate-gnn,
   title={Graph-Based Deep Learning for Medical Diagnosis and Analysis: Past, Present and Future},
   volume={21},
   ISSN={1424-8220},
   DOI={10.3390/s21144758},
   number={14},
   journal={Sensors},
   publisher={MDPI AG},
   author={Ahmedt-Aristizabal, David and Armin, Mohammad Ali and Denman, Simon and Fookes, Clinton and Petersson, Lars},
   year={2021},
   month=jul, pages={4758} }

@article{robust-graph_learning,
   title={Robust Graph Learning From Noisy Data},
   volume={50},
   ISSN={2168-2275},
   DOI={10.1109/tcyb.2018.2887094},
   number={5},
   journal={IEEE Transactions on Cybernetics},
   publisher={Institute of Electrical and Electronics Engineers (IEEE)},
   author={Kang, Zhao and Pan, Haiqi and Hoi, Steven C. H. and Xu, Zenglin},
   year={2020},
   month=may, pages={1833–1843} }

@misc{nt2019learninggraphneuralnetworks,
      title={Learning Graph Neural Networks with Noisy Labels}, 
      author={Hoang NT and Choong Jun Jin and Tsuyoshi Murata},
      year={2019},
      eprint={1905.01591},
      archivePrefix={arXiv},
      primaryClass={cs.LG},
}

@misc{survey-adversarial-learning-graph,
      title={A Survey of Adversarial Learning on Graphs}, 
      author={Liang Chen and Jintang Li and Jiaying Peng and Tao Xie and Zengxu Cao and Kun Xu and Xiangnan He and Zibin Zheng and Bingzhe Wu},
      year={2022},
      eprint={2003.05730},
      archivePrefix={arXiv},
      primaryClass={cs.LG},
      url={https://arxiv.org/abs/2003.05730}, 
}

@article{Blended,
	title        = {Targeted Backdoor Attacks on Deep Learning Systems Using Data Poisoning},
	author       = {Xinyun Chen and Chang Liu and Bo Li and Kimberly Lu and Dawn Song},
	journal      = {arXiv preprint arXiv:1712.05526},
	year         = {2017}
}

@misc{homophilynecessitygraphneural,
      title={Is Homophily a Necessity for Graph Neural Networks?}, 
      author={Yao Ma and Xiaorui Liu and Neil Shah and Jiliang Tang},
      year={2023},
      eprint={2106.06134},
      archivePrefix={arXiv},
      primaryClass={cs.LG},
}

@article{Gilbert1959RandomGraphs,
  author       = {Edgar N. Gilbert},
  title        = {Random Graphs},
  journal      = {The Annals of Mathematical Statistics},
  year         = {1959},
  publisher    = {Institute of Mathematical Statistics},
}

@ARTICLE{dense-detection,
  author={Chen, Jie and Saad, Yousef},
  journal={IEEE Transactions on Knowledge and Data Engineering}, 
  title={Dense Subgraph Extraction with Application to Community Detection}, 
  year={2012},
  volume={24},
  number={7},
  pages={1216-1230},
  doi={10.1109/TKDE.2010.271}}

@inproceedings{qi2022revisiting,
  title={Revisiting the assumption of latent separability for backdoor defenses},
  author={Qi, Xiangyu and Xie, Tinghao and Li, Yiming and Mahloujifar, Saeed and Mittal, Prateek},
  booktitle={The eleventh international conference on learning representations},
  year={2022}
}

@inproceedings{multi-target,
author = {Xu, Jing and Picek, Stjepan},
title = {Poster: Multi-target \& Multi-trigger Backdoor Attacks on Graph Neural Networks},
year = {2023},
isbn = {9798400700507},
publisher = {Association for Computing Machinery},
address = {New York, NY, USA},
doi = {10.1145/3576915.3624387},
booktitle = {Proceedings of the 2023 ACM SIGSAC Conference on Computer and Communications Security},
pages = {3570–3572},
numpages = {3},
location = {Copenhagen, Denmark},
series = {CCS '23}
}

@inproceedings{hou2022graphmae,
  title={GraphMAE: Self-Supervised Masked Graph Autoencoders},
  author={Hou, Zhenyu and Liu, Xiao and Cen, Yukuo and Dong, Yuxiao and Yang, Hongxia and Wang, Chunjie and Tang, Jie},
  booktitle={Proceedings of the 28th ACM SIGKDD Conference on Knowledge Discovery and Data Mining},
  pages={594--604},
  year={2022}
}

@article{disease-diagnosis,
author = {Zhang, Lin and Zhao, Yan and Che, Tongtong and Li, Shuyu and Wang, Xiuying},
title = {Graph neural networks for image-guided disease diagnosis: A review},
journal = {iRADIOLOGY},
volume = {1},
number = {2},
pages = {151-166},
doi = {https://doi.org/10.1002/ird3.20},
eprint = {https://onlinelibrary.wiley.com/doi/pdf/10.1002/ird3.20},
year = {2023}
}

@misc{xia2023waveattackasymmetricfrequencyobfuscationbased,
      title={WaveAttack: Asymmetric Frequency Obfuscation-based Backdoor Attacks Against Deep Neural Networks}, 
      author={Jun Xia and Zhihao Yue and Yingbo Zhou and Zhiwei Ling and Xian Wei and Mingsong Chen},
      year={2023},
      eprint={2310.11595},
      archivePrefix={arXiv},
      primaryClass={cs.CV},
}

@misc{lu2025anywheredoormultitargetbackdoorattacks,
      title={AnywhereDoor: Multi-Target Backdoor Attacks on Object Detection}, 
      author={Jialin Lu and Junjie Shan and Ziqi Zhao and Ka-Ho Chow},
      year={2025},
      eprint={2411.14243},
      archivePrefix={arXiv},
      primaryClass={cs.CR},
}

@misc{blind,
      title={Blind Backdoors in Deep Learning Models}, 
      author={Eugene Bagdasaryan and Vitaly Shmatikov},
      year={2021},
      eprint={2005.03823},
      archivePrefix={arXiv},
      primaryClass={cs.CR},
}

@Inbook{Joyce2011,
author="Joyce, James M.",
editor="Lovric, Miodrag",
title="Kullback-Leibler Divergence",
bookTitle="International Encyclopedia of Statistical Science",
year="2011",
publisher="Springer Berlin Heidelberg",
address="Berlin, Heidelberg",
pages="720--722",
}

@article{MENENDEZ1997307,
title = {The Jensen-Shannon divergence},
journal = {Journal of the Franklin Institute},
volume = {334},
number = {2},
pages = {307-318},
year = {1997},
}

@incollection{harsanyi1982simplified,
  title={A simplified bargaining model for the n-person cooperative game},
  author={Harsanyi, John C},
  booktitle={Papers in game theory},
  pages={44--70},
  year={1982},
  publisher={Springer}
}

@article{flickr,
  author       = {Hanqing Zeng and
                  Hongkuan Zhou and
                  Ajitesh Srivastava and
                  Rajgopal Kannan and
                  Viktor K. Prasanna},
  title        = {GraphSAINT: Graph Sampling Based Inductive Learning Method},
  journal      = {CoRR},
  volume       = {abs/1907.04931},
  year         = {2019},
}

@inproceedings{graphprot,
  title     = {GraphProt: Certified Black-Box Shielding Against Backdoored Graph Models},
  author    = {Yang, Xiao and Lai, Yuni and Zhou, Kai and Li, Gaolei and Li, Jianhua and Zhang, Hang},
  booktitle = {Proceedings of the Thirty-Fourth International Joint Conference on
               Artificial Intelligence, {IJCAI-25}},
  publisher = {International Joint Conferences on Artificial Intelligence Organization},
  editor    = {James Kwok},
  pages     = {619--627},
  year      = {2025}
}

@misc{yang2024distributedbackdoorattacksfederated,
      title={Distributed Backdoor Attacks on Federated Graph Learning and Certified Defenses}, 
      author={Yuxin Yang and Qiang Li and Jinyuan Jia and Yuan Hong and Binghui Wang},
      year={2024},
      eprint={2407.08935},
      archivePrefix={arXiv},
      primaryClass={cs.CR},
      url={https://arxiv.org/abs/2407.08935}, 
}

@inproceedings{cgba,
author = {Xia, Hui and Zhao, Xiangwei and Zhang, Rui and Xu, Shuo and Wang, Luming},
title = {Clean-label graph backdoor attack in the node classification task},
year = {2025},
isbn = {978-1-57735-897-8},
publisher = {AAAI Press},
series = {AAAI'25/IAAI'25/EAAI'25}
}

@inproceedings{cgba2,
author = {Fan, Xuanhao and Dai, Enyan},
title = {Effective Clean-Label Backdoor Attacks on Graph Neural Networks},
year = {2024},
isbn = {9798400704369},
publisher = {Association for Computing Machinery},
series = {CIKM '24}
}


\appendix
\clearpage

\begin{wrapfigure}{r}{0.5\linewidth}
\centering
\begin{subfigure}{0.47\linewidth}
\centering
\includegraphics[width=\linewidth]{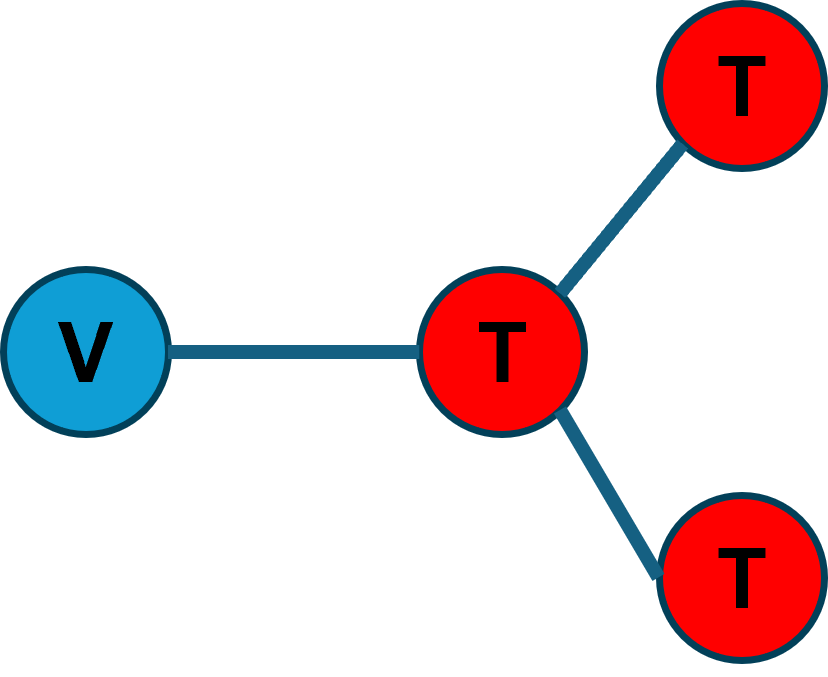}
\caption{Attach Trigger On Cora Dataset}
\label{fig:attach_a}
\end{subfigure}%
\hfill
\begin{subfigure}{0.47\linewidth}
\centering
\includegraphics[width=\linewidth]{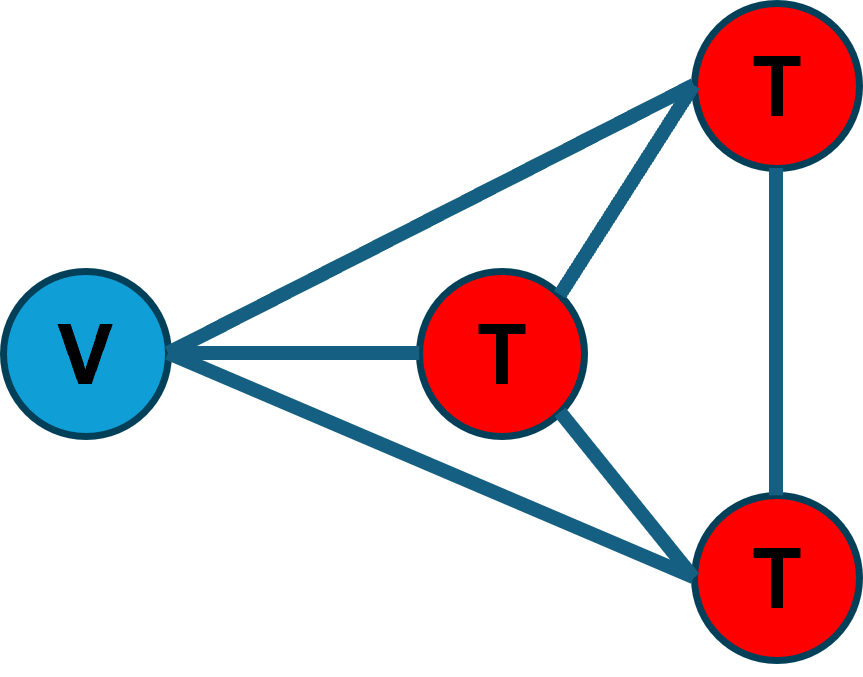}
\caption{Attach Trigger On OGB-arxiv Dataset}
\label{fig:attach_b}
\end{subfigure}
\caption{Example of Attachment Operation. "V" represents victim node, and "T" represents trigger node.}
\label{fig:attach}
\end{wrapfigure}
\section{Attachment Operation Learned by Attacker}
\label{apd:attach}
In Section~\ref{sec:threat_model}, we describe the threat model, where the attacker first trains an attack model to generate triggers based on the input graph and then attaches these triggers to each victim node. Notably, the attachment process is also learned during the training of the attack model. Furthermore, different attack methods learn distinct attachment strategies on the same dataset, and even the same attack method may adopt different attachment operations across datasets to optimize attack performance. For instance, with a trigger size of 3, the DPGBA attack, trained on the Cora dataset, learns to connect the victim node to the generated trigger via a single edge, as illustrated in Figure.~\ref{fig:attach_a}. But in the OGB-arxiv dataset, it establishes three edges to connect the victim node to each trigger node, as shown in Figure.~\ref{fig:attach_b}.

\section{Symbol Table}
\label{apd:symbol}

Here we provide the meaning of symbols used in this paper in Table~\ref{tab:symbol_table}.
\begin{table}[h!]
\centering
\caption{Main symbols used in the paper.}
\label{tab:symbol_table}
\scriptsize
\resizebox{0.5\linewidth}{!}{
\begin{tabular}{c|c}
\hline
Term & Symbol \\
\hline
Attributed Graph & $\mathcal{G}$ \\
Edges & $\mathcal{E}$ \\
Nodes & $\mathcal{V}$ \\
Node Attributes & $\mathcal{X}$ \\
Graph Matrix & $A$ \\
Backdoored Training Graph With Trigger & $\mathcal{G}_T$ \\
Edges in Backdoored Training Graph & $\mathcal{E}_T$ \\
Nodes in Backdoored Training Graph & $\mathcal{V}_T$ \\
Node Attributes in Backdoored Training Graph & $\mathcal{X}_T$ \\
Clean Node Set in Training Graph & $\mathcal{V}_C$ \\
Victim Node Set in Training Graph & $\mathcal{V}_B$ \\
Poison Node Set in Training Graph & $\mathcal{V}_P$ \\
Trigger Node Set in Training Graph & $\mathcal{V}_{Tri}$ \\
Trigger Size of Each Trigger & $TS$ \\
Victim Size of Total Victim Nodes & $VS$ \\
Unseen Backdoored Testing Graph With Trigger & $\mathcal{G}_U$ \\
Edges in Backdoored Testing Graph & $\mathcal{E}_U$ \\
Nodes in Backdoored Testing Graph & $\mathcal{V}_U$ \\
Node Attributes in Backdoored Testing Graph & $\mathcal{X}_U$ \\
Trigger Subgraph & $\mathcal{G}^{Tri}$ \\
Nodes in Trigger Subgraph & $\mathcal{V}^{Tri}$ \\
Edges in Trigger Subgraph & $\mathcal{E}^{Tri}$ \\
Node Attributes in Trigger Subgraph & $\mathcal{X}^{Tri}$ \\
Identified Victim Nodes & $\mathcal{V}_{I-B}$ \\
Identified Trigger Nodes & $\mathcal{V}^{I-Tri}$ \\
\hline
\end{tabular}
}
\vspace{-6pt}
\end{table}

\section{Statistics of Datasets}
\label{apd:dataset}

Here we provide the detailed statistics of our experimental datasets in Table~\ref{tab:dataset}.

\begin{table}[t]
\centering
\caption{Dataset statistics.}
\label{tab:dataset}
\scriptsize
\resizebox{0.7\linewidth}{!}{
\begin{tabular}{ccccc}
\hline
\textbf{Dataset} & \textbf{\#Nodes} & \textbf{\#Edges} & \textbf{\#Features} & \textbf{\#Classes} \\
\hline
Cora      & $2{,}708$   & $10{,}556$    & $1{,}443$ & $7$  \\
PubMed    & $19{,}717$  & $88{,}648$    & $500$     & $3$  \\
OGB-arxiv & $169{,}343$ & $2{,}315{,}598$ & $128$   & $40$ \\
Flickr    & $89{,}250$ &  $899{,}756$            & $500$       & $7$   \\
\hline
\end{tabular}
}
\vspace{-6pt}
\end{table}

\section{Challenges in Graph Backdoor Defense Compared to Image Domain}
\label{apd:challenges}

Defending against backdoor attacks in graph data presents unique and significant challenges compared to traditional domains such as image processing. Unlike image data, where malicious samples with incorrect labels can often be detected through visual inspection~\cite{badnets,blind,Blended}, graph data lacks straightforward indicators. In most graph datasets, discerning the true label of a node solely by observing its topological structure and node features is inherently difficult. This obscurity complicates the identification of poisoned nodes, as adversarial manipulations may not produce immediately obvious anomalies.

Moreover, triggers in graph backdoor attacks exhibit high variability in their characteristics. They can take on any shape or pattern, be located anywhere within the graph, and possess indeterminate sizes. Triggers may manifest as either sparse or dense subgraphs, further complicating their detection. The unpredictable nature of these triggers means that defenders cannot rely on fixed assumptions about their structure or properties. Consequently, methods that focus on detecting specific types of subgraphs, such as dense-subgraph detection techniques~\cite{dense-detection}, become ineffective against a broader range of trigger patterns. This necessitates the development of versatile defense mechanisms capable of handling a diverse array of trigger forms.

Additionally, the flexibility in trigger design allows attackers to craft triggers that are highly stealthy and resilient against existing defense strategies. The ability to embed triggers without significantly altering the overall graph distribution or introducing barely noticeable anomalies requires advanced detection techniques that go beyond superficial inspections of node features or local graph structures. Defenders must therefore prepare to cope with a variety of triggers, employing comprehensive strategies that consider both internal node correlations and external influences within the graph.

These challenges underscore the complexity of developing effective defenses against graph backdoor attacks. The inherent difficulty in identifying malicious manipulations within graph data, combined with the diverse and adaptable nature of potential triggers, necessitates innovative approaches that can robustly detect and mitigate backdoor threats while preserving the integrity and performance of legitimate GNN applications.

\section{Motivation}
\label{apd:motivation}

In our threat model, a backdoor attack is considered successful if it primarily alters a victim node’s prediction when a trigger subgraph is present, which requires the trigger to exert significant influence.
Attackers can achieve this by designing a trigger as either 1) a multi-node subgraph whose collective pattern is unique, or 2) a subgraph containing a few nodes with abnormally high influence, or 3) a strategic combination of both. Indeed, we theoretically prove (Theorem~\ref{thm:dichotomy}, Appendix~\ref{sec:theoretical_framework}) that any non-trivial attack's influence provably manifests in at least one of these two mechanisms. This suggests two detection angles: the first type of trigger can be revealed by its internal correlations, while the second can be identified by its external influence. This leads us to analyze graph backdoors from two perspectives:

\textbf{Internal perspective}. This perspective focuses on the unique patterns within triggers. To manipulate a victim node's classification without affecting others, trigger patterns are often designed to be easily memorized by the model, thereby exhibiting strong statistical correlation. A masked reconstruction task is well suited to capture this internal signal. Accordingly, we use masked reconstruction to quantify it. Consequently, for a well-trained model, masked portions of a trigger should be reconstructed with significantly lower error than parts of benign subgraphs.

\textbf{External perspective}. This perspective examines how triggers influence a victim's prediction. Even when triggers exhibit no obvious internal anomalies, effective triggers tend to induce disproportionate changes in the model’s decision process for the victim (and often trigger's neighborhood). To capture each node’s impact, we perform a counterfactual masking analysis and measure the variance of its neighbors’ logits before and after masking. For nodes that truly belong to a trigger, the resulting variance is significantly larger than that induced by masking benign nodes.

\begin{figure}[t]
    \centering
    \begin{subfigure}[b]{0.33\linewidth}
        \centering
        \includegraphics[width=\textwidth]{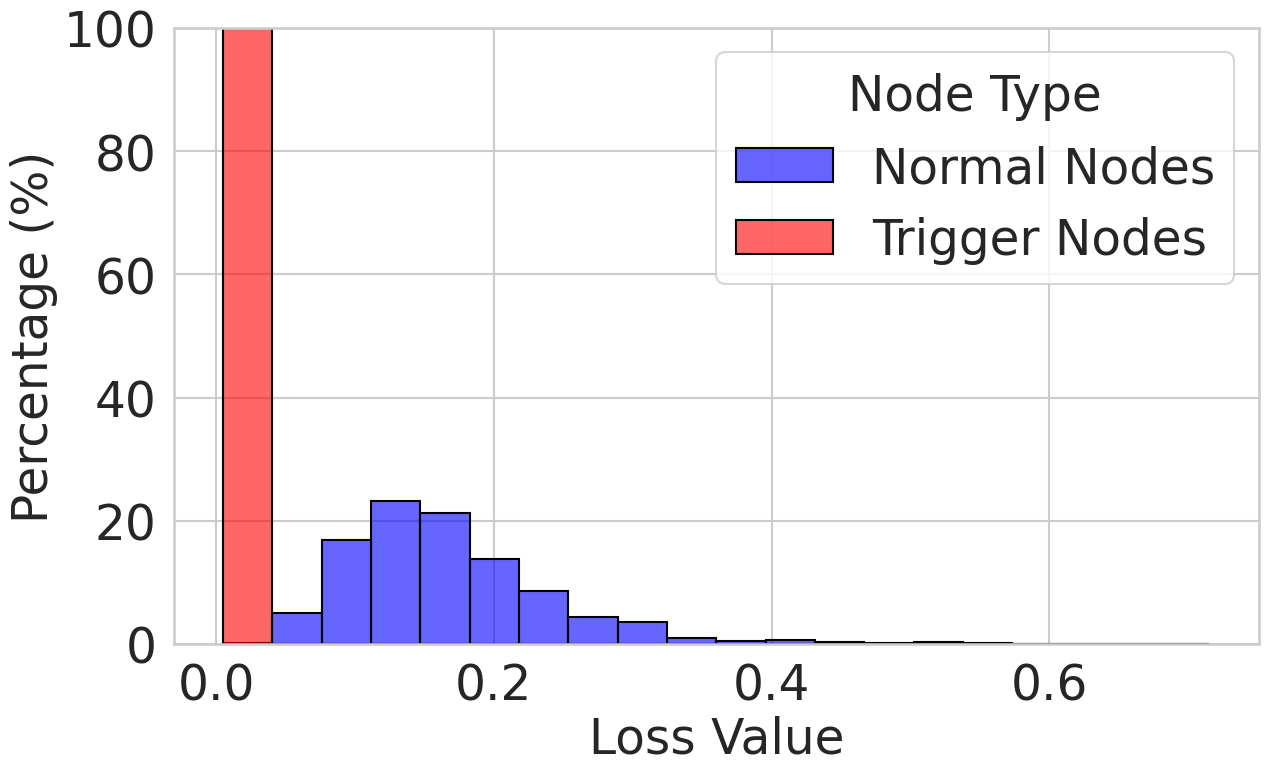}
        \caption{Internal correlation}
        \label{fig:internal}
    \end{subfigure}
    \hspace{0.02\linewidth}
    \begin{subfigure}[b]{0.33\linewidth}
        \centering
        \includegraphics[width=\textwidth]{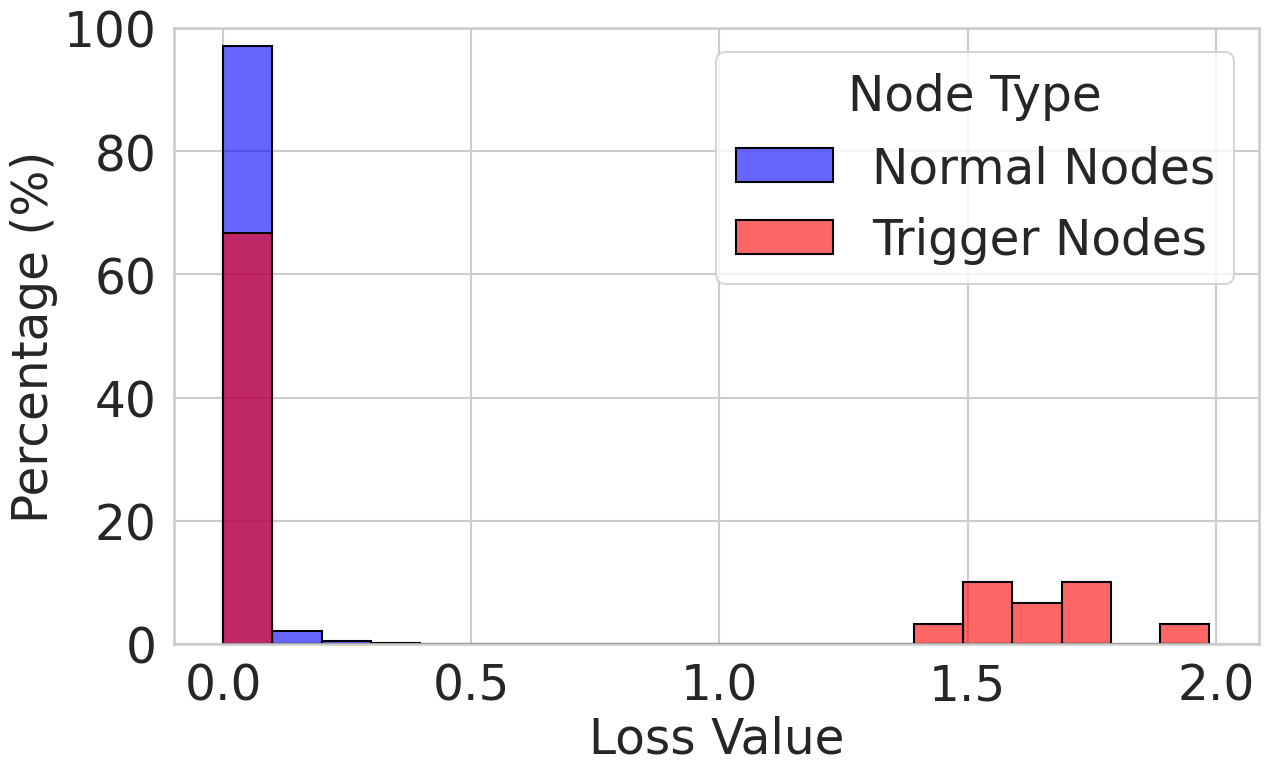}
        \caption{External influence}
        \label{fig:external}
    \end{subfigure}
    \caption{Comparison between normal nodes and trigger nodes.}
    \label{fig:combined_histograms_2}
    \vspace{-6pt}
\end{figure}

We empirically evaluated these two perspectives on the OGB-arXiv dataset using the DPGBA and GTA attacks, each injecting 565 triggers (3 nodes per trigger). To quantify internal correlation, we trained a 2-layer masked autoencoder and measured the reconstruction error on the backdoored graphs. To assess external influence, we trained a 2-layer backdoored GAT and computed the prediction variance induced by iteratively masking individual nodes. As shown in Figure~\ref{fig:combined_histograms}, the results indicate that our two viewpoints are complementary and capture different characteristics of the attacks.

Furthermore, we provide an additional example in Figure~\ref{fig:combined_histograms_2} demonstrating that backdoor triggers remain distinguishable from normal nodes under both perspectives. These results are obtained from the DPGBA attack on the Cora dataset, using 10 triggers (3 nodes each).

The generality of these findings is further supported by experiments across multiple datasets and attack types (see Sections~\ref{sec:defense_performance} and~\ref{sec:node_level}). We also analyze the conditions under which these approaches may fail through detailed ablation studies and security analyses (Sections~\ref{sec:ablation_study} and~\ref{sec:adaptive}).

\section{Score-to-Node Conversion Details}
\label{apd:score_to_node_details}
\label{apd:deviation}

\noindent\textbf{Overview.}
This section details how \system{} converts fused node-level deviation scores into suspicious victim groups and trigger sets. The conversion has five steps: (i) propagate node scores to labeled training nodes for victim scoring, (ii) place a label-free valley cutoff on the propagated training-node scores, (iii) infer supported label groups from the valley-selected anchor nodes, (iv) construct victim groups with a single-target refinement or group-wise multi-target rule, and (v) recover trigger nodes from the neighborhoods of the retained victim groups.

\noindent\textbf{Victim-score propagation.}
Let $s_i$ be the fused deviation score of node $v_i$. To emphasize large deviations, we set $L^0_i=s_i^r$ with $r\geq3$. We then propagate these scores on a modified adjacency matrix. Let $V_{\mathrm{train}}$ be the training nodes and define $A^{\mathrm{train}}_{ij}=1$ iff $v_i,v_j\in V_{\mathrm{train}}$ and $(v_i,v_j)\in\mathcal{E}_T$. We remove edges between training nodes and add self-loops:
\[
A' = A - A^{\mathrm{train}} + I,
\]
where $A$ is the original adjacency matrix and $I$ is the identity matrix. With $D'$ denoting the degree matrix of $A'$, we propagate
\[
L^{p+1}=D'^{-\frac{1}{2}}A'D'^{-\frac{1}{2}}L^p,\quad p=0,\ldots,P-1.
\]
The propagated training-node score is $\tilde{s}_i=L^P_i$ for $v_i\in V_{\mathrm{train}}$. Removing training-training edges reduces score diffusion among benign training nodes, while self-loops preserve high deviation on suspicious nodes. Propagating $P$ times matches the victim model's message-passing depth, allowing trigger evidence to reach the training victims it influences. 

\noindent\textbf{Bimodality-aware valley cutoff.}
After deviation propagation, each training node receives a propagated score $\tilde{s}_i$.
The valley cutoff is motivated by the empirical two-population structure of poisoned graphs: benign nodes tend to concentrate in a low-deviation mode, while poison-related nodes form a separated upper-deviation mode.
The low-density valley between the two modes provides a natural boundary. 

To avoid forcing a suspicious set when such a separation is absent, \system{} first checks whether the propagated score distribution exhibits a stable bimodal structure.
In our implementation, this is done using a standard bimodality-coefficient test on the propagated scores.
If no reliable upper-deviation mode is detected, \system{} abstains from victim/trigger localization and skips the unlearning stage.

This design directly addresses the clean-graph case: on graphs without backdoor-induced score separation, PRAETORIAN does not artificially select suspicious nodes, thereby avoiding false alarms introduced solely by the cutoff rule.
This prevents \system{} from reporting suspicious victim/trigger groups merely because a largest gap always exists in a sorted score list.
In other words, \system{} only proceeds when the scores provide evidence of a separated poison-related population; otherwise, it returns the normally trained model without unlearning.

When a bimodal structure is detected, we estimate the valley without labels by sorting the propagated scores in descending order,
\[
\tilde{s}_{(1)}\ge \tilde{s}_{(2)}\ge\cdots\ge\tilde{s}_{(m)}.
\]
To reduce sensitivity to benign long-tail scores, we apply the largest-gap criterion only within the top $10\%$ of the ranked nodes, where suspicious victim/trigger-related nodes are expected to concentrate.
Let $K$ denote the number of nodes in this high-deviation region. We report the sensitivity to K in Appendix M. We select
\[
k^\star=\arg\max_{1\le k<K}
\left(\tilde{s}_{(k)}-\tilde{s}_{(k+1)}\right),
\]
and place the cutoff at
\[
\theta_{\mathrm{val}}=
\frac{\tilde{s}_{(k^\star)}+\tilde{s}_{(k^\star+1)}}{2}.
\]
The upper-mode seed set is then
\[
\mathcal{H}=\{v_i\in V_{\mathrm{train}}:\tilde{s}_i\ge\theta_{\mathrm{val}}\}.
\]

The resulting seed set must further contain at least one label-supported group, i.e., $\exists c, |\mathcal{H}_c|\ge \tau$.
If no such group exists, \system{} abstains.
We emphasize that this is a practical score-to-node conversion rule rather than a theoretical requirement of the EI/IC decomposition.

\noindent\textbf{Minimum-support label grouping.}
Labels are used only after the label-free anchor set is obtained. For each observed label $c$, we form
\[
\mathcal{H}_c=\{v_i\in\mathcal{H}:y_i=c\}.
\]
We retain label groups with nontrivial anomalous support:
\[
\mathcal{C}_{\mathrm{sus}}=\{c:|\mathcal{H}_c|\ge\tau\},\quad \tau=5.
\]
The threshold $\tau$ is not the anomaly boundary; the anomaly boundary is determined by the valley cutoff above. Instead, $\tau$ is a small anti-noise support filter. A learnable node-level backdoor typically requires repeated trigger-target associations during training, while one or two isolated high-score nodes provide weak evidence for a distinct target-specific backdoor and are more likely to arise from benign nodes near the cutoff. We therefore set $\tau=5$ as a fixed minimum-support rule.

\noindent\textbf{Single-target refinement.}
If only one label group passes the support filter, i.e., $\mathcal{C}_{\mathrm{sus}}=\{\hat{y}_t\}$, we treat it as the standard single-target case and infer $\hat{y}_t$ as the target label. In this case, the valley-selected anchor set identifies the target label, while the final victim boundary further uses label consistency. Specifically, we scan the sorted training-node list $v_{(1)},\ldots,v_{(m)}$ in descending propagated score order and stop after observing $S=3$ consecutive nodes whose labels differ from $\hat{y}_t$. Let
\[
j^\star=\min\{j\ge S: y_{(j-S+1)}\ne\hat{y}_t,\ldots,y_{(j)}\ne\hat{y}_t\},
\]
with $j^\star=m+1$ if no such index exists. The identified victim group is
\[
\mathcal{V}^{I\!-\!Vic}_{\hat{y}_t}
=\{v_{(j)}:1\le j<j^\star,\ y_{(j)}=\hat{y}_t\}.
\]
This refinement does not assume the target label in advance; the label is inferred from the valley-selected anchor set. We report the sensitivity to $S$ in Appendix~\ref{app:additional_robustness}.

\noindent\textbf{Multi-target case.}
If multiple label groups pass the support filter, we do not apply the single-target stopping rule. Instead, each supported group is treated as one suspicious target group,
\[
\mathcal{V}^{I\!-\!Vic}_{c}=\mathcal{H}_c,\quad c\in\mathcal{C}_{\mathrm{sus}}.
\]
This avoids assuming a single target label and keeps the procedure applicable to multi-target or ALL2ALL-style settings.

\noindent\textbf{Trigger recovery.}
For each retained victim group $\mathcal{V}^{I\!-\!Vic}_c$, we search its $P$-hop neighborhood in the original poisoned graph $\mathcal{G}_T$; the edge-removed graph above is used only for score propagation. Let $\mathcal{N}_P(\cdot;\mathcal{G}_T)$ denote the $P$-hop neighborhood in $\mathcal{G}_T$. Candidate trigger nodes are
\[
\mathcal{B}_c=\mathcal{N}_P(\mathcal{V}^{I\!-\!Vic}_c;\mathcal{G}_T)\setminus
\bigcup_{c'\in\mathcal{C}_{\mathrm{sus}}}\mathcal{V}^{I\!-\!Vic}_{c'}.
\]
We retain high-deviation candidates using the victim-group boundary:
\[
\mathcal{V}^{I\!-\!Tri}_c=
\{v_i\in\mathcal{B}_c: L^P_i\geq
\min_{v_j\in\mathcal{V}^{I\!-\!Vic}_c}L^P_j\}.
\]
This recovers unlabeled trigger nodes by anchoring them to the suspicious labeled victims they influence, without assuming the trigger size.

\noindent\textbf{Group-wise unlearning.}
If only one label group remains, the procedure reduces to the standard single-target case. If multiple groups remain, \system{} applies the same trigger recovery and unlearning objective to each group. Thus multi-target and ALL2ALL-style settings are handled as multiple suspicious label groups; the defender does not need to know the number of target labels or the source-to-target mapping.

\section{The details of Attacks}
\label{apd:attack_details}

\noindent\textbf{SBA}~\cite{SBA}: SBA employs randomly generated graphs as triggers in graph backdoor attacks. It leverages the Erdős-Rényi (ER) model~\cite{Gilbert1959RandomGraphs} to create random subgraph structures with a limited number of nodes. Each trigger node is assigned random features, effectively embedding the trigger subgraph into the target graph without introducing discernible patterns.

\noindent\textbf{GTA}~\cite{GTA}: GTA utilizes a trigger generator that crafts subgraphs as triggers tailored to individual samples. The trigger generator is optimized exclusively based on the backdoor attack loss, disregarding any constraints related to trigger detectability. This allows GTA to produce highly effective, sample-specific triggers that enhance the attack success rate.

\noindent\textbf{UGBA}~\cite{UGBA}: UGBA selects representative and diverse nodes as poisoned nodes to fully utilize the attack budget. It employs an adaptive trigger generator that is optimized with a constraint loss, ensuring that the generated triggers closely resemble the target nodes. This similarity helps UGBA evade detection while maximizing the effectiveness of the backdoor attack.

\noindent\textbf{DPGBA}~\cite{DPGBA}: DPGBA introduces an adversarial learning strategy to generate in-distribution triggers. It proposes a novel loss function that guides the trigger generator to produce efficient in-distribution triggers, making them indistinguishable from legitimate subgraphs. This approach enhances the stealthiness and effectiveness of the backdoor attacks by ensuring that the triggers blend seamlessly with the graph’s natural structure.

\section{The details of Defenses}\label{apd:defense_details}

\noindent\textbf{Baseline tuning and reporting protocol.}
For a fair comparison, we carefully tune each baseline under the same training-time defense protocol. 
When official implementations and recommended hyperparameters are available, we follow the original papers and further tune their key hyperparameters on the validation set. 
When official configurations are not available, we perform a validation-based grid search over the main hyperparameters that control defense strength. 
For each baseline, we report the best validation-selected configuration, and the corresponding test performance is reported in the main results. 
No test labels, test-time trigger information, or attack-specific test statistics are used for tuning either \system{} or the baselines. 
This protocol is intentionally favorable to the baselines: for each method, we report its strongest performance obtained under validation-based tuning rather than a single default configuration.

\noindent\textbf{GNNGuard}~\cite{GNNGuard}: GNNGuard is a robust defense method for GNNs that protects against adversarial attacks by leveraging node similarity to filter out adversarial edges. It employs a multi-stage defense strategy that dynamically adjusts edge weights during training, thereby enhancing the model’s resilience to structural perturbations.

\noindent\textbf{SimilarityPrune (SP)}~\cite{UGBA}: SP is designed to prune edges that link nodes with low similarity, effectively removing triggers that significantly differ from clean nodes. By eliminating these dissimilar connections, SP reduces the impact of malicious subgraphs on the GNN’s predictions.

\noindent\textbf{Outlier Detection (OD)}~\cite{DPGBA}: OD introduces a graph auto-encoder to identify and filter out nodes with high reconstruction loss, which are considered outliers compared to clean nodes. By removing these outlier nodes, OD effectively eliminates triggers that deviate from the normal node distribution.

\noindent\textbf{RIGBD}~\cite{RIGBD}: RIGBD is a robust defense method for GNNs that identifies victim nodes by detecting high prediction variance through random edge dropping. Once identified, RIGBD applies robust learning techniques to these victim nodes to counteract the influence of trigger patterns, thereby mitigating the impact of backdoor attacks.

\section{The Calculation of ASR and CA}\label{apd:asr_ca_calculations}

\subsection{Attack Success Rate (ASR)}
ASR measures the proportion of backdoored victim nodes in the testing set that are incorrectly classified into the target class when the trigger is present.

\[
\text{ASR} = \left( \frac{\text{\# of victim nodes classified as target class}}{\text{Total \# of filtered victim nodes}} \right) 
\]
where "filtered victim nodes" means the victim nodes that are not in the target class.

\subsection{Clean Accuracy (CA)}

CA assesses the GNN’s performance on clean, non-backdoored samples, indicating its ability to maintain accuracy on legitimate data.
\[
\text{CA} = \left( \frac{\text{Number of clean samples correctly classified}}{\text{Total number of clean samples}} \right) 
\]

\section{Comparison Results Analysis}
\label{app:detailed_analysis}
Compared with GNNGuard, which improves robustness by down-weighting dissimilar neighbors via cosine similarity during training, \system{} adopts a two-stage approach. It first trains a benign model on a trigger-filtered graph ( removing identified trigger nodes and victim nodes ) to capture normal patterns. It then reintroduces the triggers and victim nodes for a robust fine-tuning phase that explicitly attenuates their influence on suspected victims (and on the triggers themselves). This decoupling of normal patterns from adversarial signals maintains normal-pattern learning and mitigates trigger insertion at test time, leading to lower ASR and higher CA.

For OD and SP, suspicious trigger nodes are filtered solely based on distributional deviation or cosine similarity and then isolated before training. However, this process also excludes many benign nodes, and simply detecting suspicious nodes without subsequent robust learning to remove residual attack patterns leaves the model vulnerable during testing, as it is difficult to guarantee the removal of all malicious triggers. Compared to OD and SP, \system{} accurately identifies both victim and trigger nodes. Moreover, instead of merely detecting these nodes, \system{} leverages the identified ones for robust learning, resulting in superior performance.

In comparison with RIGBD, which identifies only victim nodes, \system{} detects both victim and trigger nodes. This dual identification allows us to act against backdoor attacks directly (rather than only via victim proxies), face the attacker pattern more directly, and provide more samples covering the attack patterns as supervision for unlearning during robust learning. In addition, explicitly handling triggers helps stabilize neighborhoods when a trigger connects to multiple nodes, reducing residual influence at test time. Together, these factors reduce adversarial pattern insertion and enhance defense performance (lower ASR without sacrificing CA).

\noindent\textbf{The possible reason for the performance gap of RIGBD:} We also observe a gap relative to the results reported in the RIGBD paper, which appears to stem from RIGBD’s sensitivity to attack strength: it is effective under higher budgets but brittle under subtler settings. For example, under UGBA, changing the configuration from \textit{victim\_size=40}, \textit{prune\_thr=0.5} to \textit{victim\_size=10}, \textit{prune\_thr=0.1} leads to detection failure (ASR changes from 0.44\% to 92.44\%). This failure under high-budget settings likely stems from RIGBD's reliance on prediction variance under edge perturbation. RIGBD assumes that removing adversarial edges will cause significant fluctuations in model output. However, in high-intensity attacks (e.g., dense trigger subgraphs), the adversarial connectivity is highly redundant. Even if a subset of edges is pruned during RIGBD's sensitivity check, the remaining edges maintain sufficient information flow to preserve the attack pattern. Consequently, the model's prediction remains low variance, missing the detection of victim nodes.

\vspace{-10pt}
\section{Detailed Ablation Analysis}\label{app:ablation_ic_ex_analysis}
In this section, we provide a granular analysis of the ablation study results presented in Section~\ref{sec:ablation_study}, focusing on how specific attack characteristics affect each component.

\textbf{Analysis of \system$\backslash$E (Reliance on Internal Correlation):}
The capability of \system$\backslash$E is significantly impaired under attacks that deploy a single trigger node per victim (e.g., GTA and UGBA) on the OGB-arxiv and PubMed datasets. When a lone trigger node is removed or masked, the structural integrity of the potential trigger subgraph collapses, effectively obscuring the backdoor signature. This reveals a critical limitation: \textit{internal correlation relies on the presence of multi-node substructures and becomes less effective against sparse, single-node triggers.}

\textbf{Analysis of \system$\backslash$I (Reliance on External Influence):}
\system$\backslash$I faces challenges in distinguishing trigger nodes from benign nodes under attacks that generate highly robust, dense trigger patterns (e.g., DPGBA and SBA). For instance, DPGBA constructs fully connected trigger subgraphs (cliques). These patterns remain intact through message passing even when individual nodes are masked, maintaining their adversarial influence. \textit{This resilience highlights the inadequacy of relying solely on external influence against structurally robust triggers.}

\textbf{Synergy in \system{}:}As shown in Table~\ref{tab:node_level}, the full \system{} model overcomes these individual limitations. By combining external checks (effective against sparse triggers) with internal correlation (effective against structural anomalies), \system{} achieves the necessary synergy to identify nearly all trigger nodes across diverse attack strategies.

\section{Addtional Ablation Study}
\label{apd:ablation_details}
\noindent\textbf{Weighted deviation score.}
Next, we examine our weighted-sum approach to combining internal-correlation and external-influence scores. Concretely, we run the victim/trigger identification procedure separately using $\mathcal{S}_{\mathrm{int}}$ and $\mathcal{S}_{\mathrm{ext}}$, obtaining $(N_v^{\mathrm{int}},N_t^{\mathrm{int}})$ and $(N_v^{\mathrm{ext}},N_t^{\mathrm{ext}})$ as the numbers of detected potential victims and triggers. We use $N_vN_t$ as a reliability proxy and compute:
\begin{equation}
\mathcal{S}
=
\big(N_v^{\mathrm{int}} N_t^{\mathrm{int}}\big)\,\mathcal{S}_{\mathrm{int}}
\;+\;
\big(N_v^{\mathrm{ext}} N_t^{\mathrm{ext}}\big)\,\mathcal{S}_{\mathrm{ext}}.
\end{equation}
We introduce \system$\sim$1:1, which assigns equal weights to both scores. We focus on the PubMed dataset for all attack types to highlight differences.
\begin{table}[t]
\centering
\caption{Comparison of weighting schemes.}
\label{tab:weight}
\scriptsize
\resizebox{0.82\linewidth}{!}{
\begin{tabular}{c|c|cc|cc|cc} 
\hline
\multirow{3}{*}{Method} & \multirow{3}{*}{Attack} 
& \multicolumn{2}{c|}{\multirow{2}{*}{Defense}} 
& \multicolumn{4}{c}{Node Detection} \\
\cline{5-8}
& & \multicolumn{2}{c|}{} 
& \multicolumn{2}{c|}{Victim Nodes} 
& \multicolumn{2}{c}{Trigger Nodes} \\
& & ASR(\%) & CA(\%) & P(\%) & R(\%) & P(\%) & R(\%) \\
\hline
\multirow{4}{*}{\makecell{\system\\$1:1$}}   
& SBA   & 8.24 & 85.11 & 80.00  & 31.00  & 65.60  & 20.58 \\
& GTA   & 0.85 & 83.26 & 100.00 & 61.50  & 94.85  & 61.50 \\
& UGBA  & 2.25 & 84.41 & 100.00 & 98.00  & 100.00 & 98.00 \\
& DPGBA & 9.55 & 84.18 & 84.56  & 50.50  & 93.61  & 44.83 \\
\hline
\multirow{4}{*}{\makecell{\system\\(Ours)}}   
& SBA   & 0.91 & 85.21 & 100.00 & 91.50  & 97.60  & 70.33 \\
& GTA   & 0.27 & 83.18 & 96.79  & 75.55  & 96.68  & 75.55 \\
& UGBA  & 3.67 & 84.47 & 100.00 & 94.50  & 100.00 & 94.50 \\
& DPGBA & 2.24 & 84.06 & 97.10  & 100.00 & 100.00 & 100.00 \\
\hline
\end{tabular}
}
\vspace{-6pt}
\end{table}
\begin{table}[t]
\centering
\caption{Comparison of different selection methods.}
\label{tab:top_K}
\scriptsize
\setlength{\tabcolsep}{3pt}
\renewcommand{\arraystretch}{0.9}
\resizebox{0.82\linewidth}{!}{
\begin{tabular}{c|cc|cc|cc} 
\hline
\multirow{3}{*}{Selection Percentage} 
& \multicolumn{2}{c|}{\multirow{2}{*}{Defense}} 
& \multicolumn{4}{c}{Node Detection} \\ 
\cline{4-7}
& \multicolumn{2}{c|}{} 
& \multicolumn{2}{c|}{Victim Nodes} 
& \multicolumn{2}{c}{Trigger Nodes} \\ 
\cline{2-7}
& ASR(\%) & CA(\%) & P(\%) & R(\%) & P(\%) & R(\%) \\ 
\hline
Top 1\%  & $65.84$ & $80.93$ & $19.43$ & $97.25$ & $35.15$ & $91.29$ \\
Top 3\%  & $66.11$ & $79.01$ & $5.53$  & $98.38$ & $12.25$ & $94.50$ \\
Top 5\%  & $42.29$ & $79.86$ & $3.45$  & $99.00$ & $7.59$  & $96.83$ \\
Top 7\%  & $46.24$ & $79.68$ & $2.65$  & $99.13$ & $5.43$  & $97.08$ \\
Top 10\% & $40.63$ & $80.26$ & $2.27$  & $99.38$ & $3.90$  & $98.88$ \\
Top 15\% & $34.95$ & $80.83$ & $2.05$  & $99.38$ & $2.53$  & $98.88$ \\ 
\hline
\system{} (Ours) & $1.03$ & $84.24$ & $95.39$ & $91.63$ & $95.92$ & $85.88$ \\
\hline
\end{tabular}
}
\vspace{-6pt}
\end{table}

Table~\ref{tab:weight} shows that: (1) Except for the UGBA attack, where \system$\sim$1:1 performs slightly better than \system{}, our primary model \system{} generally achieves superior performance by maintaining lower ASR and higher CA. (2) \system{} consistently exhibits higher precision and recall in identifying both victim and trigger nodes. These results indicate that the merit of dynamically assigning weights to internal-correlation and external-influence scores can intrinsically suppress noise, helping pinpoint poisoned nodes more accurately and enhancing overall defense.



\noindent\textbf{Node selection.}
\label{sec:top_k}
We also evaluate our method for selecting victim and trigger nodes via a variant, \system$\backslash$S, which regards the top $K\%$ most suspicious trigger nodes and designates their connected nodes as victim nodes. Due to space limits, we only show results on the PubMed dataset.

We observe in Table~\ref{tab:top_K}: (i) As $K$ grows, recall for both victim and trigger nodes remains above $95\%$. However, precision drops due to increased false positives among benign nodes. (ii) Although increasing $K$ lowers the ASR, it remains above $40\%$, while the CA also decreases significantly (over $3.3\%$) relative to \system{} at any chosen $K$. These findings indicate that our design accurately pinpoints victim and trigger nodes with minimal false positives, thereby enhancing our defense performance.

\section{Hyperparameter Study}\label{app:additional_robustness}

\begin{table}[t]
\caption{The Performance of Different Architectures}
\centering
\resizebox{\linewidth}{!}{
\begin{tabular}{c|c|cc|cc|cc|cc} 
\hline
\multirow{3}{*}{Attack} & \multirow{3}{*}{Architecture} & \multicolumn{2}{c|}{\multirow{2}{*}{No Defense}} & \multicolumn{2}{c|}{\multirow{2}{*}{Defense}} & \multicolumn{4}{c}{Node Detection}                  \\ 
\cline{7-10}
                        &                                    & \multicolumn{2}{c|}{}                            & \multicolumn{2}{c|}{}                                     & \multicolumn{2}{c|}{Victim Nodes} & \multicolumn{2}{c}{Trigger Nodes}  \\ 
\cline{3-6}
                        &                                    & ASR(\%)  & CA(\%)                                & ASR(\%) & CA(\%)                                          & P(\%) & R(\%)        & P(\%) & R(\%)         \\ 
\hline
\multirow{3}{*}{DPGBA}  & GAT                                & $94.35$ & $80.67$                              & $0.17$ & $81.85$                                        & $91.21$      & $100.00$             & $100.00$         & $100.00$              \\
                        & GCN                                & $94.52$ & $81.26$                              & $0.52$ & $83.56$                                        & $91.99$      & $92.00$              & $100.00$         & $92.00$               \\
                        & GIN                                & $94.52$ & $81.41$                              & $0.70$ & $79.41$                                        & $68.61$      & $100.00$             & $69.35$      & $97.33$           \\ 
\hline
\multirow{3}{*}{UGBA}   & GAT                                & $94.13$ & $79.19$                              & $0.89$ & $81.93$                                        & $98.18$      & $100.00$             & $93.33$      & $100.00$              \\
                        & GCN                                & $89.60$ & $79.93$                              & $4.44$ & $83.63$                                         & $98.18$      & $100.00$             & $94.85$      & $100.00$              \\
                        & GIN                                & $93.24$ & $79.93$                              & $5.42$ & $83.41$                                        & $96.36$      & $98.00$              & $94.85$      & $98.00$               \\ 
\hline
\multirow{3}{*}{GTA}    & GAT                                & $97.87$ & $75.85$                              & $0.44$ & $76.89$                                        & $98.00$          & $86.00$              & $94.67$      & $86.00$               \\
                        & GCN                                & $98.44$ & $75.19$                              & $0.00$     & $76.15$                                        & $98.18$      & $90.00$              & $96.36$      & $90.00$               \\
                        & GIN                                & $98.40$   & $75.41$                              & $0.00$     & $71.93$                                        & $94.55$      & $100.00$             & $90.00$          & $100.00$              \\ 
\hline
\multirow{3}{*}{SBA}    & GAT                                & $49.78$ & $83.11$                              & $0.09$ & $84.30$                                        & $100.00$         & $99.91$          & $100.00$         & $83.72$           \\
                        & GCN                                & $51.91$ & $83.85$                              & $9.33$ & $84.18$                                        & $60.90$      & $52.00$              & $69.09$      & $34.00$               \\
                        & GIN                                & $46.84$ & $83.63$                               & $0.00$     & $83.78$                                        & $92.32$      & $96.00$              & $86.36$      & $72.00$               \\
\hline
\end{tabular}
}
\label{tab:architecture}
\end{table}

\begin{table}[t]
\caption{Highest-ranked non-victim node position}
\centering
\small
\begin{tabular}{lccc}
\hline
\textbf{Metric\textbackslash Dataset} & \textbf{Cora} & \textbf{PubMed} & \textbf{OGB-Arxiv} \\
\hline
Average highest rank & 10.55 & 37.45 & 520.25 \\
Total victim nodes                      & 10    & 40    & 565    \\
\hline
\end{tabular}
\label{tab:highest_non_target_label_rank}
\end{table}
\begin{table}[t]
\centering
\caption{Sensitivity to the stop criterion.}
\label{tab:stop_criterion_sensitivity}
\scriptsize
\setlength{\tabcolsep}{4pt}
\renewcommand{\arraystretch}{1}
\resizebox{0.70\linewidth}{!}{
\begin{tabular}{c|cc|cc|cc}
\hline
\multirow{2}{*}{\#Different Labels ($S$)} 
& \multicolumn{2}{c|}{Defense} 
& \multicolumn{2}{c|}{Victim Nodes} 
& \multicolumn{2}{c}{Trigger Nodes} \\
& ASR(\%) & CA(\%) & P(\%) & R(\%) & P(\%) & R(\%) \\
\hline
1 & 1.60 & 77.24 & 95.71 & 91.50 & 100.00 & 85.51 \\
2 & 0.73 & 77.16 & 93.72 & 93.00 & 98.57 & 87.66 \\
3 & 0.93 & 77.46 & 91.36 & 94.50 & 98.57 & 89.17 \\
4 & 1.22 & 76.89 & 91.36 & 94.50 & 98.57 & 89.17 \\
5 & 0.79 & 77.10 & 90.61 & 94.50 & 98.57 & 88.67 \\
\hline
\end{tabular}
}
\vspace{-6pt}
\end{table}
\begin{table}[t]
\centering
\caption{Sensitivity to the cutoff search range \(p\).}
\label{tab:p_sensitivity}
\scriptsize
\setlength{\tabcolsep}{4pt}
\renewcommand{\arraystretch}{1}
\resizebox{0.70\linewidth}{!}{
\begin{tabular}{c|cc|cc|cc}
\hline
\multirow{2}{*}{Cutoff Range \(p\)} 
& \multicolumn{2}{c|}{Defense} 
& \multicolumn{2}{c|}{Victim Nodes} 
& \multicolumn{2}{c}{Trigger Nodes} \\
& ASR(\%) & CA(\%) & P(\%) & R(\%) & P(\%) & R(\%) \\
\hline
3\%--10\% & 0.00 & 78.12 & 97.0 & 100.0 & 96.8 & 100.0 \\
\hline
\end{tabular}
}
\vspace{-6pt}
\end{table}
\noindent\textbf{Model Architecture.}
We observe that our defense system, \system{}, utilizes a 2-layer GAT architecture, while both the attacker model and the benign model are based on a 2-layer GCN architecture. 

The decision to use a GAT architecture in our defense is motivated by its ability to assign different weights to each neighbor, thereby giving higher importance to nodes that significantly contribute to the mask prediction task. This capability allows GAT to better capture relevant information within the context of our defense. However, the architectural difference between the defense model and the attacker’s model may raise concerns about whether this inconsistency could influence the performance of \system.

To address this concern, we conducted two additional experiments for each attack on the Cora dataset to evaluate whether the architecture mismatch affects the performance.

The results, presented in Table~\ref{tab:architecture}, indicate that \system{} consistently achieves strong performance in accurately identifying both victim and trigger nodes, with averaged precision and recall values exceeding 80\%. Moreover, \system{} effectively defends against all attacks across model architectures, maintaining an ASR close to 0\% and preserving CA comparable to that of the clean graph.


\noindent\textbf{Sensitivity to Stop Criterion}
We stop the victim node selection when we encounter $S$ consecutive nodes with labels different from the target label. As shown in Table~\ref{tab:highest_non_target_label_rank} (which shows the rank of the highest non-victim node), the highest rank of a non-victim node is typically far from the majority of victim nodes. This separation already indicates that a small $S$ (e.g., $k{=}2$) is unlikely to cause the selection to stop prematurely and miss true victim nodes.

To further provide direct evidence for this robustness, we vary $S$ from 1 to 5 on Cora datasets. Table~\ref{tab:stop_criterion_sensitivity} presents the averaged results over all attacks, which show that \system{}'s performance is highly stable and not sensitive to this hyperparameter.

As $S$ varies, all key metrics remain remarkably stable: ASR stays within $0.79$--$1.40\%$ and CA within $76.89$--$77.46\%$. Victim precision and recall remain high (ranging from $90.61$--$95.71\%$ and $91.5$--$94.5\%$, respectively), while trigger precision and recall are also stable ($98.57$--$100\%$ and $85.50$--$89.17\%$). This stability indicates that \system{} is robust to the specific choice of $S$.

By default, we set $S{=}2$. This provides a strong, balanced trade-off for victim identification (precision $93.72\%$ and recall $93.0\%$), compared to $S{=}1$ ($95.71\%$ precision and $91.5\%$ recall) and $S{=}3$ ($91.36\%$ precision and $94.5\%$ recall).

\noindent\textbf{Sensitivity to cutoff-related hyperparameters.}
We evaluate the sensitivity of the score-to-node conversion on Cora dataset to the cutoff search range \(p\).
As shown in Table~\ref{tab:p_sensitivity}, varying \(p\) from 3\% to 10\% does not change the defense outcome.
The support threshold \(\tau\) is used only after the label-free valley cutoff to remove isolated noisy seeds; it does not determine the anomaly boundary.
In our experiments, retained suspicious groups are far above this support threshold, so reasonable variations of \(\tau\) do not affect the selected groups. We fix \(\tau=5\) across all datasets and attacks.
These results suggest that the cutoff is driven by a stable score separation rather than by carefully tuned post-processing parameters.

\section{Detailed Analysis and Additional Adaptive Attacks}
\label{apd:additional_security_analysis}

\subsection{White-Box Attacks}
\label{sec:baseline}

In this section, we first consider a scenario where the attacker only knows partial information about our defense (e.g., model architecture). Therefore, we introduce a sophisticated variant of the DPGBA attack, $DPGBA_{Partial}$, which aims to hide final output representations of trigger nodes. This is achieved by training a \textit{supervised} GAT model—sharing the same architecture as our defense—to distinguish between trigger nodes and benign nodes based on their final representations.

We further consider a scenario where the attacker fully replicates our defense model and uses it to craft triggers, resulting in $DPGBA_{Full}$. Here, the supervised GNN from $DPGBA_{Partial}$ is replaced by our defense model, allowing the attacker to directly adapt the trigger generator to the defense model’s behavior. This provides a stricter test of our defense under the assumption that attackers possess complete knowledge of our defense details.

We present the results in Table~\ref{tab:adaptive}. From the table, we observe the following: (i) Both adaptive attacks show a performance drop, with either ASR decreasing by over 20\% or CA decreasing by over 15\%. (ii) \system{} can still accurately identify trigger nodes and victim nodes, achieving recall and precision rates above 90\% for both attacks. (iii) With these two adaptive attacks, \system{} consistently outperforms in defense, maintaining an ASR close to 0\%. (iv) For $DPGBA_{Full}$, after applying the defense, the CA drops and performs worse than when no defense is applied. We further train model on a clean graph but testing on a poisoned one still yields poor performance (e.g., CA of 66.15\%), showing that trigger patterns heavily influence node classification. As a result, even if \system{} detects most of the victim and trigger nodes, the CA still drops.

The effectiveness of \system{} can be attributed to two key factors:  
(1)~\textit{Targeting Internal Correlations and External Influences.} \system{} emphasizes both internal correlations among trigger nodes and their external influence, making mere manipulation of final representations insufficient to hide trigger structures (even in $DPGBA_{Specific}$).  
(2)~\textit{Deviation Score Propagation.} In $DPGBA_{Full}$, each trigger subgraph contains three trigger nodes. Although two nodes actually evade initial detection, the third node consistently exhibits strong correlations, allowing our deviation score propagation to identify and remove the entire trigger subgraph and thus reduce ASR to nearly 0\%.

\noindent\textbf{Conclusion:} \system{} is still effective even if the attacker knows the details of our defense models.

\begin{table}[t]
\caption{White-box Attacks}
\centering
\resizebox{\linewidth}{!}{
\begin{tabular}{c|cc|cc|cccc} 
\hline
\multirow{3}{*}{Attack} & \multicolumn{2}{c|}{\multirow{2}{*}{No Defense}} & \multicolumn{2}{c|}{\multirow{2}{*}{Defense}} & \multicolumn{4}{c}{Node Detection}                 \\ 
\cline{6-9}
                        & \multicolumn{2}{l|}{}                            & \multicolumn{2}{l|}{}                                     & \multicolumn{2}{l}{Victim Nodes} & \multicolumn{2}{l}{Trigger Nodes}  \\ 
\cline{2-5}
                        & ASR(\%)  & CA(\%)                                & ASR(\%) & CA(\%)                                          & P(\%) & R(\%)       & P(\%) & R(\%)         \\ 
\hline
Partial              & $72.26$ & $80.37$                               & $2.96$ & $83.19$                                        & $91.21$      & $100.00$         & $98.11$      & $100.00$           \\
Full                & $98.09$ & $71.56$                               & $0.00$ & $68.07$                                        & $92.55$      & $94.00$          & $100.00$     & $86.00$            \\
\hline
\end{tabular}
}

\label{tab:adaptive}
\end{table}

\subsection{One-Node Clean Label Attack}
\label{sec:clean_label}

We are also aware of recent natively clean-label graph backdoor methods~\cite{cgba,cgba2}. However, to the best of our knowledge, no public implementations are currently available, making direct reproduction infeasible. We therefore construct clean-label variants of UGBA and DPGBA to stress-test the same threat factor under reproducible settings.

To explore this scenario, we consider three variants of UGBA and DPGBA attacks: $UGBA_{CO}$, $DPGBA_{CO}$, and $UGBA_{CO}+DPGBA_{CO}$. In $UGBA_{CO}$, we restrict the trigger size to 1 and select victim nodes whose labels already match the target class. Likewise, in $DPGBA_{CO}$, the trigger size is 1, and victim nodes come solely from the target class. The combined version, $UGBA_{CO}+DPGBA_{CO}$, takes the same setting but simultaneously applies UGBA’s objective of increasing similarity among connected nodes and DPGBA’s goal of obfuscating the distribution of trigger nodes.

As shown in Table~\ref{tab:clean_vs}, our findings are threefold. First, the initial ASR for $DPGBA_{CO}$ and $UGBA_{CO}+DPGBA_{CO}$ drops sharply from $\geq90\%$ to 53.7\% and 21.6\%, respectively. Second, for these two attacks, \system{}'s detection recall for trigger and victim nodes is low (below 30\%), and our defense reduces ASR by at most 35\%. Third, while $UGBA_{CO}$ maintains a high ASR ($>$90\%), \system{} detects most of its trigger nodes, reducing the post-defense ASR to just 6\%. This is because even though UGBA’s triggers are similar to victims, their overall distribution remains distinct from benign nodes, making them detectable.

\begin{table}[htbp]
\caption{Varying Trigger Size of Clean Label Attack}
\centering
\resizebox{\linewidth}{!}{
\begin{tabular}{c|c|cc|cc|cc|cc} 
\hline
\multirow{3}{*}{Attack}     & \multirow{3}{*}{TS} & \multicolumn{2}{c|}{\multirow{2}{*}{No Defense}} & \multicolumn{2}{c|}{\multirow{2}{*}{Defense}} & \multicolumn{4}{c}{Node Detection}                               \\ 
\cline{7-10}
                            &                     & \multicolumn{2}{c|}{}                            & \multicolumn{2}{c|}{}                                     & \multicolumn{2}{c}{Victim Nodes}               & \multicolumn{2}{c}{Trigger Nodes}  \\ 
\cline{3-6}
                            &                     & ASR(\%)  & CA(\%)                                & ASR(\%)  & CA(\%)                                         & P(\%) & \multicolumn{1}{l}{R(\%)} & P(\%) & R(\%)         \\ 
\hline
\multirow{5}{*}{DPGBA}      & $1$                 & $53.74$  & $82.82$                               & $18.09$  & $83.11$                                        & $50.83$       & $28.00$                        & $80.00$       & $26.00$            \\
                            & $2$                 & $80.26$  & $83.04$                               & $0.09$   & $81.85$                                        & $92.55$       & $98.00$                        & $100.00$      & $98.00$            \\
                            & $3$                 & $94.26$  & $80.74$                               & $0.00$   & $80.74$                                        & $94.55$       & $100.00$                       & $100.00$      & $98.67$            \\
                            & $4$                 & $96.17$  & $79.93$                               & $0.00$   & $80.30$                                        & $95.45$       & $100.00$                       & $100.00$      & $98.50$            \\
                            & $5$                 & $97.65$  & $78.89$                               & $0.00$   & $79.63$                                        & $94.55$       & $100.00$                       & $100.00$      & $98.00$            \\ 
\hline
\multirow{5}{*}{UGBA}       & $1$                 & $92.18$  & $81.48$                               & $6.76$   & $82.89$                                        & $84.00$       & $84.00$                        & $100.00$      & $84.00$            \\
                            & $2$                 & $92.44$  & $82.30$                               & $4.36$   & $83.11$                                        & $93.33$       & $94.00$                        & $100.00$      & $94.00$            \\
                            & $3$                 & $94.76$  & $80.15$                               & $1.24$   & $83.26$                                        & $90.00$       & $96.00$                        & $100.00$      & $96.00$            \\
                            & $4$                 & $92.09$  & $80.82$                               & $1.51$   & $82.37$                                        & $83.03$       & $98.00$                        & $100.00$      & $98.00$            \\
                            & $5$                 & $93.69$  & $80.67$                               & $0.44$   & $80.89$                                        & $89.70$       & $96.00$                        & $100.00$      & $95.00$            \\ 
\hline
\multirow{5}{*}{\makecell{$DPGBA$\\$+$\\$UGBA_{CO}$}} & $1$                 & $21.57$  & $83.33$                               & $13.91$  & $83.56$                                        & $48.33$       & $20.00$                        & $80.00$       & $18.00$            \\
                            & $2$                 & $52.35$  & $83.30$                               & $2.17$   & $83.19$                                        & $89.24$       & $84.00$                        & $98.82$       & $82.00$            \\
                            & $3$                 & $67.74$  & $83.41$                               & $0.09$   & $82.96$                                        & $86.36$       & $100.00$                       & $100.00$      & $95.33$            \\
                            & $4$                 & $68.70$  & $82.37$                               & $0.44$   & $82.59$                                        & $82.67$       & $86.00$                        & $100.00$      & $84.50$            \\
                            & $5$                 & $70.52$  & $82.89$                               & $2.09$   & $82.44$                                        & $83.80$       & $68.00$                        & $100.00$      & $64.00$            \\
\hline
\end{tabular}
}
\label{tab:clean_ts}
\end{table}

However, attackers might try enlarging the victim set to increase ASR. To evaluate this, we vary the victim size among \{10, 20, 40, 80, 160\} for three variants. Table~\ref{tab:clean_vs} shows that increasing victim size does not notably improve ASR, while \system{}’s detection performance actually rises. Once the victim set exceeds 40, \system{} detects more than 60\% of both victim and trigger nodes with above 90\% precision. This occurs because adding more victim nodes inherently introduces more trigger nodes and more attack/trigger patterns; Thus, the influence of the trigger nodes is enhanced and the correlations between trigger and victim nodes are strengthened, so that trigger nodes are more easily detected.

We also vary trigger sizes on three variants. The results are shown in Table~\ref{tab:clean_ts}. From the table, we observe that as the trigger size increases, the attacker achieves a higher ASR when no defense is applied. However, our defense also shows improved performance, with a higher detection rate for trigger nodes, leading to a lower ASR. This improvement occurs because, as the trigger size increases, the internal correlation among the trigger nodes becomes stronger, enabling us to detect them more effectively based on this higher correlation. These results suggest that our defense is not easily bypassed by simply increasing the trigger size. We also provide the curve (see Figure.~\ref{fig:clean_vs_ts} of how ASR changes with varying victim size and trigger size.

\begin{figure}[h] 
\centering
\begin{subfigure}[b]{0.4\linewidth} 
\centering
\includegraphics[width=\textwidth]{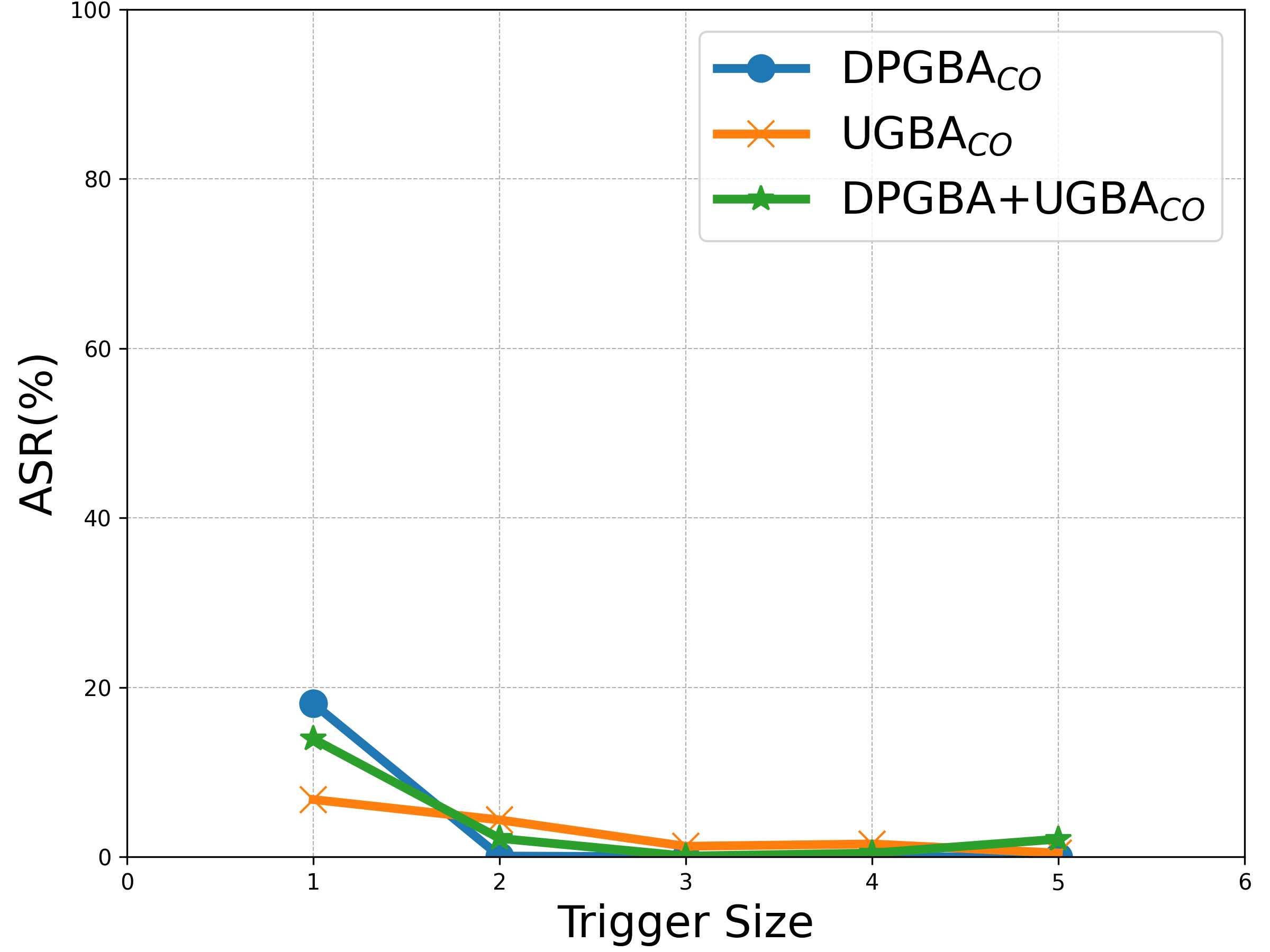}
\caption{Varying Trigger Size}
\label{fig:asr_ts}
\end{subfigure}
\hfill 
\begin{subfigure}[b]{0.4\linewidth}
\centering
\includegraphics[width=\textwidth]{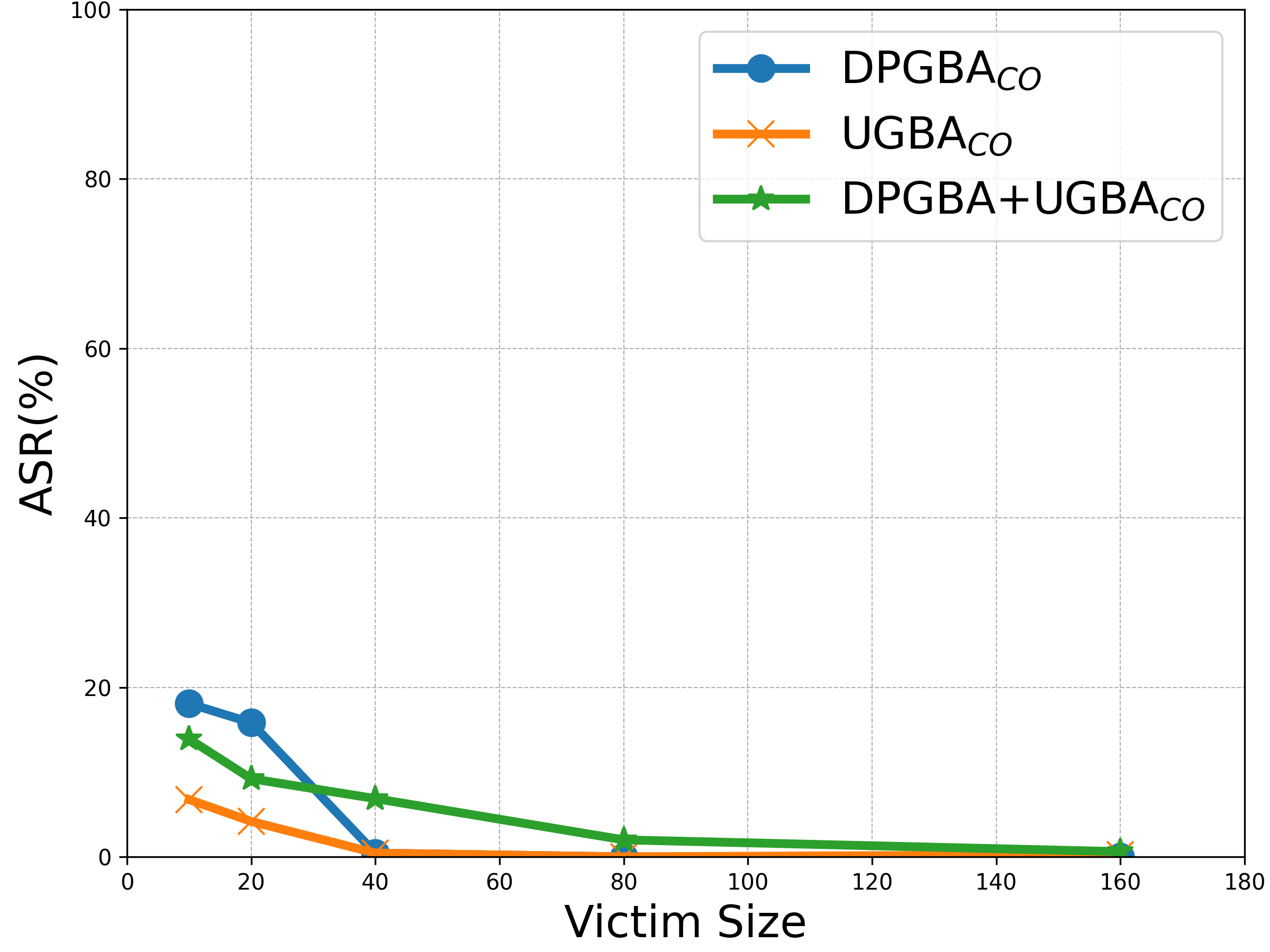} %
\caption{Varying Victim Size} 
\label{fig:asr_vs}
\end{subfigure}
\caption{Clean Label Attack}
\label{fig:clean_vs_ts}
\end{figure}

\noindent\textbf{Conclusion:} Although attackers may try one-node clean-label triggers to reduce the effectiveness of \system{}, the success of such approaches is limited by significant trade-offs. Even with variations in victim size or trigger size, attackers can achieve no more than an ASR of 18.1\%.

\subsection{Random Noise Disruption Attack}
\label{sec:random_noise_add}

Beyond one-node clean-label attacks, adversaries may try to remove internal correlations among trigger nodes by injecting random noise. This disrupts internal correlation checks and allows triggers larger than one node. To further evade external influence checks, attackers can pair clean-label attack strategies with establishing edges between victim nodes and all trigger nodes. We implement this idea using an SBA attack, where a Gaussian noise with a standard deviation 30 times that of the original features is added to every trigger node.

We vary the trigger sizes \{1, 3, 5, 10, 20, 30, 40, 50, 60, 70, 80, 90, 100\} and victim sizes \{10, 20, 40, 80, 160\} on the Cora dataset. We plot ASR and CA curves before and after defense to illustrate how ASR and CA change with different victim and trigger sizes.

As shown in Figure.~\ref{fig:random_noise_asr_ca}, we make four key observations: 
(i) for small victim sizes (\(\leq 40\)), the ASR remains low (\(\leq 20\%\)) after our defense, 
(ii) for small trigger sizes (\(\leq 5\)), all attempts achieve an ASR lower than 50\% except for the case with a victim size of 160,
(iii) increasing either the trigger or the victim size can raise the ASR, but also leads to CA drops, and 
(iv) to reach a substantial ASR (e.g., $\ge 80\%$), attackers needed to either use a large victim count ($\ge 80$) or a large trigger size, but this causes average CA drops of at least $10\%$.


What's more, we present additional results of our random noise disruption attack on the \textbf{PubMed dataset}. 

\begin{figure}[h]
\centering
\includegraphics[width=0.7\linewidth]{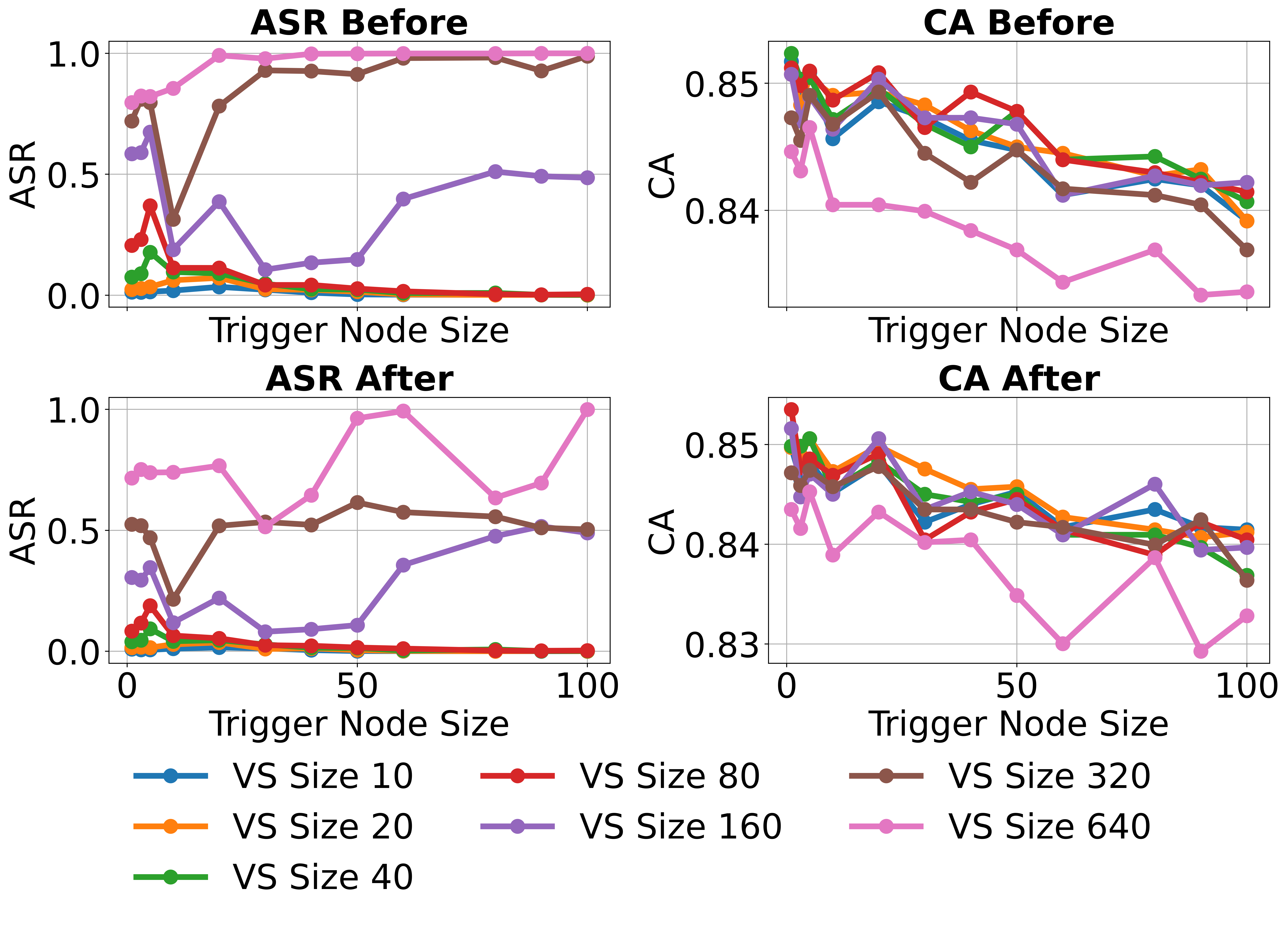}
\caption{ASR and CA of Random Disruption Attack}
\label{fig:pubmed_random_noise_asr_ca}
\end{figure}

As shown in Figure.~\ref{fig:pubmed_random_noise_asr_ca}, the results align with those in Section~\ref{sec:adaptive}. To achieve a high ASR (e.g., 80\%), the attacker must employ either a large victim size (e.g., 640) or a large trigger size (e.g., 50), leading to average CA drops of at least 2\%. This highlights that if attackers aim to increase both victim and trigger sizes while adding random noise to trigger subgraphs under clean-label settings to evade \system{}, they must insert a substantial number of victim and trigger nodes. Such significant pattern insertions are easily detectable, even through simple degree checks.

\noindent\textbf{Conclusion:} These experiments partially delineate the security boundary of \system{}. Attackers attempting to increase both victim and trigger sizes, along with adding random noise to trigger subgraphs under clean-label attack settings, can bypass \system{}. However, doing so requires a substantial number of victim nodes and trigger nodes, resulting in a significant pattern insertion detectable through even simple degree checks.

 \begin{table}[t]
\caption{Detection results of Asymmetric Attacks}
\centering
\resizebox{\linewidth}{!}{
\begin{tabular}{c|c|cc|cc|cc|cc} 
\hline
\multirow{3}{*}{$p$} & \multirow{3}{*}{$q$} & \multicolumn{2}{c|}{\multirow{2}{*}{No Defense}} & \multicolumn{2}{c|}{\multirow{2}{*}{Defense Performance}} & \multicolumn{4}{c}{Victim and Trigger Node Detection}                  \\ 
\cline{7-10}
                   &                    & \multicolumn{2}{c|}{}                            & \multicolumn{2}{c|}{}                                     & \multicolumn{2}{c|}{Victim Nodes} & \multicolumn{2}{c}{Trigger Nodes}  \\ 
\cline{3-6}
                   &                    & ASR(\%) & (CA\%)                                 & ASR(\%) & (CA\%)                                          & P(\%) & R(\%)        & P(\%) & R(\%)         \\ 
\hline
0.5                & 10                 & 86.05   & 82.22                                  & 0       & 82.59                                           & 100           & 95                & 100           & 95.76              \\ 
\hline
0.5                & 3                  & 91.24  & 81.11                                  & 0       & 82.96                                           & 100             & 95                & 100           & 97.35              \\ 
\hline
0.8                & 10                 & 87.64   & 81.11                                  & 0       & 82.22                                           & 100           & 85                & 88.24         & 86.96              \\ 
\hline
0.8                & 3                  & 88.45   & 80.74                                  & 0       & 81.85                                           & 100           & 100               & 97.37         & 100                \\ 
\hline
0.9                & 10                 & 72.91   & 82.59                                  & 0       & 82.59                                           & 100           & 65                & 100           & 66.67              \\
\hline
\hline
\multicolumn{2}{c|}{Average} &85.26	&81.55&	0	&82.44	&100	&88	&97.12&	89.34\\
\hline
\end{tabular}
}
\label{tab:asymmetric}
\end{table}

\subsection{Asymmetric Trigger Insertion Strategy}
\label{sec:asymmetric}
Recent studies in the image backdoor domain~\cite{qi2022revisiting, xia2023waveattackasymmetricfrequencyobfuscationbased} have suggested that \textit{asymmetric triggers} can effectively reduce the latent separability of backdoor attacks, thereby evading defenses. Inspired by this insight, we propose an \textit {Asymmetric Graph Backdoor} that leverages asymmetry to make triggers in the training set less detectable.

Our strategy focuses on two primary aspects to reduce trigger visibility: trigger sizes and trigger features.  
1. \textit{Trigger sizes.} For each added trigger node, we assume a removal probability \(p\) during the training phase, which also removes the edges connected to the node. This reduces the effective strength of the trigger at the node level. When \(p=0\), the trigger node remains intact, representing a full-strength poison. Conversely, when \(p=1\), all trigger nodes are removed, resulting in no poisoning effect.  
2. \textit{Trigger Features.} To diminish feature strength, we introduce a random reduction factor \(\alpha\), such that the training feature value is given by \(h_\text{train} = \alpha \times h\). Here, \(\alpha\) is a random variable defined as \(\alpha = \text{Rand}(0, 1)^q\), where \(q\) is a strength factor controlling the magnitude of the reduction. When \(q=0\), \(\alpha = 1\), resulting in full-strength features. As \(q\) increases, \(\alpha\) approaches 0, leading to negligible feature strength.  For experiments, we implement this strategy with the SBA attack. We use a victim size 40 and a trigger size of 5; then we vary $p$ and $q$ to acquire different strengths of the attack.

By tuning \(p\) and \(q\), we can precisely control the strength of the training triggers in terms of both nodes and features. During the testing stage, trigger nodes and features are set to full strength to ensure attack success.

\noindent \textbf{Conclusion.}  The results are shown in Table~\ref{tab:asymmetric}. Based on Table~\ref{tab:asymmetric}, \system{} remains robust against the asymmetric trigger insertion strategy, achieving at least 97\% precision and 88\% recall for poisoned nodes, and a 0\% ASR while maintaining higher CA than the no-defense baseline. These findings underscore the effectiveness and resilience of our \system{} in defending against diverse backdoor attacks.

\subsection{Multi-target Backdoor Attacks}\label{apd:multi-target}

This setting stress-tests the downstream score-to-node localization stage rather than the core two-view scoring mechanism. In standard single-target attacks, poisoned training nodes usually share one target label, so the single-target refinement in Appendix~\ref{apd:score_to_node_details} can exploit label consistency along the sorted score list. Multi-target attacks~\cite{multi-target} instead inject simultaneous backdoors with different target labels, so high-deviation victim seeds may be spread across several observed label groups. This does not change the internal-correlation or external-influence scores: trigger nodes remain structurally and functionally abnormal regardless of how many target labels are used. It only changes how the high-deviation seeds are converted into victim groups. Following Appendix~\ref{apd:score_to_node_details}, \system{} first applies the label-free valley cutoff, then retains multiple supported label groups and performs trigger recovery and unlearning group-wise, without enforcing the single-target stopping rule.

This same reasoning also covers ALL2ALL-style attacks~\cite{lu2025anywheredoormultitargetbackdoorattacks}
. Such attacks instantiate multiple simultaneous trigger-target mappings, but their localization challenge is still to recover several suspicious target groups from the same score distribution. Since \system{} groups high-deviation seeds after the label-free cutoff, it does not require knowing the number of target labels or the source-to-target mapping.

\noindent\textbf{Results.}
Table~\ref{tab:multi_target} reports the defense results under multi-target attacks. DPGBA is reduced from 90.4\% ASR to 0.0\%, and UGBA from 96.0\% ASR to 2.2\%, while CA is maintained after defense. The poisoned-node localization precision and recall are both 92.5\%. As shown in Figure.~\ref{fig:multi_target}, \system{} assigns high deviation scores to trigger nodes under both internal correlation and external influence across target labels, confirming that the core two-view detector is agnostic to the number of target labels; only the lightweight score-to-node conversion needs to be read group-wise.

\begin{table}[htbp]
\centering
\caption{Defense and localization results under multi-target attacks.}
\label{tab:multi_target}
\small
\setlength{\tabcolsep}{4pt}
\begin{tabular}{lcccccc}
\toprule
Attack & ASR Before & ASR After & CA Before & CA After & Victim Nodes. P/R & Trigger Nodes. P/R \\
\midrule
DPGBA & 90.4\% & 0.0\% & 81.8\% & 82.9\% & 92.1\%/100.0\% & 92.5\%/94.5\% \\
UGBA  & 96.0\% & 2.2\% & 81.8\% & 82.9\% & 94.2\%/96.0\% & 93.3\%/92.5\% \\
\bottomrule
\end{tabular}
\vspace{-5pt}
\end{table}

\begin{figure}[htbp] 
\centering
\begin{subfigure}[b]{0.48\linewidth} 
\centering
\includegraphics[width=\textwidth]{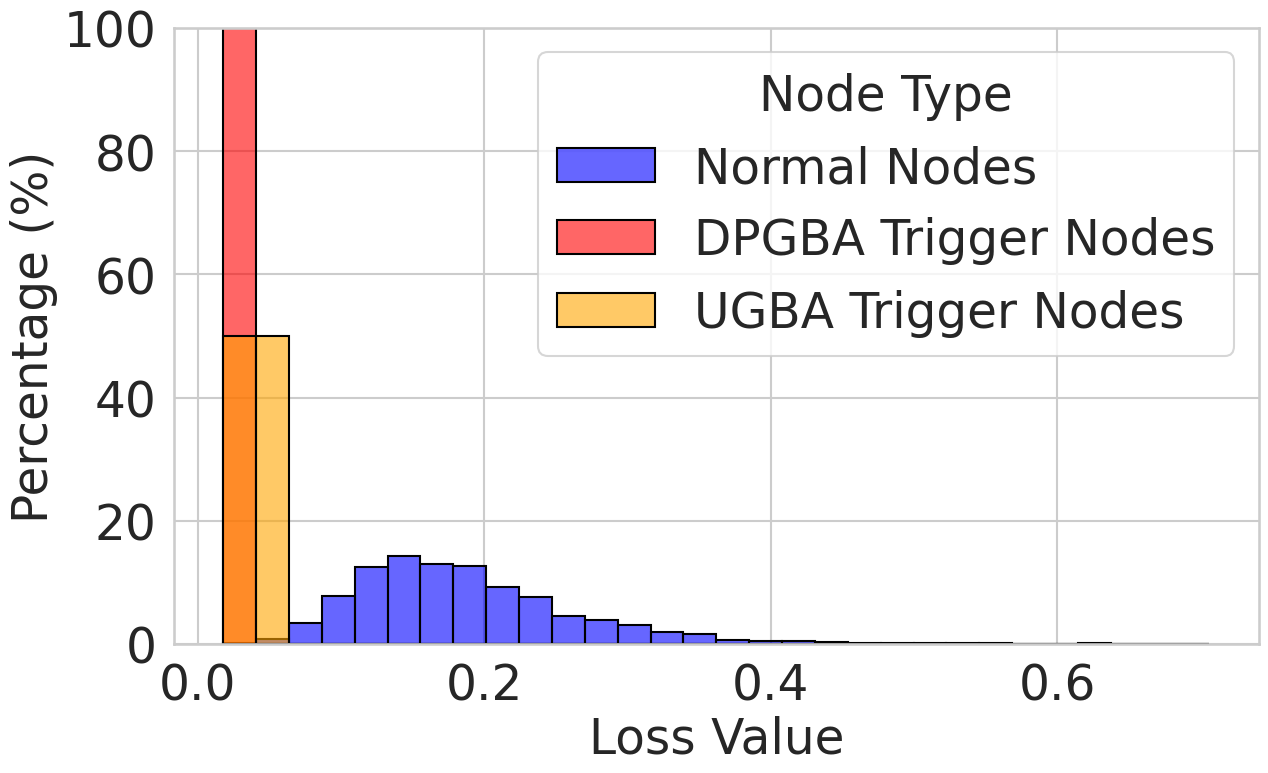}
\caption{Internal Correlation}
\label{fig:internal_target}
\end{subfigure}
\hfill 
\begin{subfigure}[b]{0.48\linewidth}
\centering
\includegraphics[width=\textwidth]{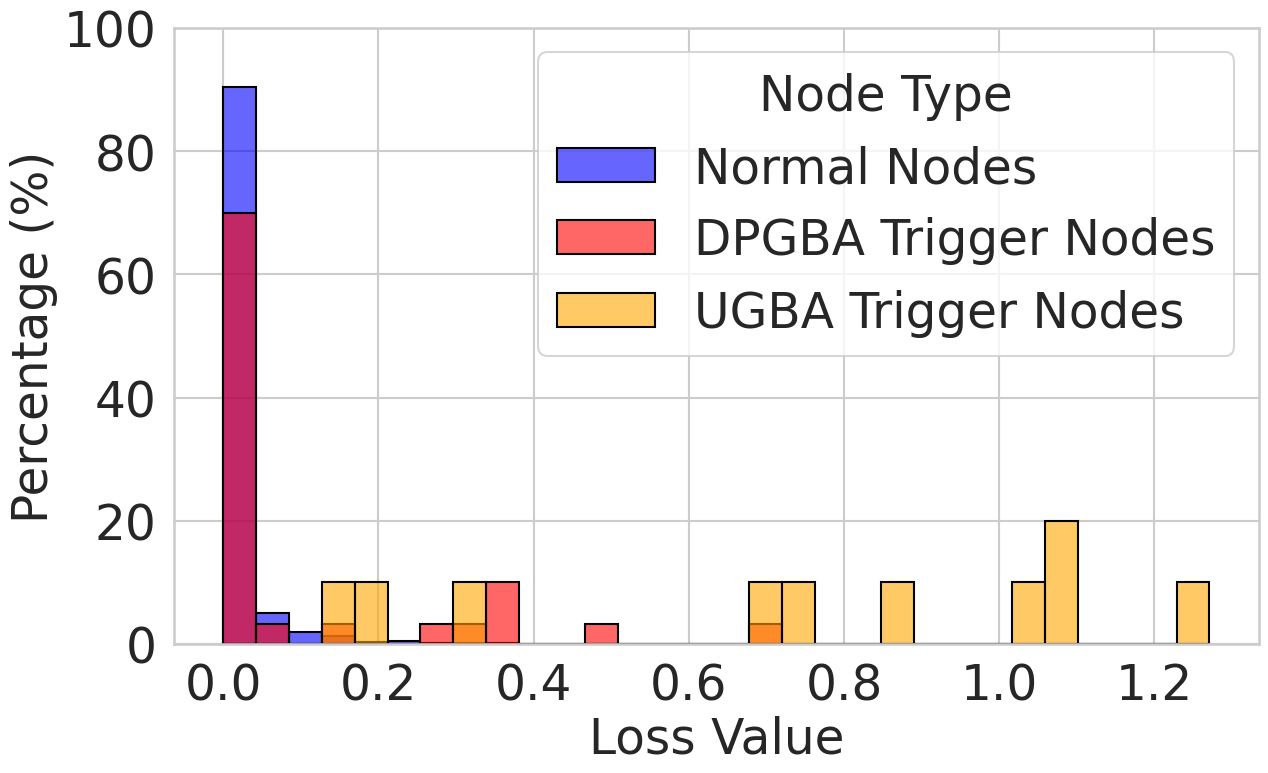} %

\caption{External Influence} 
\label{fig:external_target}
\end{subfigure}
\caption{Two deviation scores under multi-target attacks.}
\label{fig:multi_target}
\end{figure}

\subsection{Trade-offs for the Attacker}
\label{sec:trade-offs-adaptive-attack}

We further examine the trade-offs faced by adaptive attackers under defense-aware settings. The evaluated attacks directly target the main components of \system{}, including internal-correlation scoring, external-influence scoring, and score-to-node localization. While these experiments do not exhaust all possible adaptive strategies, they cover natural ways to weaken the signals used by our defense.

Our results indicate that evading \system{} requires a clear trade-off between attack effectiveness, stealthiness, and clean accuracy. When the attacker maintains a clean accuracy comparable to the undefended model, the post-defense ASR remains limited, reaching at most 18.1\% in our evaluated settings. Achieving substantially higher ASR requires increasing the attack scale, such as using larger trigger or victim sets, which makes the injected patterns more conspicuous and incurs higher attack cost. In the random-disruption setting, high-ASR attacks require at least 40 injected trigger nodes per victim and cause a clean-accuracy drop of more than 10\%. These results suggest that \system{} substantially raises the cost of adaptive evasion under the considered threat models.

\section{Discussion}
\label{apd:discussion}

\noindent\textbf{Runtime cost and complexity optimization.}
Theoretically, our approach utilizes the Deep Graph Library (DGL)~\cite{wang2019dgl} for GNN training, which performs message passing on an edge-by-edge basis. For a GCN layer with hidden dimension \(H\), \(N\) nodes, and \(E\) edges, the per-layer time complexity is \(\mathcal{O}(N H^2 + E H)\). During our initial defense implementation, node masking necessitates inference for each candidate node, resulting in an overall complexity of \(\mathcal{O}(N \times (N H^2 + E H))\). To enhance efficiency, we optimized the computation by introducing \system{-Faster}. Firstly, this method updates \emph{only} the masked nodes and their neighbors, circumventing the recalculation of the entire graph. Consequently, the complexity for processing a single node at the \(L\)-th layer is reduced from \(\mathcal{O}(N H^2 + E H)\) to \(\mathcal{O}(N_s H^2 + E_s H)\), where \(N_s\) and \(E_s\) denote the number of nodes and edges, respectively, within the \(L\)-hop subgraph centered on the masked node. This optimization significantly reduces the total computational cost (e.g., reduces detection time to 5\% of its original on OGB-arxiv). Detailed runtime costs are provided in Table~\ref{tab:runtime}.

Secondly, by focusing on suspicious nodes (i.e., training nodes and neighbors within the model's depth layer), \system-Faster can further limit the analysis scope, enhancing efficiency (e.g.,~25\% fewer nodes processed on OGB-arxiv). For scenarios where timely updates are needed, trigger detection can be performed only on updated training-related nodes, avoiding full graph reprocessing.

Nevertheless, additional methods, such as multi-process inference and sampling methods, could further accelerate our system. We leave these for future work.

\begin{table}
\caption{Runtime Cost}
\centering
\resizebox{0.75\linewidth}{!}{
\begin{tabular}{c|c|c|c} 
\hline
Dataset   & Score Calculation & Node Selection & Relearning  \\ 
\hline
Cora  & 43,21s & 0.04s   & 1.55s\\ 
\hline
PubMed& 700.28s& 0.80s& 2.00s   \\ 
\hline
OGB-arxiv &  1560.11s & 40.02s   &  15.83s \\
\hline
\end{tabular}
}
\label{tab:runtime}
\end{table}

\begin{table}
\caption{Peak GPU memory under the standard configuration.}
\centering
\begin{tabular}{c|c|c} 
\hline
Dataset & \#Nodes & Peak GPU Memory \\ 
\hline
Cora & 2,708 & 0.83 GB \\ 
\hline
PubMed & 19,717 & 4.47 GB \\ 
\hline
OGB-arxiv & 169,343 & 10.7 GB \\
\hline
\end{tabular}
\label{tab:gpu_memory}
\end{table}

\noindent\textbf{Memory overhead.}

We further report peak GPU memory under our standard configuration in Table~\ref{tab:gpu_memory}, which represents the minimum GPU requirement of \system{}. Memory scales sub-linearly with graph size: a \(62.5\times\) increase in node count from Cora to OGB-arxiv yields only a \(12.9\times\) increase in peak memory, remaining within the capacity of a single GPU.

\noindent\textbf{Application scope.}
Although \system{} is designed for defending node-level graph backdoor attacks, its internal correlation—which captures relationships among trigger nodes within a subgraph—is not limited to node-level tasks. Even for graph-level backdoor attacks (e.g., SBA and GTA can also be launched to graph-level attacks) that replace a subgraph of the original clean graph with a trigger subgraph, \system{} can still quantify the internal correlations among the injected trigger nodes. Meanwhile, the external influence could be adapted to measure the impact of each subgraph rather than individual nodes. Consequently, \system{} has the potential to defend against graph-level backdoor attacks, and we plan to explore this in future work.

\section{Impact Statements}
\label{apd:impactstatement}

This paper presents \system, a new defense paradigm against backdoor attacks on Graph Neural Networks (GNNs). By jointly auditing two complementary signals—\emph{internal correlation} within potential trigger subgraphs and \emph{external influence} of suspicious nodes on neighborhood predictions—our method aims to localize and neutralize backdoor behavior via filtered training and targeted unlearning. If adopted in practice, \system{} could improve the safe deployment of GNNs in security- and safety-relevant applications (e.g., fraud detection, recommendation, and provenance/security graphs), where stealthy poisoning can cause systematic misclassification while remaining hard to diagnose.

However, \system{} also has limitations and potential negative impacts. As a defense study, our analysis and adaptive-attack discussion may inadvertently inform adversaries about which trigger designs are harder to audit, accelerating evasion research. In addition, because our auditing relies on correlation/influence deviations, naturally high-influence nodes (e.g., hubs, minority-pattern nodes, or distribution-shifted regions) could be flagged, leading to false positives and possible unfair performance degradation if unlearning is applied to benign nodes. Furthermore, as demonstrated in our analysis of adaptive clean-label attacks (Section~\ref{sec:ablation_study}), extremely stealthy settings such as a \emph{one-node clean-label} trigger can weaken the auditing signals and partially bypass \system, raising the Attack Success Rate (e.g., up to 18.1\%). While our work focuses on defense, we encourage future research to investigate how to evaluate and improve robustness under stronger adaptive clean-label strategies, to ensure equitable and reliable security across diverse threat landscapes.

\section{Theoretical Framework}\label{sec:theoretical_framework}
Let $\mathcal{G}=(\mathcal{V},\mathcal{E})$ be a graph with features $\mathcal{X}$, and $f$ be a GNN model with $r$ layers that computes a logits $H_v(\mathcal{G})\in\mathbb{R}^C$ for node $v$ in graph $\mathcal{G}=(\mathcal{G}, \mathcal{X})$, where $C$ is the number of classes. A backdoor attack inserts a trigger $\mathcal{T}$—a set of structural or feature perturbations—into a clean graph $\mathcal{G}_C$ to create $\mathcal{G}_T$, aiming to force $v$ into a target class. Our goal is to show this influence can be decomposed and detected.

\subsection{Decomposition of Anomalous Influence}
\label{sec:decomposition}

We define the influence of any subgraph components $S$ on $v$'s \textit{predicted logits}, where $S$ within $v$'s $r$-hop neighborhood. We use an interventional operation $\mathcal{G}_C\oplus\mathsf{do}(S)$, which represents intervening on $v$'s $r$-hop neighborhood by applying the perturbation $S$ (adds $S$ if absent; removes $S$ if present). The resulting change is:
\[
\Delta_H(S)\;=\;H_v\!\big(\mathcal{G}_C\oplus\mathsf{do}(S)\big)\;-\;H_v(\mathcal{G}_C)
\]
By this, $\Delta_H(\emptyset)=\mathbf{0}$, and the total trigger influence is $\Delta_H(\mathcal{T})$.

Our theoretical goal is to prove that the trigger's influence cannot be simultaneously negligible in both its individual components ($EI$) and its synergistic structure ($IC$).

We build on the established assumption \cite{RIGBD} that a high-ASR attack tend to induce a "large prediction variance," meaning its influence is non-trivial. We formalize this minimal assumption as our premise.

\begin{assumption}[Non-Trivial Influence]
\label{premise:sasr}
A high-ASR trigger $\mathcal{T}$ typically induces a perturbation $\Delta_H(\mathcal{T})$ whose magnitude is non-trivial, i.e., it is bounded below by a detectable, non-trivial constant $\Theta_{\text{detectable}} \gg 0$:
\begin{equation}
\big\|\Delta_H(\mathcal{T})\big\|_2\;\geq\;\Theta_{\text{detectable}}
\label{eq:premise-sasr}
\end{equation}
\end{assumption}

Then we attribute the total influence $\Delta_H(\mathcal{T})$ using Harsanyi dividends \cite{harsanyi1982simplified}, an application of Möbius inversion to set functions. We model $\Delta_H(\mathcal{T})$ as a set function $v:2^{\mathcal{T}}\!\to\mathbb{R}^C$. The total influence uniquely decomposes into dividends $\Phi_S = \Delta_H(S) - \sum_{R \subset S} \Phi_R$, such that $\Delta_H(\mathcal{T}) = \sum_{\emptyset\neq S\subseteq\mathcal{T}}\!\Phi_S$.

We attribute this sum to two disjoint sources:
\textbf{1. External Influence (EI):} The sum of first-order dividends. This captures the \emph{additive} effect of individual trigger components.
\textbf{2. Internal Correlation (IC):} The sum of all higher-order ($|S|\ge 2$) dividends. This captures the \emph{non-additive, synergistic} effects emerging from the components' structure.
\[
EI(\mathcal{T}) = \sum_{t\in\mathcal{T}} \Phi_{\{t\}}, \quad
IC(\mathcal{T}) = \sum_{\substack{S\subseteq\mathcal{T} \\ |S|\ge 2}} \Phi_S
\]
By definition, these two components sum to the total perturbation:
\begin{equation}
\Delta_H(\mathcal{T})\;=\;EI(\mathcal{T})\;+\;IC(\mathcal{T})
\label{eq:sum}
\end{equation}

This decomposition leads to a key theoretical guarantee.
\begin{theorem}[Influence Dichotomy]
\label{thm:dichotomy}
If a trigger $\mathcal{T}$ satisfies the Non-Trivial Influence premise (Eq.~\ref{eq:premise-sasr}), then its influence *must* be present in at least one of its components at a non-trivial level:
\[
\big\|EI(\mathcal{T})\big\|_2\;\geq\;\tfrac{1}{2}\Theta_{\text{detectable}}
\quad\text{or}\quad
\big\|IC(\mathcal{T})\big\|_2\;\geq\;\tfrac{1}{2}\Theta_{\text{detectable}}
\]
\end{theorem}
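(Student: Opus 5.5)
The plan is a contrapositive argument that rests on the additive decomposition in Eq.~\ref{eq:sum} and the triangle inequality in $\mathbb{R}^C$. First, I will check that Eq.~\ref{eq:sum} actually holds for the Harsanyi dividends as defined. By construction, $\Phi_S=\Delta_H(S)-\sum_{R\subsetneq S}\Phi_R$, so Möbius inversion on the Boolean lattice $2^{\mathcal{T}}$ gives $\Delta_H(S)=\sum_{R\subseteq S}\Phi_R$ for every $S\subseteq\mathcal{T}$. This holds componentwise in $\mathbb{R}^C$, and the induction over $|S|$ is immediate from the recursive definition. Since $\Phi_\emptyset=\Delta_H(\emptyset)=\mathbf{0}$, taking $S=\mathcal{T}$ yields $\Delta_H(\mathcal{T})=\sum_{\emptyset\neq S\subseteq\mathcal{T}}\Phi_S$. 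I then split this sum by the cardinality of $S$ into the first-order part ($|S|=1$) and the higher-order part ($|S|\ge 2$). These index sets are disjoint and together cover every nonempty subset, which gives $\Delta_H(\mathcal{T})=EI(\mathcal{T})+IC(\mathcal{T})$ exactly.

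Second, I argue by contradiction. Suppose both $\|EI(\mathcal{T})\|_2<\tfrac12\Theta_{\text{detectable}}$ and $\|IC(\mathcal{T})\|_2<\tfrac12\Theta_{\text{detectable}}$. The triangle inequality for $\|\cdot\|_2$ then gives
\[
\|\Delta_H(\mathcal{T})\|_2\le\|EI(\mathcal{T})\|_2+\|IC(\mathcal{T})\|_2<\Theta_{\text{detectable}},
\]
which contradicts Assumption~\ref{premise:sasr} (Eq.~\ref{eq:premise-sasr}). Hence at least one of the two norms is at least $\tfrac12\Theta_{\text{detectable}}$. Equivalently, $\max\{\|EI\|_2,\|IC\|_2\}\ge\tfrac12(\|EI\|_2+\|IC\|_2)\ge\tfrac12\|\Delta_H(\mathcal{T})\|_2$.

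The only step that needs care is the well-definedness of the set function $S\mapsto\Delta_H(S)$ on subsets of $\mathcal{T}$. The intervention $\mathsf{do}(S)$ must be meaningful for every sub-collection of trigger components: adding a subset of trigger nodes or edges must yield a valid graph, for instance by dropping any edge whose endpoint is absent. Given that, the decomposition is purely algebraic and needs no property of $f$ beyond being a function of the graph. I would state this convention explicitly and note that the bound is independent of the GNN architecture and of $|\mathcal{T}|$. The constant $\tfrac12$ is tight for the triangle inequality; it is attained when $EI$ and $IC$ are parallel with equal norm.
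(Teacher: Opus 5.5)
Your proposal is correct and follows the paper's proof: assume both $\|EI(\mathcal{T})\|_2$ and $\|IC(\mathcal{T})\|_2$ are below $\tfrac12\Theta_{\text{detectable}}$, apply the triangle inequality to $\Delta_H(\mathcal{T})=EI(\mathcal{T})+IC(\mathcal{T})$, and contradict the premise. Your checks that the Möbius-inversion identity holds and that $\mathsf{do}(S)$ is well defined on sub-collections of $\mathcal{T}$ are sound additions; the paper takes both for granted.
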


\begin{proof}
Assume for contradiction that both $\|EI(\mathcal{T})\|_2 < \tfrac{1}{2}\Theta_{\text{detectable}}$ and $\|IC(\mathcal{T})\|_2 < \tfrac{1}{2}\Theta_{\text{detectable}}$.
By the triangle inequality on the decomposition (Eq.~\ref{eq:sum}):
\begin{align*}
\big\|\Delta_H(\mathcal{T})\big\|_2 &= \big\|EI(\mathcal{T}) + IC(\mathcal{T})\big\|_2 \\
&\le \big\|EI(\mathcal{T})\big\|_2 + \big\|IC(\mathcal{T})\big\|_2
\end{align*}
Substituting the assumptions yields:
\[
\big\|\Delta_H(\mathcal{T})\big\|_2 \;<\; \tfrac{1}{2}\Theta_{\text{detectable}} + \tfrac{1}{2}\Theta_{\text{detectable}} \;=\; \Theta_{\text{detectable}}
\]
This result, $\big\|\Delta_H(\mathcal{T})\big\|_2 < \Theta_{\text{detectable}}$, directly contradicts Premise~\ref{premise:sasr}. Thus, the assumption is false.
\end{proof}

\noindent\textbf{Implication:} Theorem~\ref{thm:dichotomy} provides the theoretical justification for our defense. It proves that the non-trivial influence of a trigger *cannot* simultaneously vanish from both its individual ($EI$) and synergistic ($IC$) components. This prevents a simultaneous failure of our two-component detection strategy and justifies our methodology of measuring both.

\subsection{From Theoretical Components to Practical Proxy Scores}
\label{sec:soundness-of-proxies}

Our theoretical framework, culminating in Theorem~\ref{thm:dichotomy}, shows that a high-ASR trigger $\mathcal{T}$ must induce non-negligible influence through at least one of two channels: the first-order external-influence component $EI(\mathcal{T})$ or the higher-order internal-correlation component $IC(\mathcal{T})$.
Directly computing these Harsanyi-based components is computationally intractable, as it would require enumerating exponentially many trigger subsets.

\system{} therefore does not attempt to recover the exact theoretical quantities.
Instead, it employs two efficient node-level proxy scores, $\mathcal{S}_{ext}$ and $\mathcal{S}_{int}$, to empirically audit the two corresponding influence channels.
Specifically, $\mathcal{S}_{ext}$ probes whether a node has unusually large individual influence on its local neighborhood, while $\mathcal{S}_{int}$ probes whether a node is embedded in a strongly predictable local pattern.
Below, we explain how these two practical scores are aligned with the theoretical components.

\subsubsection{$\mathcal{S}_{ext}$ as a Proxy for External Influence}

The theoretical $EI$ component captures the first-order, individual contribution of trigger nodes to the victim's prediction.
A trigger that is $EI$-dominant must therefore rely on one or a few nodes whose individual removal causes a non-negligible change in the local message-passing computation.

\noindent \textbf{Definition.}
$\mathcal{S}_{ext}$ is computed by masking a single node $v$ and measuring the resulting shift in the logit distributions of its 1-hop neighbors using KL and JS divergences.

\noindent \textbf{Proxy intuition.}
This mask-and-measure intervention provides an efficient empirical estimate of how much node $v$ individually contributes to nearby predictions.
Although this score is not identical to the Harsanyi first-order dividend, it targets the same failure mode: if an attack concentrates backdoor influence on a small number of trigger nodes, masking such a node should induce an unusually large prediction shift in its neighborhood.
Thus, $\mathcal{S}_{ext}$ operationalizes the external-influence channel in a scalable node-level form.

\noindent \textbf{Practical caveat.}
Masking node $v$ may also perturb higher-order interactions involving $v$.
However, this does not undermine its role as an external-influence proxy: for $EI$-dominant attacks, the dominant observable effect is precisely the loss of the node's own features and messages.
Therefore, unusually high $\mathcal{S}_{ext}$ values provide evidence that the node carries disproportionate individual influence.

\vspace{-5pt}
\subsubsection{$\mathcal{S}_{int}$ as a Proxy for Internal Correlation}

The theoretical $IC$ component captures higher-order, synergistic influence arising from the joint pattern of trigger components.
An $IC$-dominant attack spreads the backdoor effect across multiple trigger nodes, so the attack relies less on any single highly influential node and more on the collective structure or feature pattern formed by the trigger.

\noindent \textbf{Definition.}
$\mathcal{S}_{int}$ is computed as the inverse of the masked-node reconstruction loss from a masked graph autoencoder:
nodes that are easier to reconstruct from their surrounding context receive higher $\mathcal{S}_{int}$ scores.

\noindent \textbf{Proxy intuition.}
A masked graph autoencoder is trained to capture statistical dependencies and neighborhood predictability.
If a trigger subgraph forms a tightly coupled and distinctive pattern, then masking one trigger node still leaves strong contextual evidence from the remaining trigger-related nodes.
As a result, the masked trigger node tends to have a low reconstruction loss.
By taking the inverse of this loss in Eq.~\ref{euq:sint}, \system{} assigns a high $\mathcal{S}_{int}$ score to nodes embedded in such strongly correlated local patterns.

\noindent \textbf{Connection to theory.}
$\mathcal{S}_{int}$ does not explicitly compute the higher-order Harsanyi dividends in $IC(\mathcal{T})$.
Rather, it serves as a scalable proxy for the type of local dependency that an $IC$-dominant trigger must create.
Together with $\mathcal{S}_{ext}$, this gives \system{} two complementary probes: one for concentrated individual influence and one for distributed pattern-level influence.

\newpage
\section*{NeurIPS Paper Checklist}

\begin{enumerate}

    \item {\bf Claims}
    \item[] Question: Do the main claims made in the abstract and introduction accurately reflect the paper's contributions and scope?
    \item[] Answer: \answerYes{}
    \item[] Justification: The abstract and introduction state the defense problem, the dual internal-correlation and external-influence design, and the main empirical claims. The reported claims are tied to the experiments across three benchmark datasets, multiple backdoor attacks, and adaptive-attack analyses.
    \item[] Guidelines:
    \begin{itemize}
        \item The answer \answerNA{} means that the abstract and introduction do not include the claims made in the paper.
        \item The abstract and/or introduction should clearly state the claims made, including the contributions made in the paper and important assumptions and limitations. A \answerNo{} or \answerNA{} answer to this question will not be perceived well by the reviewers. 
        \item The claims made should match theoretical and experimental results, and reflect how much the results can be expected to generalize to other settings. 
        \item It is fine to include aspirational goals as motivation as long as it is clear that these goals are not attained by the paper. 
    \end{itemize}

\item {\bf Limitations}
    \item[] Question: Does the paper discuss the limitations of the work performed by the authors?
    \item[] Answer: \answerYes{}
    \item[] Justification: The Appendix~\ref{apd:impactstatement} discusses limitations and potential negative impacts, including adaptive clean-label settings, possible false positives on naturally high-influence nodes, and dual-use concerns. The Appendix~\ref{apd:additional_security_analysis} further analyzes failure modes and attacker trade-offs under adaptive attacks.

    \item[] Guidelines:
    \begin{itemize}
        \item The answer \answerNA{} means that the paper has no limitation while the answer \answerNo{} means that the paper has limitations, but those are not discussed in the paper. 
        \item The authors are encouraged to create a separate ``Limitations'' section in their paper.
        \item The paper should point out any strong assumptions and how robust the results are to violations of these assumptions (e.g., independence assumptions, noiseless settings, model well-specification, asymptotic approximations only holding locally). The authors should reflect on how these assumptions might be violated in practice and what the implications would be.
        \item The authors should reflect on the scope of the claims made, e.g., if the approach was only tested on a few datasets or with a few runs. In general, empirical results often depend on implicit assumptions, which should be articulated.
        \item The authors should reflect on the factors that influence the performance of the approach. For example, a facial recognition algorithm may perform poorly when image resolution is low or images are taken in low lighting. Or a speech-to-text system might not be used reliably to provide closed captions for online lectures because it fails to handle technical jargon.
        \item The authors should discuss the computational efficiency of the proposed algorithms and how they scale with dataset size.
        \item If applicable, the authors should discuss possible limitations of their approach to address problems of privacy and fairness.
        \item While the authors might fear that complete honesty about limitations might be used by reviewers as grounds for rejection, a worse outcome might be that reviewers discover limitations that aren't acknowledged in the paper. The authors should use their best judgment and recognize that individual actions in favor of transparency play an important role in developing norms that preserve the integrity of the community. Reviewers will be specifically instructed to not penalize honesty concerning limitations.
    \end{itemize}

\item {\bf Theory assumptions and proofs}
    \item[] Question: For each theoretical result, does the paper provide the full set of assumptions and a complete (and correct) proof?
    \item[] Answer: \answerYes{}
    \item[] Justification: The theoretical discussion~\ref{sec:theoretical_framework} states the threat-model assumptions and formalizes the internal-correlation/external-influence intuition in the methodology and appendix. The appendix provides the supporting arguments and connects them to the empirical security analyses.
    \item[] Guidelines:
    \begin{itemize}
        \item The answer \answerNA{} means that the paper does not include theoretical results. 
        \item All the theorems, formulas, and proofs in the paper should be numbered and cross-referenced.
        \item All assumptions should be clearly stated or referenced in the statement of any theorems.
        \item The proofs can either appear in the main paper or the supplemental material, but if they appear in the supplemental material, the authors are encouraged to provide a short proof sketch to provide intuition. 
        \item Inversely, any informal proof provided in the core of the paper should be complemented by formal proofs provided in appendix or supplemental material.
        \item Theorems and Lemmas that the proof relies upon should be properly referenced. 
    \end{itemize}

	    \item {\bf Experimental result reproducibility}
    \item[] Question: Does the paper fully disclose all the information needed to reproduce the main experimental results of the paper to the extent that it affects the main claims and/or conclusions of the paper (regardless of whether the code and data are provided or not)?
    \item[] Answer: \answerYes{}
    \item[] Justification: The paper describes datasets, attacks, baselines, train/test splits, metrics, model architectures, and key hyperparameters in the evaluation section and appendix. It also reports attack settings and adaptive-attack configurations needed to interpret and reproduce the main claims.
    \item[] Guidelines:
    \begin{itemize}
        \item The answer \answerNA{} means that the paper does not include experiments.
        \item If the paper includes experiments, a \answerNo{} answer to this question will not be perceived well by the reviewers: Making the paper reproducible is important, regardless of whether the code and data are provided or not.
        \item If the contribution is a dataset and\slash or model, the authors should describe the steps taken to make their results reproducible or verifiable. 
        \item Depending on the contribution, reproducibility can be accomplished in various ways. For example, if the contribution is a novel architecture, describing the architecture fully might suffice, or if the contribution is a specific model and empirical evaluation, it may be necessary to either make it possible for others to replicate the model with the same dataset, or provide access to the model. In general. releasing code and data is often one good way to accomplish this, but reproducibility can also be provided via detailed instructions for how to replicate the results, access to a hosted model (e.g., in the case of a large language model), releasing of a model checkpoint, or other means that are appropriate to the research performed.
        \item While NeurIPS does not require releasing code, the conference does require all submissions to provide some reasonable avenue for reproducibility, which may depend on the nature of the contribution. For example
        \begin{enumerate}
            \item If the contribution is primarily a new algorithm, the paper should make it clear how to reproduce that algorithm.
            \item If the contribution is primarily a new model architecture, the paper should describe the architecture clearly and fully.
            \item If the contribution is a new model (e.g., a large language model), then there should either be a way to access this model for reproducing the results or a way to reproduce the model (e.g., with an open-source dataset or instructions for how to construct the dataset).
            \item We recognize that reproducibility may be tricky in some cases, in which case authors are welcome to describe the particular way they provide for reproducibility. In the case of closed-source models, it may be that access to the model is limited in some way (e.g., to registered users), but it should be possible for other researchers to have some path to reproducing or verifying the results.
        \end{enumerate}
    \end{itemize}

\item {\bf Open access to data and code}
    \item[] Question: Does the paper provide open access to the data and code, with sufficient instructions to faithfully reproduce the main experimental results, as described in supplemental material?
    \item[] Answer: \answerYes{}
    \item[] Justification: The paper provides an anonymized code and data package. The package, together with the public benchmark datasets and the methodological and experimental details in the paper and appendix, provides the information needed to reproduce the main experimental results.
    \item[] Guidelines:
    \begin{itemize}
        \item The answer \answerNA{} means that paper does not include experiments requiring code.
        \item Please see the NeurIPS code and data submission guidelines (\url{https://neurips.cc/public/guides/CodeSubmissionPolicy}) for more details.
        \item While we encourage the release of code and data, we understand that this might not be possible, so \answerNo{} is an acceptable answer. Papers cannot be rejected simply for not including code, unless this is central to the contribution (e.g., for a new open-source benchmark).
        \item The instructions should contain the exact command and environment needed to run to reproduce the results. See the NeurIPS code and data submission guidelines (\url{https://neurips.cc/public/guides/CodeSubmissionPolicy}) for more details.
        \item The authors should provide instructions on data access and preparation, including how to access the raw data, preprocessed data, intermediate data, and generated data, etc.
        \item The authors should provide scripts to reproduce all experimental results for the new proposed method and baselines. If only a subset of experiments are reproducible, they should state which ones are omitted from the script and why.
        \item At submission time, to preserve anonymity, the authors should release anonymized versions (if applicable).
        \item Providing as much information as possible in supplemental material (appended to the paper) is recommended, but including URLs to data and code is permitted.
    \end{itemize}

\item {\bf Experimental setting/details}
    \item[] Question: Does the paper specify all the training and test details (e.g., data splits, hyperparameters, how they were chosen, type of optimizer) necessary to understand the results?
    \item[] Answer: \answerYes{}
    \item[] Justification: The evaluation section specifies the 80/20 graph split, ASR/CA evaluation protocol, five-run averaging, datasets, attacks, baselines, and key hyperparameters such as architecture, learning rate, mask rate, and sensitivity parameter. Additional attack, defense, ablation, and adaptive settings are provided in the appendix.
    \item[] Guidelines:
    \begin{itemize}
        \item The answer \answerNA{} means that the paper does not include experiments.
        \item The experimental setting should be presented in the core of the paper to a level of detail that is necessary to appreciate the results and make sense of them.
        \item The full details can be provided either with the code, in appendix, or as supplemental material.
    \end{itemize}

\item {\bf Experiment statistical significance}
    \item[] Question: Does the paper report error bars suitably and correctly defined or other appropriate information about the statistical significance of the experiments?
    \item[] Answer: \answerNo{}
    \item[] Justification: The paper reports averaged results over five runs and broad consistency across datasets, attacks, and ablations, but it does not include error bars, confidence intervals, or formal significance tests. This is a reporting limitation rather than an absence of repeated experimental evaluation.
    \item[] Guidelines:
    \begin{itemize}
        \item The answer \answerNA{} means that the paper does not include experiments.
        \item The authors should answer \answerYes{} if the results are accompanied by error bars, confidence intervals, or statistical significance tests, at least for the experiments that support the main claims of the paper.
        \item The factors of variability that the error bars are capturing should be clearly stated (for example, train/test split, initialization, random drawing of some parameter, or overall run with given experimental conditions).
        \item The method for calculating the error bars should be explained (closed form formula, call to a library function, bootstrap, etc.)
        \item The assumptions made should be given (e.g., Normally distributed errors).
        \item It should be clear whether the error bar is the standard deviation or the standard error of the mean.
        \item It is OK to report 1-sigma error bars, but one should state it. The authors should preferably report a 2-sigma error bar than state that they have a 96\% CI, if the hypothesis of Normality of errors is not verified.
        \item For asymmetric distributions, the authors should be careful not to show in tables or figures symmetric error bars that would yield results that are out of range (e.g., negative error rates).
        \item If error bars are reported in tables or plots, the authors should explain in the text how they were calculated and reference the corresponding figures or tables in the text.
    \end{itemize}

\item {\bf Experiments compute resources}
    \item[] Question: For each experiment, does the paper provide sufficient information on the computer resources (type of compute workers, memory, time of execution) needed to reproduce the experiments?
    \item[] Answer: \answerYes{}
    \item[] Justification: The appendix~\ref{apd:discussion} reports runtime costs, peak GPU memory, complexity analysis, and the \system-Faster optimization.

    \item[] Guidelines:
    \begin{itemize}
        \item The answer \answerNA{} means that the paper does not include experiments.
        \item The paper should indicate the type of compute workers CPU or GPU, internal cluster, or cloud provider, including relevant memory and storage.
        \item The paper should provide the amount of compute required for each of the individual experimental runs as well as estimate the total compute. 
        \item The paper should disclose whether the full research project required more compute than the experiments reported in the paper (e.g., preliminary or failed experiments that didn't make it into the paper). 
    \end{itemize}
    
\item {\bf Code of ethics}
    \item[] Question: Does the research conducted in the paper conform, in every respect, with the NeurIPS Code of Ethics \url{https://neurips.cc/public/EthicsGuidelines}?
    \item[] Answer: \answerYes{}
    \item[] Justification: The work is a defensive security study using public graph benchmarks and previously studied attack settings, without collecting private data or involving human subjects. The paper also discusses dual-use risks and limitations associated with adaptive backdoor analysis.

    \item[] Guidelines:
    \begin{itemize}
        \item The answer \answerNA{} means that the authors have not reviewed the NeurIPS Code of Ethics.
        \item If the authors answer \answerNo, they should explain the special circumstances that require a deviation from the Code of Ethics.
        \item The authors should make sure to preserve anonymity (e.g., if there is a special consideration due to laws or regulations in their jurisdiction).
    \end{itemize}

\item {\bf Broader impacts}
    \item[] Question: Does the paper discuss both potential positive societal impacts and negative societal impacts of the work performed?
    \item[] Answer: \answerYes{}
    \item[] Justification: The Appendix~\ref{apd:additional_security_analysis} discusses positive security impacts for GNNs in sensitive applications and explicitly notes negative impacts, including dual-use information that could inform adaptive attackers. It also identifies false-positive and fairness-related risks when benign high-influence nodes are flagged.

    \item[] Guidelines:
    \begin{itemize}
        \item The answer \answerNA{} means that there is no societal impact of the work performed.
        \item If the authors answer \answerNA{} or \answerNo, they should explain why their work has no societal impact or why the paper does not address societal impact.
        \item Examples of negative societal impacts include potential malicious or unintended uses (e.g., disinformation, generating fake profiles, surveillance), fairness considerations (e.g., deployment of technologies that could make decisions that unfairly impact specific groups), privacy considerations, and security considerations.
        \item The conference expects that many papers will be foundational research and not tied to particular applications, let alone deployments. However, if there is a direct path to any negative applications, the authors should point it out. For example, it is legitimate to point out that an improvement in the quality of generative models could be used to generate Deepfakes for disinformation. On the other hand, it is not needed to point out that a generic algorithm for optimizing neural networks could enable people to train models that generate Deepfakes faster.
        \item The authors should consider possible harms that could arise when the technology is being used as intended and functioning correctly, harms that could arise when the technology is being used as intended but gives incorrect results, and harms following from (intentional or unintentional) misuse of the technology.
        \item If there are negative societal impacts, the authors could also discuss possible mitigation strategies (e.g., gated release of models, providing defenses in addition to attacks, mechanisms for monitoring misuse, mechanisms to monitor how a system learns from feedback over time, improving the efficiency and accessibility of ML).
    \end{itemize}
    
\item {\bf Safeguards}
    \item[] Question: Does the paper describe safeguards that have been put in place for responsible release of data or models that have a high risk for misuse (e.g., pre-trained language models, image generators, or scraped datasets)?
    \item[] Answer: \answerNA{}
    \item[] Justification: The paper does not release high-risk pretrained models, generative models, or scraped datasets. The security analysis focuses on defenses and evaluates attacks only in controlled benchmark settings.
    \item[] Guidelines:
    \begin{itemize}
        \item The answer \answerNA{} means that the paper poses no such risks.
        \item Released models that have a high risk for misuse or dual-use should be released with necessary safeguards to allow for controlled use of the model, for example by requiring that users adhere to usage guidelines or restrictions to access the model or implementing safety filters. 
        \item Datasets that have been scraped from the Internet could pose safety risks. The authors should describe how they avoided releasing unsafe images.
        \item We recognize that providing effective safeguards is challenging, and many papers do not require this, but we encourage authors to take this into account and make a best faith effort.
    \end{itemize}

\item {\bf Licenses for existing assets}
    \item[] Question: Are the creators or original owners of assets (e.g., code, data, models), used in the paper, properly credited and are the license and terms of use explicitly mentioned and properly respected?
    \item[] Answer: \answerYes{}
    \item[] Justification: The paper cites the benchmark datasets, DGL, and the original attack and defense papers whose public implementations or protocols are used. The anonymized code and data package provides the relevant asset sources and license/terms information for reused datasets, code dependencies, and implementations, and our use follows the corresponding public licenses and terms.

    \item[] Guidelines:
    \begin{itemize}
        \item The answer \answerNA{} means that the paper does not use existing assets.
        \item The authors should cite the original paper that produced the code package or dataset.
        \item The authors should state which version of the asset is used and, if possible, include a URL.
        \item The name of the license (e.g., CC-BY 4.0) should be included for each asset.
        \item For scraped data from a particular source (e.g., website), the copyright and terms of service of that source should be provided.
        \item If assets are released, the license, copyright information, and terms of use in the package should be provided. For popular datasets, \url{paperswithcode.com/datasets} has curated licenses for some datasets. Their licensing guide can help determine the license of a dataset.
        \item For existing datasets that are re-packaged, both the original license and the license of the derived asset (if it has changed) should be provided.
        \item If this information is not available online, the authors are encouraged to reach out to the asset's creators.
    \end{itemize}

\item {\bf New assets}
    \item[] Question: Are new assets introduced in the paper well documented and is the documentation provided alongside the assets?
    \item[] Answer: \answerNA{}
    \item[] Justification: The paper introduces a defense method rather than a new dataset, pretrained model, or benchmark asset. If code is released later, its documentation and license should be provided with the release package.
    \item[] Guidelines:
    \begin{itemize}
        \item The answer \answerNA{} means that the paper does not release new assets.
        \item Researchers should communicate the details of the dataset\slash code\slash model as part of their submissions via structured templates. This includes details about training, license, limitations, etc. 
        \item The paper should discuss whether and how consent was obtained from people whose asset is used.
        \item At submission time, remember to anonymize your assets (if applicable). You can either create an anonymized URL or include an anonymized zip file.
    \end{itemize}

\item {\bf Crowdsourcing and research with human subjects}
    \item[] Question: For crowdsourcing experiments and research with human subjects, does the paper include the full text of instructions given to participants and screenshots, if applicable, as well as details about compensation (if any)? 
    \item[] Answer: \answerNA{}
    \item[] Justification: The research does not involve crowdsourcing, user studies, or human subjects. All experiments are conducted on public graph benchmark datasets and simulated backdoor attacks.
    \item[] Guidelines:
    \begin{itemize}
        \item The answer \answerNA{} means that the paper does not involve crowdsourcing nor research with human subjects.
        \item Including this information in the supplemental material is fine, but if the main contribution of the paper involves human subjects, then as much detail as possible should be included in the main paper. 
        \item According to the NeurIPS Code of Ethics, workers involved in data collection, curation, or other labor should be paid at least the minimum wage in the country of the data collector. 
    \end{itemize}

\item {\bf Institutional review board (IRB) approvals or equivalent for research with human subjects}
    \item[] Question: Does the paper describe potential risks incurred by study participants, whether such risks were disclosed to the subjects, and whether Institutional Review Board (IRB) approvals (or an equivalent approval/review based on the requirements of your country or institution) were obtained?
    \item[] Answer: \answerNA{}
    \item[] Justification: No human-subject research, crowdsourcing, or participant data collection is conducted. Therefore, IRB approval or equivalent review is not applicable to this work.
    \item[] Guidelines:
    \begin{itemize}
        \item The answer \answerNA{} means that the paper does not involve crowdsourcing nor research with human subjects.
        \item Depending on the country in which research is conducted, IRB approval (or equivalent) may be required for any human subjects research. If you obtained IRB approval, you should clearly state this in the paper. 
        \item We recognize that the procedures for this may vary significantly between institutions and locations, and we expect authors to adhere to the NeurIPS Code of Ethics and the guidelines for their institution. 
        \item For initial submissions, do not include any information that would break anonymity (if applicable), such as the institution conducting the review.
    \end{itemize}

\item {\bf Declaration of LLM usage}
    \item[] Question: Does the paper describe the usage of LLMs if it is an important, original, or non-standard component of the core methods in this research? Note that if the LLM is used only for writing, editing, or formatting purposes and does \emph{not} impact the core methodology, scientific rigor, or originality of the research, declaration is not required.
    \item[] Answer: \answerNA{}
    \item[] Justification: The core method does not use LLMs as a model component, experimental subject, data generator, or scientific contribution. Any ordinary writing or formatting assistance, if used, is outside the scope requiring declaration under this checklist item.
    \item[] Guidelines:
    \begin{itemize}
        \item The answer \answerNA{} means that the core method development in this research does not involve LLMs as any important, original, or non-standard components.
        \item Please refer to our LLM policy in the NeurIPS handbook for what should or should not be described.
    \end{itemize}

\end{enumerate}

\end{document}